\newcommand{\bxi}{\boldsymbol{\xi}}
\providecommand{\lin}[1]{\ensuremath{\left\langle #1 \right\rangle}}
  \providecommand{\R}{\mathbb{R}} 
  \providecommand{\E}{{\mathbb E}}
  \providecommand{\Ef}{{\mathbf E }}
  \providecommand{\0}{\mathbf{0}}
  \renewcommand{\aa}{\mathbf{a}}
  \providecommand{\bb}{\mathbf{b}}
  \providecommand{\ee}{\mathbf{e}}
  \let\ggg\gg
  \renewcommand{\gg}{\mathbf{g}}
  \providecommand{\hh}{\mathbf{h}}
  \providecommand{\xx}{\mathbf{x}}
  \providecommand{\yy}{\mathbf{y}}
  \providecommand{\mV}{\mathbf{V}}
  \providecommand{\cC}{\mathcal{C}}
  \providecommand{\cO}{\mathcal{O}}
  \providecommand{\cQ}{\mathcal{Q}}
  \providecommand{\cS}{\mathcal{S}}
\providecommand{\comment}[2]{\todo[inline,caption={}]{\textbf{#1: }#2}}%
\providecommand{\inlinecomment}[3]{%
  {\color{#1}#2: #3}}%
\newcommand\commenter[2]%
\newcommand\csname i#1\endcsname[1]{\inlinecomment{#2}{#1}{##1}}
\newcommand\csname #1\endcsname[1]{\comment{#1}{##1}}
\newtheorem{lemma}{Lemma}
\newtheorem{remark}[lemma]{Remark}
\newtheorem{theorem}[lemma]{Theorem}
\DeclarePairedDelimiterX{\inp}[2]{\langle}{\rangle}{#1, #2}
\DeclarePairedDelimiterX{\abs}[1]{\lvert}{\rvert}{#1}
\DeclarePairedDelimiterX{\norm}[1]{\lVert}{\rVert}{#1}
\DeclarePairedDelimiterX{\cbr}[1]{\{}{\}}{#1} 
\DeclarePairedDelimiterX{\rbr}[1]{(}{)}{#1} 
\DeclarePairedDelimiterX{\sbr}[1]{[}{]}{#1} 
\definecolor{mydarkblue}{rgb}{0,0.08,0.45}
\newcommand{\myitem}[1]{%
\item[\textbf{(#1)}]\protected@edef\@currentlabel{#1}%
}
\let\cite\citep
\renewcommand{\tnote}[1]{\textsuperscript{\textbf{#1}}}
\newcommand{\update}{}
    \newcommand*{\algrule}[1][\algorithmicindent]{\makebox[#1][l]{\hspace*{.5em}\thealgruleextra\vrule height \thealgruleheight depth \thealgruledepth}}%
\newcommand*{\thealgruleextra}{}
\newcommand*{\thealgruleheight}{.75\baselineskip}
\newcommand*{\thealgruledepth}{.25\baselineskip}
\def\ALG@printindent{%
    \ifnum \theALG@nested>0%
        \ifx\ALG@text\ALG@x@notext%
        \else
            \unskip
            \addvspace{-1pt}%
            \ALG@printindent@tempcnta=1
            \loop
                \algrule[\csname ALG@ind@\the\ALG@printindent@tempcnta\endcsname]%
                \advance \ALG@printindent@tempcnta 1
            \ifnum \ALG@printindent@tempcnta<\numexpr\theALG@nested+1\relax%
            \repeat
        \fi
    \fi
    }%
\patchcmd{\ALG@doentity}{\noindent\hskip\ALG@tlm}{\ALG@printindent}{}{\errmessage{failed to patch}}
\newbox\statebox
\newcommand{\myState}[1]{%
    \setbox\statebox=\vbox{#1}%
    \edef\thealgruleheight{\dimexpr \the\ht\statebox+1pt\relax}%
    \edef\thealgruledepth{\dimexpr \the\dp\statebox+1pt\relax}%
    \ifdim\thealgruleheight<.75\baselineskip
        \def\thealgruleheight{\dimexpr .75\baselineskip+1pt\relax}%
    \fi
    \ifdim\thealgruledepth<.25\baselineskip
        \def\thealgruledepth{\dimexpr .25\baselineskip+1pt\relax}%
    \fi
    \State #1%
    \def\thealgruleheight{\dimexpr .75\baselineskip+1pt\relax}%
    \def\thealgruledepth{\dimexpr .25\baselineskip+1pt\relax}%
}
\title{On Communication Compression for Distributed Optimization on Heterogeneous Data}
\author{Sebastian U. Stich\thanks{\texttt{sebastian.stich@epfl.ch}, Machine Learning and Optimization Lab (MLO), EPFL, Switzerland.}\\ EPFL 
}
\date{}
\begin{document}
\maketitle

\begin{abstract} 
Lossy gradient compression, with either unbiased or biased compressors, has become a key tool to avoid the communication bottleneck in centrally coordinated distributed training of machine learning models. 
We analyze the performance of two standard and general types of methods:\ (i) distributed quantized SGD (D-QSGD) with arbitrary unbiased quantizers and 
(ii) distributed SGD with error-feedback and biased compressors (D-EF-SGD) in the heterogeneous (non-iid) data setting.

Our results indicate that D-EF-SGD is much less affected than D-QSGD by non-iid data, but both methods can suffer a slowdown if data-skewness is high. We {\update further study} two alternatives that are not (or much less) affected by heterogenous data distributions: first, {\update a recently proposed} method that is effective  on strongly convex problems, and secondly, we point out a more general approach that is applicable to linear compressors only but effective in all considered scenarios.
\end{abstract}

\section{Introduction}
We consider the distributed optimization problem
\begin{align}
f^\star := \min_{\xx \in \R^d} \sbr*{f(\xx):= \frac{1}{n}\sum_{i \in [n]} f_i(\xx)}\,, \label{eq:main}
\end{align}
where the objective function $f \colon \R^d \to \R$ is split among $n$ terms $f_i \colon \R^d \to \R, i \in [n]$, that are distributed among $n$ nodes. We assume that $f$ is $L$-smooth and that we have access to unbiased gradient oracles with $\sigma^2$-bounded variance for each $f_i, i \in [n]$. We study the heterogeneous setting and allow skewed data distributions on the nodes. We quantify the data-dissimilarity by a parameter $\zeta^2 \geq 0$.

Synchronous parallel SGD and variants thereof~\cite[e.g.][]{duchi2011adagrad,Kingma2014:adam} are among the most popular optimization algorithms in machine- and deep-learning~\cite{Bottou2010:sgd}. Because the number of parameters in neural networks can we very large, the time required to share the gradients across workers limits the
scalability of deep learning training 
\cite{Seide2015:1bit,Strom:2015wc}.
To address this bottleneck, lossy gradient compression techniques have been proposed as a solution, for instance \cite{Seide2015:1bit,Alistarh2017:qsgd,%
Wen2017:terngrad,Bernstein2018:sign}.

Whilst many empirical works highlighted the importance of data-adaptive compressors \cite{Lin2018:deep,Alistarh2018:topk,Wangni2019:tng,%
vogels2019}
that adapt to the local data distribution on the nodes, 
many theoretical analyses did often not consider the heterogeneous setting so far.
In this note, we refine the analyses in \cite{Alistarh2017:qsgd,Cordonnier2018thesis} and show how two commonly used training schemes are impacted by (potentially) non-iid data distributions on the nodes.

We consider two classes of methods:\ distributed methods with 
(i) unbiased gradient compressors (denoted as D\nobreakdash-QSGD in the following), with QSGD~\cite{Alistarh2017:qsgd}, Terngrad~\cite{Wen2017:terngrad} and signSGD~\cite{Bernstein2018:sign} as a few representative members, and
distributed methods with (ii) biased compressors and error-feedback (denoted by D-EF-SGD), such as proposed in \cite{Seide2015:1bit,Alistarh2018:topk,StichCJ18sparseSGD}.

As our first contribution, we tighten the existing analyses of these two types of methods and provide analyses for general non-convex, convex and strongly-convex problems. Exemplary, for instance for the case of $\mu$-strongly convex functions, we show that these methods converge as
\begin{align*}
&\text{D-QSGD type}: & &\tilde \cO \rbr*{\frac{\sigma^2 + \zeta^2}{\mu n \delta \epsilon } + \rbr*{\frac{L}{\mu} + \frac{L}{\delta n}}\log \frac{1}{\epsilon}}\\
&\text{D-EF-SGD type}: & & \tilde \cO \rbr*{\frac{\sigma^2}{\mu n  \epsilon } +\frac{(\sigma + \zeta/\sqrt{\delta})\sqrt{L}}{\mu \sqrt{\delta \epsilon}} + \frac{L}{\mu \delta} \log\frac{1}{\epsilon}} & & \qquad \qquad
\end{align*}
where here $0 < \delta \leq 1$ is a parameter measuring the compression quality ($\delta=1$ meaning no compression and recovering the standard SGD convergence rates).
In the presence of high stochastic noise, $\sigma^2 \ggg1$, D-QSGD methods suffer from a linear slow-down with respect to the the compression quality $\delta$, whereas for D-EF-SGD methods the first term is not affected by $\delta$. This characteristic performance difference has been discussed in prior work~\cite[e.g.][]{StichCJ18sparseSGD,StichK19delays} and we here show in addition that D-EF-SGD methods are also less sensitive to data-skewness.

In a slightly stronger setting, under the additional assumption that the local functions $f_i$ are smooth and convex, \citet{Mishchenko2019:diana} proposed the DIANA framework that is even less sensitive to data-skewness. In particular, DIANA converges linearly in the special case when $\sigma^2 = 0$, in contrast to the two methods introduced above. However, the dependency on $\sigma^2$ is not optimal in this scheme.
{\update We study the convergence of D-EF-SGD with bias correction} that converges as
\begin{align*}
\text{D-EF-SGD with bias correction}: & \qquad\qquad \tilde \cO \rbr*{\frac{\sigma^2}{\mu n  \epsilon } + \frac{\sigma \sqrt{L}}{\mu \sqrt{\delta \epsilon} }  + \rbr*{\frac{1}{\delta^2} + \frac{L}{\mu \delta} }\log\frac{1}{\epsilon}}
\end{align*}
This rate depends only poly-logarithmically on the data-dissimilarity parameter $\zeta^2$ (hidden in the $\tilde\cO(\cdot)$ notation) and converges linearly in the special case when $\sigma^2 = 0$.
{\update The algorithm that we study here is a minor variation of a scheme proposed earlier by~\cite{Kovalev2019:personal}.\footnote{\update 
Their scheme, EC-SGD-DIANA (see Algorithm~\ref{alg:ec-diana}), was---to the best of our knowledge at the time of writing this manuscript---known to converge at rate $\tilde\cO\big(\frac{\sigma^2}{\mu^2 n \epsilon} +  \frac{L n }{\mu \delta^2} \big)$ \cite{Kovalev2019:personal}. In parallel work, \citet{Gorbunov2020:ef} recently provided a tighter analysis of EC-SGD-DIANA and recover and extend our results presented here.
}}

We further point out an important observation, that when using linear compressors the convergence rate
\begin{align*}
\text{D-EF-SGD with linear compressors}: & \qquad\qquad \tilde \cO \rbr*{\frac{\sigma^2}{\mu n  \epsilon }  + \frac{L}{\mu \delta} \log\frac{1}{\epsilon}}
\end{align*}
can be obtained, which does not depend on the data-skewness. Whilst this approach requires additional restriction on the amenable compressors, it does not require additional assumptions on the regularity of the objective function and works also for the convex and non-convex case.

\section{Related Work}
Communication compression
is an established approach to alleviate the communication bottleneck in parallel optimization for deep-learning and a variety of different compressors have been proposed and studied~\cite{Seide2015:1bit,
Alistarh2017:qsgd,%
Aji2017:topk,Wen2017:terngrad,Zhang2017:zipml,%
Bernstein2018:sign,Wangni2018:sparsification}.
It has been demonstrated that the application of these methods is not limited to parallel SGD implementations alone, but can be combined e.g.\ with variance reduction~\cite{kuenstner2017:svrg} or with communication over arbitrary network topologies~\cite{Tang2018:decentralized}.
The analyses of D-QSGD in \cite{Alistarh2017:qsgd} for the stochastic case ($\sigma^2 > 0$), and in~\cite{Khirirat2018:dqsgd} for the deterministic ($\sigma^2=0$) case the most closely related works, which both did not consider the data-dissimilarity parameter in their analysis.

The observed practical successes of error-feedback mechanisms (that compensate compression errors), such as in~\cite{Seide2015:1bit}, could be theoretically explained in~\cite{StichCJ18sparseSGD,Alistarh2018:topk,%
KarimireddyRSJ2019feedback,StichK19delays}. Error-feedback mechanism have been successfully applied for different compressors~\cite{Ivkin2019:sketch,vogels2019} or different settings, such as decentralized~\cite{KoloskovaSJ19gossip,Tang2019:squeeze,KoloskovaLSJ19decentralized} or federated learning~\cite{Rothchild2020:fed}.
The first analyses for the multiple worker case were given in~\cite{Alistarh2018:topk,Cordonnier2018thesis} and refined in~\cite{Beznosikov2020} by considering the heterogeneous setting. Our results improve over these prior works as we explain in more detail below.

The DIANA method was proposed by~\citet{Mishchenko2019:diana} to address distributed training with communcation compression for problems with non-smooth regularizers. In DIANA, gradient differences rather than iterates or gradients are quantized, similar as in~\cite{kuenstner2017:svrg}.
 In this work we follow closely the analysis presented in~\cite{Horvath2019:vr}.

Whilst for centralized parallel SGD the data-dissimilarity between the local objective functions does not affect the performance of SGD~\cite{Bottou2018:book}, it has been observed for instance in federated learning (where methods are allowed to perform several local gradient steps before synchronization) or in decentralized optimization (where methods typically only use imperfect synchronization in each round), data-skewness heavily impacts the performance of most standard training schemes~\cite[cf.][]{%
hsieh2019quagmire,
Kgoogle:cofefe,%
Koloskova20decentralized,Li2020:fedavg,%
woodworth2020:heterogenous}.

\section{Assumptions}
We now list the main assumptions on the optimization problem~\eqref{eq:main}. For simplicity and the ease of presentation, we focus here on the most common standard assumptions, but the analyses could be tightened for many special cases, following techniques developed in other works.

\subsection{Regularity assumptions}
For all our results, we assume $L$-smoothness of $f$:
\begin{align}
 \norm{\nabla f(\yy)- \nabla f(\xx)} &\leq L \norm{\yy - \xx}\,, &  &\forall \xx,\yy \in \R^d\,. \label{def:lsmooth}
\end{align}
For some results we further require that each $f_i$, $i \in [n]$ is $L$-smooth. This assumption could for instance be relaxed by considering different smoothness constants $L_i$, $i \in [n]$.

Sometimes we require $\mu$-strong convexity of $f$ (or just convexity for $\mu = 0$):
\begin{align}
 f(\xx) &\geq f(\yy) + \lin{\nabla f(\yy),\xx-\yy} + \frac{\mu}{2}\norm{\xx-\yy}^2\,,  & &\forall \xx,\yy \in \R^d\,. \label{def:convex} 
\end{align}
For some results we require in addition each $f_i$, $i \in [n]$ to be convex. The convexity assumption can for most of the results be relaxed to star-convexity~\cite[cf.][]{StichK19delays} instead, or by assuming the Polyak-\L{}ojasiewicz condition~\cite[cf.][]{Karimi2016:pl}.

\subsection{Assumption on noise}
We assume that we have access to stochastic gradient oracles $\gg^i(\xx) \colon \R^d \to \R^d$ for each component $f_i$, $i \in [n]$. For simplicity we only consider the instructive case of uniformly bounded noise:
\begin{align}
\gg^i(\xx)& = \nabla f_i(\xx) + \bxi^i\,, &  \E_{\bxi^i} { \bxi^i } &= \0_d\,, &  \E_{\bxi^i} \norm{\bxi^i}^2 \leq \sigma^2\,, & &\forall \xx \in \R^d, i \in [n]\,. \label{def:noise}
\end{align}
With techniques introduced in other works, one can for instance extend the analysis to variations when the noise is assumed to scale with the squared norm of the gradient~\cite[cf.][]{Bottou2018:book,Stich19sgd}, or function suboptimality gap~\cite[cf.][]{Khaled2020:sgd}. Under additional structural assumptions, for instance assuming that each $f_i$ is $L$-smooth, or that each stochastic gradient is the gradient of a smooth function $\gg^i(\xx)=\nabla F(\xx,\bxi^i)$, additional tightening of the results can be obtained (such as for instance replacing $\sigma^2$ in the rates by a bound on the noise $\sigma_\star^2$ at the optimum $\xx_\star$ only).

\subsection{Gradient Dissimilarity}
In this work we consider the heterogeneous data setting and allow the functions $f_i$, $i \in [n]$ to be different on each node. We measure dissimilarity by two constants $\zeta^2 \geq 0$, $Z \geq 1$ that bound the variance across the $n$ nodes:
\begin{align}
 \frac{1}{n} \sum_{i=1}^n \norm{\nabla f_i(\xx)}^2 \leq \zeta^2 + Z^2 \norm{\nabla f(\xx)}^2\,. \label{def:zeta}
\end{align}
This is similar to the assumption in \cite{Koloskova20decentralized}. 
For the special case of $Z=1$ this matches the notions in related works~\cite[such as][]{Mishchenko2019:diana,Vogels2020:power}, but allowing $Z \geq 1$ is slightly more general. 
Whilst in principle we could also allow $Z \in [0,1]$ (at the expense of a larger $\zeta^2$),
we note that $\norm{\nabla f(\xx)}^2 \leq \frac{1}{n} \sum_{i=1}^n \norm{\nabla f_i(\xx)}^2$ and hence only $Z\geq 1$ allows for scale-free bounds in general (for instance, imposing $Z=0$ would imply an uniform bound on the gradient norms; an assumption which we do want to avoid here).
When assuming smoothness and convexity, it is often natural to measure dissimilarity only at the optimum $\xx_*$, denoted by a constant $\zeta_\star^2$.

\subsection{Compressors}
We introduce two notions of compressors that have become popular in the literature. To better distinguish them in this manuscript, we will use slightly different terms and parameters to denote them.

A $\delta$-compressor\cite[cf.][]{StichCJ18sparseSGD} is a mapping $\cC_\delta \colon \R^d \to \R^d$, with the property
\begin{align}
 \E_{\cC_\delta} \norm{\cC_\delta(\xx) -\xx}^2 &\leq (1-\delta)\norm{\xx}^2\,, & &\forall \xx \in \R^d\,. \label{def:compressor}
\end{align}

A $\omega$-quantizer~\cite[cf.][]{Alistarh2017:qsgd,kuenstner2017:svrg}, is a mapping $\cQ_\omega \colon \R^d \to \R^d$, with the property
\begin{align}
\E_{\cQ_\omega} \cQ_\omega(\xx) &= \xx\,, & \E_{\cQ_\omega} \norm{\cQ_\omega(\xx)}^2 &\leq (1+\omega)\norm{\xx}^2\,, & &\forall \xx \in \R^d\,. \label{def:quantizer}
\end{align}
Any $\omega$-quantizer can be rescaled to satisfy~\eqref{def:compressor}: $\frac{1}{1+\omega}\cQ_\omega$ is a $\delta=\frac{1}{1+\omega}$ compressor.

These notions do not guarantee a `compression' in the classical sense\footnote{For instance, $\xx \mapsto (1-\delta)\xx$ is a $\delta$-compressor, and $\xx \mapsto \{(1-\sqrt{\omega})\xx,(1+\sqrt{\omega})\xx\}$ with equal probability is a $\omega$-quantizer.}, but have been proven to be useful abstractions for the theoretical analysis of communication efficient SGD algorithms. Intuitively, we can assume that many compressors used in practice require approximately a $\delta$-fraction (or $\frac{1}{1+\omega}$ fraction, respectively) less bits compared to sending the full vector $\xx \in \R^d$, but this is not a rigorous statement.

An important (and illustrative) class of quantizers (or compressors) are sketching operators. As a guiding example, consider a linear sketch $\cS_\mV \colon \R^d \to \R^d $ of the form:
\begin{align}
 \cS_\mV(\xx) :=  \mV (\mV^\top \mV)^{-1} \mV^\top \xx \label{def:S}
\end{align}
for a matrix $\mV \in \R^{d \times p}$, $p \geq 1$. For instance, for $\mV=\ee_i$, a standard unit vector, this recovers random sparsification (when $\ee_i$ is chosen uniformly at random) or top-1 sparsification, when the index $i$ is chosen to match with the element of $\xx$ with largest magnitude. Both these operators are $\delta=\frac{1}{n}$ compressors, and the rescaled $n\cdot \cS_{\ee_i}(\xx)$ operator is a $\omega=n-1$ quantizer for a random choice of $\ee_i$, but not for the (biased) top-1 selection. These sketches can be (approximately) encoded only using $\cO(p\log(d) + B)$ bits at most, where $B$ denotes the bit length of a floating point number. These statements can be made more rigorous, but are not in the central focus here.

Popular sketching operators are for instance top-$k$ compressors~\cite{Aji2017:topk,Alistarh2018:topk,StichK19delays},
linear sketches~\cite{Konecny2016:federated},
count-sketches~\cite{Ivkin2019:sketch,Rothchild2020:fed} and
low-rank projections~\cite{vogels2019}.

\section{Distributed QSGD and Distributed EF-SGD}
In this section we derive new and improved convergence rates for the baseline algorithms D-QSGD and D-EF-SGD, tightening prior results in the literature. 
For instance, the analysis of D-QSGD in~\cite{Alistarh2017:qsgd} assumed a uniform bound on the gradient norms, $\E \norm{\gg_t^i}^2 \leq G^2$, $\forall i \in [n]$. This assumption can hide effects of non-iid data distributions across the nodes (as $\norm{\nabla f_i(\xx)}^2 \leq G^2$ is bounded). With our more general assumptions we are able to disentangle the two effects of the stochastic noise and the data-dissimilarity. 

All results, also for the following sections, are listed in Table~\ref{tab:results} for reference. In the main body of the text we only list the results for strongly convex functions for conciseness (and we do neither optimize nor compare constants in all the rates). All proofs can be found in the appendix.

\subsection{D-QSGD, Algorithm~\ref{alg:qsgd}}
Whilst variations of quantized SGD have been discussed in many early works or for special cases, a thorough theoretical discussion was provided in \cite{Alistarh2017:qsgd}, which popularized quantized SGD methods for efficient optimization in machine learning. Whilst their analysis required a uniform bound on the gradients, $\E \norm{\gg_t^i}^2 \leq G^2$, we do no require this assumption here. 
\citet{Khirirat2018:dqsgd} only study the case when $\sigma^2 = 0$ for a subset of loss functions that we consider here.
\begin{theorem}[D-QSGD]
\label{thm:qsgd}
Let $f\colon \R^d \to \R$ be $\mu$-strongly convex and $L$-smooth.
Then there exists a stepsize $\gamma \leq \frac{1}{2L(1+Z^2 \omega /n)}$ such that after at most
\begin{align}
T = \tilde \cO \rbr*{\frac{\sigma^2(1+\omega) + \zeta^2 \omega}{\mu n \epsilon} + \frac{L(1+Z^2\omega/n)}{ \mu}} \label{eq:bound_dqsgd}
\end{align}
iterations of Algorithm~\ref{alg:qsgd} it holds $\Ef f(\xx_{\rm out}) - f^\star \leq \epsilon$, where $\xx_{\rm out} = \xx_t$ denotes an iterate $\xx_t \in \{\xx_0,\dots,\xx_{T-1}\}$, chosen at random with probability proportional to $(1-\mu \gamma)^{-t}$.
\end{theorem}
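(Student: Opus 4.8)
The plan is to run the standard strongly convex smooth SGD descent analysis on the aggregated update, where the only genuinely new ingredient is a sharp second-moment bound for the compressed gradient that separates stochastic noise from data dissimilarity. Write $\gg_t^i := \gg^i(\xx_t)$ and take the server update to be $\xx_{t+1} = \xx_t - \gamma \bar\gg_t$ with $\bar\gg_t := \frac1n \sum_{i\in[n]} \cQ_\omega(\gg_t^i)$, where quantizers and noise are drawn independently across the $n$ nodes. Since $\cQ_\omega$ is unbiased by \eqref{def:quantizer} and the oracles are unbiased by \eqref{def:noise}, conditioning on $\xx_t$ gives $\E[\bar\gg_t\mid\xx_t] = \frac1n\sum_i \nabla f_i(\xx_t) = \nabla f(\xx_t)$, so $\bar\gg_t$ is an unbiased oracle for $\nabla f(\xx_t)$.

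The crux is the variance bound. For each node I would write $\cQ_\omega(\gg_t^i)-\nabla f_i(\xx_t) = (\cQ_\omega(\gg_t^i)-\gg_t^i) + \bxi_t^i$ and condition on $\gg_t^i$: because $\E_{\cQ}\cQ_\omega(\gg_t^i)=\gg_t^i$, the quantizer contributes $\E_{\cQ}\norm{\cQ_\omega(\gg_t^i)-\gg_t^i}^2 \le \omega\norm{\gg_t^i}^2$ (using $\E_\cQ\norm{\cQ_\omega(\xx)}^2-\norm{\xx}^2\le\omega\norm{\xx}^2$), while averaging over the noise gives $\E\norm{\gg_t^i}^2 \le \norm{\nabla f_i(\xx_t)}^2 + \sigma^2$. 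The per-node errors are independent and mean-zero given $\xx_t$, so the cross terms vanish and
\[
\E\norm{\bar\gg_t - \nabla f(\xx_t)}^2 = \frac{1}{n^2}\sum_{i}\E\norm{\cQ_\omega(\gg_t^i)-\nabla f_i(\xx_t)}^2 \le \frac{\omega}{n}\cdot\frac1n\sum_i\norm{\nabla f_i(\xx_t)}^2 + \frac{(1+\omega)\sigma^2}{n}\,.
\]
Applying the dissimilarity bound \eqref{def:zeta} to $\frac1n\sum_i\norm{\nabla f_i(\xx_t)}^2$ then yields
\[
\E\norm{\bar\gg_t-\nabla f(\xx_t)}^2 \le \frac{\omega Z^2}{n}\norm{\nabla f(\xx_t)}^2 + M\,, \qquad M := \frac{\omega\zeta^2 + (1+\omega)\sigma^2}{n}\,.
\]
The constant $M$ is exactly the per-iteration noise that surfaces in the numerator of the $1/\epsilon$ term, and the coefficient $\frac{\omega Z^2}{n}$ on $\norm{\nabla f(\xx_t)}^2$ is what forces the stepsize restriction.

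Next I would feed this into the one-step recursion. Expanding $\E\norm{\xx_{t+1}-\xx_\star}^2$, using $\E\norm{\bar\gg_t}^2 = \norm{\nabla f(\xx_t)}^2 + \E\norm{\bar\gg_t-\nabla f(\xx_t)}^2 \le (1+\tfrac{\omega Z^2}{n})\norm{\nabla f(\xx_t)}^2 + M$, strong convexity \eqref{def:convex} to lower-bound $\lin{\nabla f(\xx_t),\xx_t-\xx_\star}$, and $L$-smoothness via $\norm{\nabla f(\xx_t)}^2 \le 2L(f(\xx_t)-f^\star)$, I obtain
\[
\E\norm{\xx_{t+1}-\xx_\star}^2 \le (1-\gamma\mu)\E\norm{\xx_t-\xx_\star}^2 - 2\gamma\rbr*{1-\gamma L\rbr*{1+\tfrac{Z^2\omega}{n}}}\E\sbr{f(\xx_t)-f^\star} + \gamma^2 M\,.
\]
Choosing $\gamma \le \frac{1}{2L(1+Z^2\omega/n)}$ makes the bracket at least $\tfrac12$ --- which is precisely the stepsize bound in the statement --- and leaves the clean recursion $r_{t+1}\le(1-\gamma\mu)r_t - \gamma\,e_t + \gamma^2 M$ with $r_t := \E\norm{\xx_t-\xx_\star}^2$ and $e_t := \E\sbr{f(\xx_t)-f^\star}$.

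Finally I would invoke the standard recursion lemma for such contractions (as in the unified SGD analyses), which for the randomly selected output $\xx_{\rm out}$ with weights proportional to $(1-\mu\gamma)^{-t}$ and a tuned stepsize $\gamma\in(0,\gamma_{\max}]$, $\gamma_{\max} = \frac{1}{2L(1+Z^2\omega/n)}$, gives $\E\sbr{f(\xx_{\rm out})-f^\star} = \tilde\cO\rbr*{\mu r_0\exp(-\gamma_{\max}\mu T) + \tfrac{M}{\mu T}}$. Forcing each term below $\epsilon$ yields $T = \tilde\cO(\gamma_{\max}^{-1}\mu^{-1}) = \tilde\cO\rbr*{L(1+Z^2\omega/n)/\mu}$ from the transient and $T=\tilde\cO(M/(\mu\epsilon))$ from the noise term, which together are exactly \eqref{eq:bound_dqsgd}. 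I expect the main obstacle to be the variance computation: separating the two independent randomness sources so that the $(1+\omega)\sigma^2$ and $\omega\zeta^2$ factors come out sharp, and carrying the $\frac{\omega Z^2}{n}\norm{\nabla f(\xx_t)}^2$ term --- which couples the variance to the stepsize --- rather than crudely bounding the full second moment, which is what would lose the tightness over prior analyses that assumed a uniform gradient bound $\E\norm{\gg_t^i}^2\le G^2$.
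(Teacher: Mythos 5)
Your proposal is correct and follows essentially the same route as the paper: an exact variance bound for the aggregated quantized gradient (independence across nodes kills the cross terms, the quantizer and oracle second moments give $\frac{\omega Z^2}{n}\norm{\nabla f(\xx_t)}^2 + \frac{\omega\zeta^2+(1+\omega)\sigma^2}{n}$ via \eqref{def:zeta}), fed into the standard strongly convex one-step recursion with the stepsize absorbing the $\frac{Z^2\omega}{n}\norm{\nabla f(\xx_t)}^2$ term through $\norm{\nabla f(\xx_t)}^2\le 2LF_t$, and closed with the weighted-summation/stepsize-tuning lemma. The only cosmetic difference is that you decompose the per-node error as quantization error plus oracle noise with conditioning, whereas the paper applies the bias--variance identity \eqref{eq:biasvariance} directly; both give the identical bound.
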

\begin{remark}
We here state all convergence results for $\xx_{\rm out}$ chosen to be a random iterate. For convex functions this also implies convergence in function value of a weighted average of the iterates.
\end{remark}
\begin{remark}
Assuming $\sigma^2 \leq G^2, \zeta^2 \leq G^2$ for a constant $G^2$, we recover the $\cO \rbr[\big]{\frac{G^2 (1+\omega)}{\mu n \epsilon}}$ leading term derived in~\cite{Alistarh2017:qsgd}.
\end{remark}

\subsection{D-EF-SGD, Algorithm~\ref{alg:ef}}
Next, we consider distributed SGD with error-feedback.
Whilst the first analysis was presented in \cite{StichCJ18sparseSGD} only for the case $n=1$ and extended to $n>1$ in \cite{Cordonnier2018thesis}, both works assumed a uniform bound on the gradient norms. This assumption was revoked later in \cite{StichK19delays} for $n=1$ and in \cite{Beznosikov2020} for $n>1$. Our analysis improves over \cite[Theorem 15]{Beznosikov2020} in various aspects, for instance their result shows a dependence on $\smash{\cO\rbr[\big]{\frac{\sigma^2 + \zeta_\star^2/\delta}{\mu \epsilon}}}$ under the additional assumption that each $f_i$ is smooth and strongly convex, whilst we improve the respective terms to $\smash{\cO\rbr[\big]{\frac{\sigma^2}{\mu n \epsilon} + \frac{\zeta}{\mu \delta \sqrt{ \epsilon}}}}$ here under weaker assumptions, i.e.\ showing a linear speedup in $n$ for the leading term a and a weaker dependency on $\zeta^2$ (though $\zeta_\star^2 \leq \zeta^2$).
\begin{theorem}[D-EF-SGD]
\label{thm:ef}
Let $f\colon \R^d \to \R$ be $\mu$-strongly convex and $L$-smooth.
Then there exists a stepsize $\gamma \leq \frac{1}{14L(1+Z/\delta)}$ such that after at most
\begin{align}
 \tilde \cO \rbr*{\frac{\sigma^2}{\mu n \epsilon} + \rbr*{ \frac{L(\sigma^2 + \zeta^2/\delta)}{\mu^2 \delta \epsilon}}^{1/2} + \frac{L(1+Z/\delta)}{\mu}} \label{eq:bound_ef}
\end{align}
iterations of Algorithm~\ref{alg:ef} it holds $\Ef f(\xx_{\rm out}) - f^\star \leq \epsilon$, where $\xx_{\rm out} = \xx_t$ denotes an iterate $\xx_t \in \{\xx_0,\dots,\xx_{T-1}\}$, chosen at random with probability proportional to $\rbr*{1-\min\cbr*{\frac{\mu \gamma}{2},\frac{\delta}{4}}}^{-t}$.
\end{theorem}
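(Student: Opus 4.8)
The plan is to reduce the analysis of D-EF-SGD to a \emph{perturbed} SGD recursion on a virtual, error-corrected sequence, and then to control the perturbation through the accumulated compression error. Writing $\ee_t^i$ for the error-feedback memory of worker $i$ and $\bar\ee_t:=\frac1n\sum_{i\in[n]}\ee_t^i$, I would first define the virtual iterate $\tilde\xx_t:=\xx_t-\bar\ee_t$. The defining feature of error feedback is that the fed-back quantity cancels the compression error in $\tilde\xx_t$: using the memory update $\ee_{t+1}^i=(\gamma\gg_t^i+\ee_t^i)-\cC_\delta(\gamma\gg_t^i+\ee_t^i)$ together with $\xx_{t+1}=\xx_t-\frac1n\sum_i\cC_\delta(\gamma\gg_t^i+\ee_t^i)$, a direct computation shows $\tilde\xx_{t+1}=\tilde\xx_t-\gamma\bar\gg_t$ with $\bar\gg_t:=\frac1n\sum_i\gg_t^i$. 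Thus $\tilde\xx_t$ follows \emph{exactly} the vanilla mini-batch SGD recursion, with the one crucial caveat that the stochastic gradient is taken at $\xx_t$ rather than at $\tilde\xx_t$. This is the structural simplification on which everything else rests.

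Next I apply the standard one-step estimate $\norm{\tilde\xx_{t+1}-\xx_\star}^2=\norm{\tilde\xx_t-\xx_\star}^2-2\gamma\lin{\bar\gg_t,\tilde\xx_t-\xx_\star}+\gamma^2\norm{\bar\gg_t}^2$. Conditioning on the iterate and using unbiasedness and independence of the noise across workers~\eqref{def:noise} gives $\E\norm{\bar\gg_t}^2=\norm{\nabla f(\xx_t)}^2+\sigma^2/n$, the source of the $\sigma^2/n$ linear-speedup term. I split the inner product as $-2\gamma\lin{\nabla f(\xx_t),\xx_t-\xx_\star}-2\gamma\lin{\nabla f(\xx_t),\bar\ee_t}$: the first part is handled by $\mu$-strong convexity~\eqref{def:convex} to extract $-2\gamma(f(\xx_t)-f^\star)-\mu\gamma\norm{\xx_t-\xx_\star}^2$, while the deviation part is bounded by Young's inequality and $L$-smoothness ($\norm{\nabla f(\xx_t)}^2\le 2L(f(\xx_t)-f^\star)$), the Young parameter chosen $\asymp 1/L$ so that the net coefficient of $f(\xx_t)-f^\star$ stays negative. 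After also relating $\norm{\xx_t-\xx_\star}^2$ to $\norm{\tilde\xx_t-\xx_\star}^2$ (a further error term), this produces $\E\norm{\tilde\xx_{t+1}-\xx_\star}^2\le(1-\mu\gamma)\norm{\tilde\xx_t-\xx_\star}^2-\gamma(f(\xx_t)-f^\star)+\tfrac{\gamma^2\sigma^2}{n}+cL\gamma\norm{\bar\ee_t}^2$.

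The heart of the argument is the bound on $\norm{\bar\ee_t}^2\le\frac1n\sum_i\norm{\ee_t^i}^2$. The compressor property~\eqref{def:compressor} with Young's inequality (parameter $\delta/2$) gives the contraction $\E\norm{\ee_{t+1}^i}^2\le(1-\tfrac{\delta}{2})\E\norm{\ee_t^i}^2+\tfrac{3\gamma^2}{\delta}\E\norm{\gg_t^i}^2$; unrolling and averaging, the second moments are controlled by~\eqref{def:noise} and~\eqref{def:zeta} through $\frac1n\sum_i\E\norm{\gg_t^i}^2\le\sigma^2+\zeta^2+Z^2\norm{\nabla f(\xx_t)}^2$, so the memory accumulates as $\frac1n\sum_i\E\norm{\ee_t^i}^2\lesssim\tfrac{\gamma^2}{\delta^2}(\sigma^2+\zeta^2)+\tfrac{\gamma^2Z^2}{\delta}\sum_{s<t}(1-\tfrac{\delta}{2})^{t-1-s}\norm{\nabla f(\xx_s)}^2$. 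To recover the sharper $\sigma^2+\zeta^2/\delta$ dependence of~\eqref{eq:bound_ef} rather than $\sigma^2/\delta^2$, I would split the memory into its bias and its martingale (noise) part: the bias accumulates as the \emph{square} of a geometric sum and so carries the $1/\delta^2$ factor attached to $\zeta^2$, whereas the fresh, cross-step-independent noise adds in variance — a single geometric sum of $(1-\tfrac{\delta}{2})^{2k}$ — and contributes only $\tfrac{\gamma^2\sigma^2}{\delta}$.

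Finally I fold the error bound into the descent recursion and invoke a general weighted-sum convergence lemma; this last step is where I expect the main obstacle to lie, because the error at step $t$ depends on \emph{all} past gradient norms, so substituting its bound couples the recursion across time. The resolution is to pass to past suboptimalities via $\norm{\nabla f(\xx_s)}^2\le 2L(f(\xx_s)-f^\star)$ and to track a weighted sum with weights $w_t=(1-\min\{\mu\gamma/2,\delta/4\})^{-t}$; the two competing decay rates — $\mu\gamma$ from the optimization geometry and $\delta/2$ from the memory contraction — are exactly why the output distribution in the statement uses their \emph{minimum}. Imposing $\gamma\le\frac{1}{14L(1+Z/\delta)}$ then guarantees that the $\tfrac{L\gamma^3 Z^2}{\delta}$-weighted past-gradient terms are reabsorbed after summing the geometric series (the $Z/\delta$ in the stepsize ceiling is precisely this absorption condition, the $1/L$ comes from the Young parameter), leaving the clean recursion $\E\Phi_{t+1}\le(1-\mu\gamma)\E\Phi_t-\tfrac{\gamma}{2}(f(\xx_t)-f^\star)+\tfrac{\gamma^2\sigma^2}{n}+\tfrac{cL\gamma^3}{\delta}(\sigma^2+\tfrac{\zeta^2}{\delta})$ for a potential $\Phi_t$ combining $\norm{\tilde\xx_t-\xx_\star}^2$ with the scaled memory. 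Applying the standard tuning lemma — balancing the $\tfrac{\gamma^2\sigma^2}{n}$ term (which sets $\tfrac{\sigma^2}{\mu n\epsilon}$), the $\gamma^3$ higher-order term (which sets $(L(\sigma^2+\zeta^2/\delta)/(\mu^2\delta\epsilon))^{1/2}$), and the contraction at the maximal stepsize (which sets $\tfrac{L(1+Z/\delta)}{\mu}$) — then yields~\eqref{eq:bound_ef}. The genuinely delicate part throughout is the constant bookkeeping across the two coupled Young's inequalities, ensuring every $\norm{\nabla f(\xx_s)}^2$ term is reabsorbed with the stated constants.
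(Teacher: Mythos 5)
Your route is essentially the paper's: a perturbed (virtual) iterate analysis yielding a descent inequality with an extra $L\gamma^3 E_t$ term, a contraction recursion for the averaged memory $E_t=\frac1n\sum_{i\in[n]}\E\norm{\ee_t^i}^2$ driven by $\sigma^2+\zeta^2+Z^2\norm{\nabla f(\xx_t)}^2$, absorption of the $Z^2$-weighted past-gradient terms through the stepsize ceiling $\gamma\lesssim\frac{\delta}{LZ}$, a combined potential contracting at rate $\min\{\mu\gamma/2,\delta/4\}$ (the paper writes this directly as a Lyapunov function $\Psi_t=X_t+aE_t$ with $a=12\gamma^3L/\delta$, which is exactly your $\Phi_t$), and the standard stepsize-tuning lemma. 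You attribute all three terms of \eqref{eq:bound_ef} to the correct sources.

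The one step that does not survive literal scrutiny is your mechanism for extracting $\sigma^2/\delta$ rather than $\sigma^2/\delta^2$ from the memory: you propose to split $\ee_t^i$ \emph{across time} into a bias part and a martingale part whose increments add in variance with weights $(1-\delta/2)^{2k}$. Since $\cC_\delta$ is nonlinear in general (top-$k$, sign, etc.), the recursion $\ee_{t+1}^i=(\mathrm{Id}-\cC_\delta)(\ee_t^i+\gamma\gg_t^i)$ does not act linearly on the noise, so no such additive decomposition of $\ee_t^i$ exists and the orthogonality of increments you invoke is unavailable. The standard repair --- and what the paper does in Lemma~\ref{lemma:En} (see the explicit computation in \eqref{eq:Econtinue}) --- is a \emph{per-step conditional} bias--variance split: conditionally on the past, $\E\norm{\ee_t^i+\gg_t^i}^2=\norm{\ee_t^i+\nabla f_i(\xx_t)}^2+\E\norm{\bxi_t^i}^2$ by \eqref{eq:biasvariance}, so Young's inequality with parameter $\asymp\delta$ is applied only to the deterministic part, and $\sigma^2$ enters the recursion with coefficient $(1-\delta)$ rather than $2(1-\delta)/\delta$; a single geometric sum then yields the $\sigma^2/\delta$ (versus $\zeta^2/\delta^2$) asymmetry you were after, i.e.\ the paper's $D\asymp L\sigma^2/\delta+L\zeta^2/\delta^2$. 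With that substitution your argument matches the paper's. (Two cosmetic points: your final potential recursion should contract at $(1-\min\{\mu\gamma/2,\delta/4\})$ rather than $(1-\mu\gamma)$, as you correctly state in prose; and under the stated stepsize ceiling with $Z\geq1$ the minimum is in fact always attained by $\mu\gamma/2$.)
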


\subsection{Discussion}

\paragraph{With stochastic noise.}
In the presence of stochastic noise $\sigma^2 > 0$, the first term is dominating in the rates when $\epsilon \to 0$. Due to mini-batching, this term decreases linearly in $n$ for both methods.
 We see that D-QSGD without error-feedback suffers from  a linear slow-down in $(1+\omega)$, $\smash{\cO \rbr[\big]{\frac{\sigma^2(1+\omega)}{\mu n\epsilon}}}$, whereas in D-EF-SGD the term $\cO \rbr[\big]{\frac{\sigma^2}{\mu n\epsilon}}$ is not affected by $\delta$. These characteristic effects and benefits of error compensation have been discussed in many prior works~\cite[cf.][]{StichCJ18sparseSGD,KarimireddyRSJ2019feedback}.

\paragraph{Without stochastic noise.}
For the special case when $\sigma^2=0$, we observe  that both D-QSGD and D-EF-SGD only converge sublinearly, at rates $\smash{\cO \rbr[\big]{\frac{\zeta^2 \omega}{\mu n \epsilon}}}$ and $\smash{\cO \rbr[\big]{\frac{\zeta}{\mu \delta \sqrt{\epsilon}}}}$, respectively.
 Despite that the parameter $\zeta$ can be zero for many applications (for instance for overparametrized optimization problems), these results show that 
data-dissimilarity impose additional challenges to optimization schemes with communication compression.

Qualitatively, the effects of the data-dissimilarity parameter on the convergence rate is similar as for local update methods that perform several local steps between communication rounds~\cite{Koloskova20decentralized}. This might just be a consequence of the (similar) proof techniques but might hint to an intrinsic limitation of the two approaches discussed in this section.

\begin{table}
\caption{Summary of the convergence results for $L$-smooth functions (with additional assumptions per column). $R_0^2 \geq \norm{\xx_0 -\xx_\star}^2$, $F_0 \geq f(\xx_0) - f^\star$.}
\label{tab:results}
\begin{threeparttable}
\resizebox{\linewidth}{!}{
\begin{tabular}{llrrr}
\toprule
Algorithm & compressor & $\mu$-strongly convex\tnote{a} & convex\tnote{b} & -\tnote{c} \\ \midrule
D-QSGD & $\cQ_\omega$ & $\tilde \cO \rbr*{\frac{\sigma^2(1+\omega) + \zeta^2 \omega}{\mu n \epsilon} + \frac{L (1+Z^2\omega/n)}{\mu}}$ & $\cO\rbr*{\frac{\sigma^2(1+\omega) + \zeta^2 \omega}{n \epsilon^2} + \frac{L(1+Z^2\omega/n)}{\epsilon} } \cdot R_0^2 $ & $\cO\rbr*{\frac{\sigma^2(1+\omega) + \zeta^2\omega}{n \epsilon^2} + \frac{L(1+Z^2\omega/n)}{\epsilon} } \cdot L F_0$ \\
D-EF-SGD &$\cC_\delta$ & $\tilde \cO \rbr*{ \frac{\sigma^2}{\mu n \epsilon} + \frac{\sqrt{L}(\sigma + \zeta/\sqrt{\delta})}{\mu \sqrt{\delta \epsilon} } + \frac{LZ)}{\mu \delta }}$ & $\cO \rbr*{ \frac{\sigma^2}{n \epsilon^2} + \frac{\sqrt{L}(\sigma + \zeta/\sqrt{\delta})}{\sqrt{\delta} \epsilon^{3/2} } + \frac{LZ}{\epsilon \delta }} \cdot R_0^2$ &  $\cO \rbr*{ \frac{\sigma^2}{n \epsilon^2} + \frac{\sigma + \zeta/\sqrt{\delta}}{\sqrt{\delta} \epsilon^{3/2} } + \frac{Z}{\epsilon \delta}} \cdot L F_0$\\
\midrule
DIANA\tnote{d} & $\cQ_\omega$ & $\tilde \cO \rbr*{\frac{\sigma^2(1+\omega)}{\mu n \epsilon} + \omega + \frac{L (1+\omega/n)}{\mu}}$ & $\cO\rbr*{\frac{\sigma^2(1+\omega) + (1+\omega)\omega \zeta_\star^2}{n \epsilon^2} + \frac{L(1+\omega/n)}{\epsilon} } \cdot R_0^2 $ & $\cO\rbr*{\frac{\sigma^2(1+\omega) + \zeta^2\omega }{n\epsilon^2} + \frac{L(1+Z^2\omega/n)}{\epsilon} } \cdot L F_0$ \\
D-EF b-corr\tnote{d,e} & $\cC_\delta$, $\cQ_\omega$ & $\tilde \cO \rbr*{ \frac{\sigma^2}{\mu n \epsilon} + \frac{\sqrt{L} \sigma}{\mu \sqrt{\delta \epsilon}} +  \frac{1+\omega}{\delta} + \frac{L}{\mu \delta} }$ & $\cO \rbr*{ \frac{\sigma^2}{n \epsilon^2} + \frac{\sqrt{L}(\sigma + \zeta_\star /\sqrt{\delta})}{\sqrt{\delta} \epsilon^{3/2} } + \frac{L}{\epsilon \delta }} \cdot R_0^2$ & $\cO \rbr*{ \frac{\sigma^2}{n\epsilon^2} + \frac{\sigma + \zeta/\sqrt{\delta}}{\sqrt{\delta} \epsilon^{3/2} } + \frac{Z}{\epsilon \delta}  } \cdot L F_0 $  \\
\midrule
D-QSGD & $\cQ_\omega$ linear & $\tilde \cO \rbr*{ \frac{\sigma^2(1+\omega)}{\mu n \epsilon} + \frac{L(1+\omega)}{\mu}}$ & $\cO \rbr*{ \frac{\sigma^2(1+\omega)}{n \epsilon^2} + \frac{L(1+\omega)}{\epsilon}} \cdot R_0^2$ &  $\cO \rbr*{\frac{\sigma^2(1+\omega)}{n} + \frac{1+\omega}{\epsilon}} \cdot L F_0$ \\
D-EF-SGD &$\cC_\delta$ linear& $\tilde \cO \rbr*{ \frac{\sigma^2}{\mu n \epsilon} + \frac{L}{\mu}}$ & $\cO \rbr*{ \frac{\sigma^2}{n \epsilon^2} + \frac{L}{\epsilon}} \cdot R_0^2$ &  $\cO \rbr*{\frac{\sigma^2}{n} + \frac{1}{\epsilon}} \cdot L F_0$ \\
\bottomrule
\end{tabular}
}
\begin{tablenotes}\footnotesize  %
    \parbox{\textwidth}{ \footnotesize 
     \item[a] Convergence $\Ef f(\xx_{\rm out})-f^\star \leq \epsilon$, where $\xx_{\rm out}$ is a random iterate, chosen with exponentially increasing probability in $t$.
     \item[b] Convergence $\Ef f(\xx_{\rm out})-f^\star \leq \epsilon$, where $\xx_{\rm out}$ is a uniformly at random chosen iterate.
     \item[c] Convergence $\Ef \norm{\nabla f(\xx_{\rm out})}^2 \leq \epsilon$, where $\xx_{\rm out}$ is a uniformly at random chosen iterate.
     \item[d] Require each $f_i$ to be $L$-smooth. For the first two columns require each $f_i$ to be convex. $\zeta^2 \geq \zeta_\star^2 :=\frac{1}{n}\sum_{i=1}^n \norm{\nabla f_i(\xx_\star)}^2$.
     \item[e] For the choice $\beta = \delta$.
     }
    \end{tablenotes}
\end{threeparttable}
\end{table}

\begin{figure*}[t]
\begin{minipage}[t]{1\linewidth}
\begin{minipage}[t]{0.48\linewidth}
\begin{algorithm}[H]
\caption{D-QSGD}
\label{alg:qsgd}
\begin{algorithmic}[1]
\myState{\textbf{Input:} $\xx_0$, $\gamma$, $\cQ_\omega$}
\For{$t=0,\dots,T-1$} \hfill $\triangledown$ worker side
 \myState{$\gg_t^i := \gg^i(\xx_t)$} \Comment{stochastic gradient}
 \myState{$\hat \Delta_t^i := \cQ_\omega(\gg_t^i)$}
   \vspace{2.3em}
 \myState{send to server: $\hat \Delta_t^i$} \hfill $\triangledown$ server side
 \myState{$\xx_{t+1}:= \xx_t - \frac{\gamma}{n} \sum_{i=1}^n \hat \Delta_t^i$}
\EndFor
\end{algorithmic}
\end{algorithm}
\end{minipage}
\begin{minipage}[t]{0.48\linewidth}
\begin{algorithm}[H]
\caption{D-EF-SGD}
\label{alg:ef}
\begin{algorithmic}[1]
\myState{\textbf{Input:} $\xx_0$, $\gamma$, $\cC_\delta$, $\ee_t^i=\0_d$}
\For{$t=0,\dots,T-1$} \hfill $\triangledown$ worker side
 \myState{$\gg_t^i := \gg^i(\xx_t)$} \Comment{stochastic gradient}
 \myState{$\hat \Delta_t^i := \cC_\delta(\ee_t^i + \gg_t^i)$}
 \myState{$\ee_{t+1}^i := \ee_t^i + \gg_t^i - \hat \Delta_t^i$} \vspace{0.7em}
 \myState{send to server: $\hat \Delta_t^i$} \hfill $\triangledown$ server side
 \myState{$\xx_{t+1}:= \xx_t - \frac{\gamma}{n} \sum_{i=1}^n  \hat \Delta_t^i$} 
\EndFor
\end{algorithmic}
\end{algorithm}
\end{minipage}

\begin{minipage}[t]{0.48\linewidth}
\begin{algorithm}[H]
\caption{DIANA}
\label{alg:diana}
\begin{algorithmic}[1]
\myState{\textbf{Input:} $\xx_0$, $\gamma$, $\cQ_\omega$, 
$\hh_0=\0_d$, $\hh_0^i=\0_d$, $\alpha \leq \frac{1}{1+\omega}$}
\For{$t=0,\dots,T-1$} \hfill $\triangledown$ worker side
 \myState{$\gg_t^i := \gg^i(\xx_t)$} \Comment{stochastic gradient}
 \myState{$\hat \Delta_t^i := \cQ_\omega(\gg_t^i-\hh_t^i)$}
 \myState{$\hh_{t+1}^i := \hh_t^i + \alpha \hat \Delta_t^i$}
 \vspace{1.7em}
 \myState{send to server: $\hat \Delta_t^i$} \hfill $\triangledown$ server side
 \myState{$\xx_{t+1}:= \xx_t - \gamma \hh_t - \frac{\gamma}{n} \sum_{i=1}^n  \hat \Delta_t^i$} 
 \myState{$\hh_{t+1} := \hh_t + \frac{\alpha}{n}\sum_{i=1}^n \hat \Delta_t^i$} \Comment{$\hh_t = \frac{1}{n} \sum_{i=1}^n \hh_t^i$}
\EndFor
\end{algorithmic}
\end{algorithm}
\end{minipage}
\begin{minipage}[t]{0.48\linewidth}
\begin{algorithm}[H]
\caption{D-EF-SGD with bias correction}
\label{alg:efnew}
\begin{algorithmic}[1]
\myState{\textbf{Input:} $\xx_0,\! \gamma,\! \cC_\delta,\! \cQ_\omega$, $\hh_0 \!=\! \hh_0^i \!=\! \ee_t^i \!= \! \0_d$, $\alpha \leq \frac{\beta}{1+\omega}$}
\For{$t=0,\dots,T-1$} \hfill $\triangledown$ worker side
 \myState{$\gg_t^i := \gg^i(\xx_t)$} \Comment{stochastic gradient}
 \myState{$\hat \Delta_t^i := \cC_\delta(\ee_t^i + \gg_t^i - \hh_t^i)$} \label{ln:diff1}
 \myState{$ \Delta_t^i :=\cQ_\omega(\gg_t^i-\hh_t^i)$} \label{ln:quant}
 \myState{$\ee_{t+1}^i := \ee_t^i + \gg_t^i - \hh_t^i - \hat \Delta_t^i$, $\hh_{t+1}^i := \hh_t^i + \alpha \Delta_t^i$} 
 \vspace{0.5em}
 \myState{send to server: $\hat \Delta_t^i$, $\Delta_t^i$} \hfill $\triangledown$ server side
 \myState{$\xx_{t+1}:= \xx_t - \gamma \hh_t -  \frac{\gamma}{n} \sum_{i=1}^n  \hat \Delta_t^i$} \label{ln:diff2}
 \myState{$\hh_{t+1}:= \hh_t + \frac{\alpha}{n} \sum_{i=1}^n  \Delta_t^i$}  \Comment{$\hh_t = \frac{1}{n} \sum_{i=1}^n \hh_t^i$}
\EndFor
\end{algorithmic}
\end{algorithm}
\end{minipage}

\end{minipage}
\caption{Algorithms with broadcast. All algorithms require coordination with a central parameter server, that broadcasts the updated parameter $\xx_t$ to the working nodes in each iteration. Quantization operators $\cQ_\omega$ are assumed to be independent of $t$ and $i$.}
\label{fig:1}
\end{figure*}

\section{Bias Correction for Improving Data-Depencence}
In this section, we discuss a technique proposed in~\cite{Mishchenko2019:diana} that allows to improve the algorithms dependence on the data-dissimilarity parameter for strongly convex problems. However, this technique requires slightly stronger assumptions, such as smoothness of each $f_i, i \in [n]$ and convexity.

\subsection{DIANA, Algorithm~\ref{alg:diana}}
\citet{Mishchenko2019:diana} introduced DIANA, an alternative to D-QSGD that allows to solve constrained optimization problems with quantized communication. 
Whilst this is one key applications of DIANA, we focus here on the benefits this method can offer for unconstrained optimization with communication compression.

A key mechanism in DIANA (Algorithm~\ref{alg:diana}) is that it maintains a sequence of auxiliary variables $\hh_t^i$ on each node $i \in [n]$, with the property $\hh_t^i \to \nabla f_i(\xx_\star)$ when $\xx_t \to \xx_\star$. These variables can be used to design compression operators with smaller variance: instead of compressing $\cQ_\omega(\gg_t^i)$ as D-QSGD, DIANA uses the quantizer $\hh_t^i + \cQ_\omega(\gg_t^i -\hh_t^i)$ instead in each round~\cite[see also][]{kuenstner2017:svrg}. This is still an unbiased quantizer, but the variance can decrease when $\hh_t^i$ is chosen in an optimal way (such observations were also stated in parallel work~\cite{Wangni2019:tng} but only rigorously proven in~\cite{Mishchenko2019:diana}).

Convergence rates for DIANA where first derived in~\cite{Mishchenko2019:diana} and later refined in~\cite{Horvath2019:vr}. None of these works presented convergence rates for just convex functions ($\mu=0$) and convergence rates for non-convex functions for arbitrary $\omega$-quantizers. The (small) improvement over \cite{Horvath2019:vr} stems from the fact that we consider an average of the iterates (and not the last one) as the output of the algorithm, and should be viewed as only a minor technical distinction.
\begin{theorem}[DIANA]
\label{thm:diana}
Let $f \colon \R^d \to \R$ be $\mu$-strongly convex and $L$-smooth and assume in addition that each $f_i \colon \R^d \to \R$ is $L$-smooth and convex.
Then there exists a stepsize $\gamma \leq \frac{1}{2L(1+2\omega /n)}$ such that for $\alpha = \frac{1}{1+\omega}$, after at most
\begin{align}
T = \tilde \cO \rbr*{\frac{\sigma^2(1+\omega)}{\mu n \epsilon} + \omega + \frac{L(1+\omega/n)}{ \mu}} \label{eq:bound_diana}
\end{align}
iterations of Algorithm~\ref{alg:diana} it holds $\Ef f(\xx_{\rm out}) - f^\star \leq \epsilon$, where $\xx_{\rm out} = \xx_t$ denotes an iterate $\xx_t \in \{\xx_0,\dots,\xx_{T-1}\}$, chosen at random with probability proportional to $\rbr*{1-\min \cbr*{\mu \gamma,\frac{\alpha}{2}}}^{-t}$.
\end{theorem}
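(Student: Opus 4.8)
The plan is to analyze DIANA as stochastic gradient descent driven by the unbiased estimator $\gv_t := \hh_t + \frac1n\sum_{i=1}^n \hat\Delta_t^i = \frac1n\sum_{i=1}^n\rbr*{\hh_t^i + \cQ_\omega(\gg_t^i - \hh_t^i)}$, so that $\xx_{t+1} = \xx_t - \gamma\gv_t$ and, since $\cQ_\omega$ is unbiased by \eqref{def:quantizer}, $\E\gv_t = \nabla f(\xx_t)$ (expectation over quantizers and stochastic gradients, conditioned on $\xx_t$). The quantity that must be controlled is the variance of the auxiliary variables relative to the optimum, measured through $\hh_\star^i := \nabla f_i(\xx_\star)$ (note $\frac1n\sum_i\hh_\star^i = \nabla f(\xx_\star) = \0$). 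First I would bound the second moment of the estimator: using independence of quantizers and noise across the $n$ nodes together with \eqref{def:quantizer} and \eqref{def:noise},
\begin{align*}
\E\norm{\gv_t - \nabla f(\xx_t)}^2 \leq \frac{(1+\omega)\sigma^2}{n} + \frac{\omega}{n}\cdot\frac1n\sum_{i=1}^n\norm{\hh_t^i - \nabla f_i(\xx_t)}^2\,,
\end{align*}
and then, splitting $\norm{\hh_t^i - \nabla f_i(\xx_t)}^2 \leq 2\norm{\hh_t^i - \hh_\star^i}^2 + 2\norm{\nabla f_i(\xx_t) - \hh_\star^i}^2$ and using that each $f_i$ is smooth and convex (which gives $\frac1n\sum_i\norm{\nabla f_i(\xx_t) - \hh_\star^i}^2 \leq 2L(f(\xx_t) - f^\star)$ after averaging and cancelling the $\nabla f(\xx_\star)=\0$ term), convert the feedback into a function gap plus the tracked quantity $\frac1n\sum_i\norm{\hh_t^i - \hh_\star^i}^2$.

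Next I would derive the contraction for the auxiliary variables. Expanding $\hh_{t+1}^i - \hh_\star^i = (\hh_t^i - \hh_\star^i) + \alpha\hat\Delta_t^i$, taking expectations, and applying $\E\norm{\hat\Delta_t^i}^2 \leq (1+\omega)\E\norm{\gg_t^i - \hh_t^i}^2$, the choice $\alpha = \frac{1}{1+\omega}$ makes the term $\alpha^2(1+\omega)\norm{\nabla f_i(\xx_t)-\hh_t^i}^2$ exactly cancel the negative cross-term, so that
\begin{align*}
\frac1n\sum_{i=1}^n\E\norm{\hh_{t+1}^i - \hh_\star^i}^2 \leq (1-\alpha)\frac1n\sum_{i=1}^n\norm{\hh_t^i - \hh_\star^i}^2 + 2L\alpha\rbr*{f(\xx_t) - f^\star} + \alpha^2(1+\omega)\sigma^2\,,
\end{align*}
again invoking smoothness and convexity of the $f_i$ on the gradient-difference term.

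The heart of the argument is the Lyapunov function $\Psi_t := \norm{\xx_t - \xx_\star}^2 + c\cdot\frac1n\sum_{i=1}^n\norm{\hh_t^i - \hh_\star^i}^2$ with $c = \Theta\rbr*{\gamma^2\omega/(n\alpha)}$. Combining the standard strongly-convex descent bound (using \eqref{def:convex} to turn $\lin{\nabla f(\xx_t),\xx_t-\xx_\star}$ into $f(\xx_t)-f^\star + \frac\mu2\norm{\xx_t-\xx_\star}^2$, and \eqref{def:lsmooth} to bound $\norm{\nabla f(\xx_t)}^2 \leq 2L(f(\xx_t)-f^\star)$) with the two displays above, the restriction $\gamma \leq \frac{1}{2L(1+2\omega/n)}$ is precisely what forces the aggregate coefficient of $f(\xx_t)-f^\star$ to be negative. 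Tuning $c$ so that the auxiliary-variance feedback into the $\xx$-descent is absorbed and both blocks contract, I expect
\begin{align*}
\E\Psi_{t+1} \leq \rbr*{1 - \min\cbr*{\mu\gamma, \tfrac\alpha2}}\Psi_t - \tfrac\gamma2\rbr*{f(\xx_t)-f^\star} + \cO\rbr*{\frac{\gamma^2(1+\omega)\sigma^2}{n}}\,,
\end{align*}
which explains the contraction factor appearing in the sampling weights.

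Finally, I would rearrange to isolate $f(\xx_t)-f^\star$, multiply by the weights $\rbr*{1-\min\cbr*{\mu\gamma,\alpha/2}}^{-t}$, and telescope; the resulting weighted sum is exactly $\Ef[f(\xx_{\rm out})-f^\star]$ for the stated random output. A standard stepsize-tuning lemma (optimizing $\gamma$ subject to $\gamma \leq \frac{1}{2L(1+2\omega/n)}$ and capping the rate at $\alpha/2 = \Theta(1/(1+\omega))$) then yields the complexity \eqref{eq:bound_diana}, where $\frac{(1+\omega)\sigma^2}{\mu n\epsilon}$ comes from the additive noise floor, the $\omega$ term from the $\alpha/2$ cap on the rate, and $\frac{L(1+\omega/n)}{\mu}$ from the smoothness-limited stepsize. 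I expect the main obstacle to be the coupled tuning in the Lyapunov step: the auxiliary-variance recursion must generate enough negative $f$-gap (the $2L\alpha(f(\xx_t)-f^\star)$ term, scaled by $c$) to cancel the positive variance feedback $\frac{2\gamma^2\omega}{n}$ that the estimator injects into the $\xx$-descent, while still leaving a net negative $f$-gap; verifying that a single choice of $c$ together with the stated $\gamma$-bound achieves this simultaneously, and not merely up to unspecified constants, is the delicate part.
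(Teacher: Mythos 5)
Your proposal is correct and follows essentially the same route as the paper: the same unbiased estimator, the same variance bound converted into $F_t$ plus $H_t := \frac{1}{n}\sum_i\norm{\hh_t^i-\hh_\star^i}^2$ via smoothness and convexity of the $f_i$, the same $H$-recursion with $\alpha=\frac{1}{1+\omega}$, and the same Lyapunov function $X_t + \Theta(\gamma^2\omega/(\alpha n))H_t$ contracting at rate $\min\{\mu\gamma,\alpha/2\}$, followed by weighted telescoping and stepsize tuning. The "delicate" coefficient balancing you flag is resolved in the paper exactly as you anticipate, with $a=\frac{4\gamma^2\omega}{\alpha n}$ and $\gamma\leq\frac{1}{2L(1+8\omega/n)}$ yielding a net $-\frac{\gamma}{2}F_t$.
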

\begin{remark}
The data-dissimilarity $\zeta^2$ appears only in poly-logarithmic factors in the convergence rate~\eqref{eq:bound_diana} and is thus hidden in the $\tilde{\cO}(\cdot)$ notation.
\end{remark}

\subsection{D-EF-SGD with bias correction, Algorithm~\ref{alg:efnew}}
Whilst DIANA is much less affected by non-iid data than D-QSGD, it still suffers from the linear slow-down in $(1+\omega)$ in the presence of stochastic noise. In this section we show that by applying error-feedback we can obtain a new algorithm with the optimal $\cO \rbr[\big]{\frac{\sigma^2}{\mu n \epsilon}}$ dependence on $\sigma^2$.

D-EF-SGD (Algorithm~\ref{alg:ef}) maintains local error correction terms $\ee_t^i$ on each node $i \in [n]$, however, $\ee_t^i \not \to \0_d$ in general, even when $\xx_t \to \xx_\star$. This causes the appearance of the $\zeta$ term in the rate. 
{\update
Following~\cite[see Appendix~\ref{app:efsgdbias}]{Kovalev2019:personal}, we study 
error-feedback with bias correction}, that is, D-EF-SGD with compressor $\cC_\delta(\ee_t^i + \gg_t^i -\hh_t^i)$ instead, where $\hh_t^i$ is chosen as to converge to $\nabla f_i(\xx_\star)$.
The scheme is stated in Algorithm~\ref{alg:efnew}.

\begin{theorem}[D-EF-SGD with bias correction]
\label{thm:ef-convex}
Let $f \colon \R^d \to \R$ be $\mu$-strongly convex and $L$-smooth and assume in addition that each $f_i \colon \R^d \to \R$ is $L$-smooth and convex and $\beta \leq 1$. Then there exists a stepsize $\gamma \leq \frac{\delta}{32L}$  such that after at most
\begin{align*}
 T= \tilde \cO \rbr*{\frac{\sigma^2}{\mu n \epsilon} + \rbr*{\frac{\sigma^2 L(1-\delta)}{\mu^2 \delta \epsilon}}^{1/2}  + 
\rbr*{\frac{\sigma^2 \beta L(1-\delta)}{\mu^2 \delta^2 \epsilon}}^{1/2}  
 + \frac{1+\omega}{\beta} + \frac{L}{\mu \delta} } 
\end{align*}
iterations of Algorithm~\ref{alg:efnew} it holds $\Ef f(\xx_{\rm out}) - f^\star \leq \epsilon$, where $\xx_{\rm out} = \xx_t$ denotes an iterate $\xx_t \in \{\xx_0,\dots,\xx_{T-1}\}$, chosen at random with probability proportional to $\rbr*{1-\min\cbr*{\frac{\gamma \mu}{2}, \frac{\alpha}{2}, \frac{\delta}{4}}}^{-t}$.
\end{theorem}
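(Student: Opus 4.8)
The plan is to adapt the \emph{virtual-iterate} technique of the error-feedback analysis, combined with the DIANA-style control of the auxiliary variables $\hh_t^i$. The first step is to define the virtual sequence $\tilde\xx_t := \xx_t - \frac{\gamma}{n}\sum_{i=1}^n \ee_t^i$ and to verify that it performs plain mini-batch SGD. Substituting $\hat\Delta_t^i = \ee_t^i + \gg_t^i - \hh_t^i - \ee_{t+1}^i$ (read off from the error update) into the server step $\xx_{t+1} = \xx_t - \gamma\hh_t - \frac{\gamma}{n}\sum_i\hat\Delta_t^i$ and using $\hh_t=\frac1n\sum_i\hh_t^i$, all bias-correction terms cancel and one obtains $\tilde\xx_{t+1} = \tilde\xx_t - \frac{\gamma}{n}\sum_i\gg_t^i$. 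This is the crucial structural observation: the variables $\hh_t^i$ are invisible to the virtual dynamics and serve \emph{only} to shrink the compressor argument $\ee_t^i + \gg_t^i - \hh_t^i$, which is what ultimately removes the $\zeta^2$ dependence.

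With $\E[\frac1n\sum_i\gg_t^i \mid \cF_t] = \nabla f(\xx_t)$ and variance $\le\sigma^2/n$ by \eqref{def:noise}, I would expand $\E\|\tilde\xx_{t+1}-\xx_\star\|^2$, split the inner product as $\lin{\nabla f(\xx_t),\xx_t-\xx_\star}+\lin{\nabla f(\xx_t),\tilde\xx_t-\xx_t}$, bound the first below by $f(\xx_t)-f^\star+\frac\mu2\|\xx_t-\xx_\star\|^2$ via \eqref{def:convex}, and control the perturbation $\tilde\xx_t-\xx_t=-\frac\gamma n\sum_i\ee_t^i$ (so $\|\tilde\xx_t-\xx_t\|^2\le\gamma^2 E_t$, $E_t:=\frac1n\sum_i\|\ee_t^i\|^2$) by Young's inequality together with $\|\nabla f(\xx_t)\|^2\le 2L(f(\xx_t)-f^\star)$. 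For the error, \eqref{def:compressor} gives $\|\ee_{t+1}^i\|^2\le(1-\delta)\|\ee_t^i+\gg_t^i-\hh_t^i\|^2$, and a Young split with parameter $\Theta(\delta)$ yields $E_{t+1}\le(1-\tfrac\delta2)E_t+\Theta(\frac{1-\delta}{\delta})\cdot\frac1n\sum_i\E\|\gg_t^i-\hh_t^i\|^2$. The decisive point is $\gg_t^i-\hh_t^i=(\nabla f_i(\xx_t)-\nabla f_i(\xx_\star))-(\hh_t^i-\nabla f_i(\xx_\star))+\bxi_t^i$, so the feeding term splits into $f(\xx_t)-f^\star$ (via $\frac1n\sum_i\|\nabla f_i(\xx_t)-\nabla f_i(\xx_\star)\|^2\le 2L(f(\xx_t)-f^\star)$, where the linear terms vanish since $\frac1n\sum_i\nabla f_i(\xx_\star)=\0_d$), the DIANA deviation $H_t:=\frac1n\sum_i\|\hh_t^i-\nabla f_i(\xx_\star)\|^2$, and $\sigma^2$ -- with \emph{no standalone} $\zeta^2$. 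For the auxiliary variables I expand $\|\hh_{t+1}^i-\nabla f_i(\xx_\star)\|^2$ using unbiasedness and \eqref{def:quantizer}; writing the expected update as a convex combination $(1-\alpha)(\hh_t^i-\nabla f_i(\xx_\star))+\alpha(\nabla f_i(\xx_t)-\nabla f_i(\xx_\star))$ and using $\alpha\le\beta/(1+\omega)$ gives $H_{t+1}\le(1-\Theta(\alpha))H_t+\Theta(\alpha L)(f(\xx_t)-f^\star)+\Theta(\alpha\beta)\sigma^2$, where the $\alpha\beta$ noise factor (from $\alpha^2(1+\omega)\le\alpha\beta$) is exactly what produces the $\beta$ in the third term of the stated rate.

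The final step is to assemble the Lyapunov function $\Psi_t:=\|\tilde\xx_t-\xx_\star\|^2 + A E_t + B H_t$ and to choose $A,B$, the stepsize $\gamma\le\delta/(32L)$, and $\beta$ so that: (i) the $E_t,H_t$ terms generated by the descent step, and the $H_t$ term injected into $E_{t+1}$ through $\|\nabla f_i(\xx_t)-\hh_t^i\|^2$, are dominated by the contraction slacks $\Theta(\delta)A$ and $\Theta(\alpha)B$; and (ii) the $f(\xx_t)-f^\star$ terms fed into both auxiliary recursions are reabsorbed by the $\Theta(\gamma)(f(\xx_t)-f^\star)$ primal descent, using $\gamma L\le\delta/32$. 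This yields $\E\Psi_{t+1}\le(1-\rho)\Psi_t-\tfrac\gamma2\E[f(\xx_t)-f^\star]+\gamma^2\sigma^2/n+\gamma^3 D$ with $\rho=\min\{\tfrac{\mu\gamma}2,\tfrac\alpha2,\tfrac\delta4\}$, the three contraction rates matching the three arguments of the $\min$ in the sampling distribution. Unrolling with the stated exponentially weighted iterate choice (a Stich-type tuning lemma) and optimizing $\gamma$ converts the $\gamma^2\sigma^2/n$ floor into $\frac{\sigma^2}{\mu n\epsilon}$, the $\gamma^3 D$ floor into the two square-root terms, and the contraction rates into the transient terms $\frac{1+\omega}\beta$ and $\frac{L}{\mu\delta}$.

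\textbf{The main obstacle} I anticipate is obtaining the \emph{tight} dependence on $\delta$ and $\beta$ in the two square-root terms rather than merely some polynomial in $1/\delta$. A naive perturbed-iterate descent lemma loses a factor of roughly $\sqrt{L/\delta}$; matching the stated $\delta^{-1/2}$ (second term) and $\beta^{1/2}\delta^{-1}$ (third term) requires the refined error-feedback bookkeeping together with careful tracking of how the auxiliary noise $\Theta(\alpha\beta)\sigma^2$ is amplified by the error recursion's $\Theta((1-\delta)/\delta)$ factor. Concretely, the weights $A,B$ must be tuned to the \emph{smallest} values that still close the coupled contractions, since it is precisely these minimal weights that fix the coefficients of the $\gamma^3$ noise floor; getting the constants to close simultaneously for all three quantities -- while respecting $\alpha\le\beta/(1+\omega)$ and $\gamma\le\delta/(32L)$ -- is the delicate part.
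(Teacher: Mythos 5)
Your proposal follows essentially the same route as the paper: the virtual sequence $\tilde\xx_{t+1}=\tilde\xx_t-\frac{\gamma}{n}\sum_i\gg_t^i$ with the perturbed-iterate descent lemma, the error recursion re-centered at $\hh_\star^i=\nabla f_i(\xx_\star)$ so that $F_t$ and $H_t$ replace $\zeta^2$, the DIANA recursion for $H_t$ with the $\alpha^2(1+\omega)\le\alpha\beta$ tightening that produces the $\beta$-dependent noise term, and the Lyapunov function $X_t+aE_t+bH_t$ contracted at rate $\min\{\gamma\mu/2,\alpha/2,\delta/4\}$ followed by the standard tuning lemma. This matches Lemmas~\ref{lemma:X}--\ref{lemma:convex} and the concluding application of Lemma~\ref{lemma:convergence-strong-convex}, so the attempt is correct and not materially different from the paper's proof.
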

\begin{remark}
When $\sigma^2 > 0$ or when $(1+\omega)\leq\frac{L}{\mu}$, then the choice $\beta = \delta$ gives asymptotically the best complexity. When $\sigma^2=0$, choosing $\beta = 1$ gives the best linear convergence. In Table~\ref{tab:results} we list the result for the choice $\beta=\delta$, as we mostly focus on noisy stochastic problems in our discussion.
\end{remark}

\subsection{Discussion}
\paragraph{Linear convergence without stochastic noise.}
Without stochastic noise ($\sigma^2=0$), both algorithms presented in this section converge linearly on strongly-convex problems. 
For comparable choices of $\delta \approx \frac{1}{1 +\omega}$, and $Z=1$, the linear convergence rate of DIANA is better as the method can benefit from mini-batching effects. The speedup in $n$ in the $\tilde \cO \rbr[\big]{\frac{L(1+\omega/n}{\mu}}$ term stems from the fact that the quanitzation operators are independent on each node. In contrast, biased compressors cannot benefit from such effects and the $\tilde \cO \rbr[\big]{\frac{L}{\mu \delta}}$ term has  the best possible dependence on the compression parameter $\delta$ that cannot be improved in general~\cite[cf.][]{StichK19delays}. Both algorithms depend linearly on the condition number $\frac{L}{\mu}$. This dependence could be improved with acceleration techniques~\cite[cf.][]{lin15:catalyst}.

\paragraph{Dependence on data-dissimilarity.}
Whilst the convergence results on strongly-convex functions show that both DIANA and bias corrected D-EF-SGD only depend polylogarithmic on the data-dissimilarity parameter $\zeta_\star$, a closer inspection of the results in Table~\ref{tab:results} reveals that unfortunately both methods still depend on $\zeta$ without the convexity assumptions.

We conjecture that some partial improvements can obtained for non-convex problems, for instance by extending the analysis to non-convex problems with additional P\L{} condition. However, the current results seem to indicate that a fundamental different technique is required to remove the dependence on the data-dissimilarity parameter $\zeta$ from the convergence rates entirely.

\subsection{Convergence Proof for Bias Corrected D-EF-SGD}
Algorithm~\ref{alg:efnew} is a combination of D-EF-SGD with a feature of DIANA, and a convergence proof can be derived from techniques and tools developed in earlier work~\cite{StichK19delays,Horvath2019:vr}.
As a technical novelty, we here present a novel proof technique for general error-feedback SGD algorithms by introducing a Lyapunov function instead of the unrolling technique used in~\cite{StichK19delays}. Moreover, we also need a slight strengthening of one of the lemmas in~\cite{Horvath2019:vr} to show that the choice $\beta < 1$ gives an improvement in the convergence rate.

We give the convergence proof for the strongly convex case in the main text, all other proofs are given in the appendix. Define $X_t := \Ef \norm{\tilde \xx_t - \xx^\star}^2$. $F_t := \Ef f(\xx_t) - f^\star$. $E_t := \frac{1}{n} \sum_{i \in [n]} \Ef \norm{\ee_t^i}^2$, and $H_t := \frac{1}{n} \sum_{i \in [n]} \Ef \norm{\hh_t^i - \hh_\star^i}^2$ for $\hh_\star^i:=\nabla f_i(\xx_\star)$, $\hh_\star := \nabla f(\xx_\star)$ and the virtual sequence
\begin{align}
 \tilde \xx_0 &:= \xx_0\,, &  \tilde\xx_{t+1} &:= \tilde\xx_t - \frac{\gamma}{n} \sum_{i \in [n]} \gg_t^i \,.  \label{def:virtual}
\end{align}
We note:
\begin{align}
  \xx_{t+1} - \tilde \xx_{t+1} = \frac{\gamma}{n} \sum_{i \in [n]} \ee_{t+1}^i \,. \label{eq:error}
\end{align}

\paragraph{A decent lemma for convex functions.} 
First, we borrow a standard lemma for the analysis of error-feedback algorithms (for the proof see also Lemma~\ref{lemma:Xn} in the appendix).

\begin{lemma}[{\citet[Lemma~7]{StichK19delays}}]\label{lemma:X}
Let $f$ be $L$-smooth and $\mu$-convex. If the stepsize $\gamma \leq \frac{1}{4L}$, then it holds for the iterates of Algorithms~\ref{alg:ef} and~\ref{alg:efnew}:
\begin{align}
  X_{t+1} &\leq \left(1-\frac{\gamma \mu}{2} \right)  X_t - \frac{\gamma}{2}  F_t +  \frac{\gamma^2\sigma^2}{n} + 3L\gamma^3  E_t \,. \label{eq:X}
\end{align}
\end{lemma}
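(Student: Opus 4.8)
The plan is to run the standard \emph{virtual-sequence} argument on $\tilde\xx_t$ from \eqref{def:virtual}, which by construction obeys the clean (uncompressed) mini-batch SGD recursion $\tilde\xx_{t+1} = \tilde\xx_t - \frac{\gamma}{n}\sum_i \gg_t^i$. The first thing I would confirm is that the identity \eqref{eq:error}, equivalently $\xx_t - \tilde\xx_t = \frac{\gamma}{n}\sum_i \ee_t^i$, holds for \emph{both} Algorithm~\ref{alg:ef} and Algorithm~\ref{alg:efnew}. This follows by induction from $\ee_0^i=\0_d$: in each algorithm the quantity effectively subtracted by the server (namely $\hat\Delta_t^i$, together with the $\hh_t^i$ contribution in Algorithm~\ref{alg:efnew}) differs from $\gg_t^i$ exactly by $\ee_t^i - \ee_{t+1}^i$, so the server update minus the idealized update telescopes into the error terms. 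This is precisely what lets one lemma cover both schemes.

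Next I would expand $\norm{\tilde\xx_{t+1} - \xx_\star}^2 = \norm{\tilde\xx_t - \xx_\star}^2 - 2\gamma\lin{\gg_t, \tilde\xx_t - \xx_\star} + \gamma^2\norm{\gg_t}^2$ with $\gg_t := \frac{1}{n}\sum_i \gg_t^i$, and take the conditional expectation over the step-$t$ noise given $\xx_t$. Using unbiasedness of the noise in \eqref{def:noise} and that it is independent across nodes (so the averaged noise has variance $\sigma^2/n$), the cross term becomes $\lin{\nabla f(\xx_t), \tilde\xx_t - \xx_\star}$ and the quadratic term is bounded by $\norm{\nabla f(\xx_t)}^2 + \frac{\sigma^2}{n}$, which already yields the additive $\frac{\gamma^2\sigma^2}{n}$ in \eqref{eq:X}.

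The crux is the cross term, where the gradient sits at the true iterate $\xx_t$ but is paired with the virtual displacement $\tilde\xx_t - \xx_\star$. I would split $\tilde\xx_t - \xx_\star = (\xx_t - \xx_\star) + (\tilde\xx_t - \xx_t)$. On $\lin{\nabla f(\xx_t), \xx_t - \xx_\star}$ I apply $\mu$-strong convexity \eqref{def:convex} to extract $f(\xx_t) - f^\star + \frac{\mu}{2}\norm{\xx_t - \xx_\star}^2$, and then convert the distance to the virtual one via $\norm{\xx_t-\xx_\star}^2 \geq \tfrac12\norm{\tilde\xx_t-\xx_\star}^2 - \norm{\tilde\xx_t - \xx_t}^2$. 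On the leftover $\lin{\nabla f(\xx_t), \tilde\xx_t - \xx_t}$ I use Young's inequality with a free parameter $a$, and bound $\norm{\nabla f(\xx_t)}^2 \leq 2L(f(\xx_t)-f^\star)$ by smoothness, so that the Young contribution and the variance contribution both feed into the $F_t$-budget. Finally, taking full expectations and using $\E\norm{\tilde\xx_t - \xx_t}^2 \leq \gamma^2 E_t$ (Jensen applied to \eqref{eq:error}) turns every $\norm{\tilde\xx_t - \xx_t}^2$ into $\gamma^2 E_t$.

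The main obstacle is purely a bookkeeping optimization of constants. All error contributions --- $\gamma\mu\norm{\tilde\xx_t-\xx_t}^2$ from the distance conversion and $\frac{\gamma}{a}\norm{\tilde\xx_t-\xx_t}^2$ from Young --- must stay within the tight budget $3L\gamma^3 E_t$, while the gradient-norm contributions ($\gamma^2$ from the variance and $\gamma a$ from Young, each times $2L$) must leave the net $F_t$-coefficient at most $-\frac{\gamma}{2}$. Tracking both constraints shows that $a = \frac{1}{2L}$ is exactly the sweet spot: with $\gamma \leq \frac{1}{4L}$ the gradient terms contribute $2L\gamma^2 + \gamma$, so the $F_t$-coefficient is $-\gamma + 2L\gamma^2 \leq -\frac{\gamma}{2}$, and the error terms contribute $(\mu + 2L)\gamma \leq 3L\gamma$ (using $\mu \leq L$), giving precisely $3L\gamma^3 E_t$ after the Jensen step. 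Nothing beyond \eqref{def:convex}, \eqref{def:noise}, \eqref{eq:error} and the standard smoothness inequality $\norm{\nabla f(\xx_t)}^2 \leq 2L(f(\xx_t)-f^\star)$ is required.
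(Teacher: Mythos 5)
Your proposal is correct and follows essentially the same route as the paper: the paper's own proof (Lemma~\ref{lemma:Xn} in the appendix) simply invokes the cited result of \citet{StichK19delays} after noting the two key facts you establish — that the virtual update is an unbiased gradient estimator with variance $\sigma^2/n$, and that $\norm{\tilde\xx_t - \xx_t}^2 \leq \frac{\gamma^2}{n}\sum_{i\in[n]}\norm{\ee_t^i}^2$ via \eqref{eq:error} and \eqref{eq:norm}. Your write-up just carries out in full the perturbed-iterate expansion (strong convexity on the cross term, Young's inequality with $a = \frac{1}{2L}$, smoothness to absorb $\norm{\nabla f(\xx_t)}^2$ into the $F_t$ budget) that the citation encapsulates, and the constants check out.
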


\paragraph{Bound on the error.} Next, we derive a recursive bound on $E_t$.
\begin{lemma}\label{lemma:E}
 It holds
\begin{align}
  E_{t+1} \leq \rbr*{1-\frac{\delta}{2}}  E_t +  \frac{4(1-\delta)}{\delta} \rbr*{2 L F_t +  H_t}    + (1-\delta) \sigma^2 \,. \label{eq:E}
\end{align}
\end{lemma}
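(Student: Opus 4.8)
The plan is to track the per-node error recursion from Algorithm~\ref{alg:efnew} and turn the $\delta$-compressor contraction into a recursive bound for $E_t$. Writing $\vv_t^i := \ee_t^i + \gg_t^i - \hh_t^i$, line~\ref{ln:diff1} together with the error update $\ee_{t+1}^i := \ee_t^i + \gg_t^i - \hh_t^i - \hat\Delta_t^i$ gives $\ee_{t+1}^i = \vv_t^i - \cC_\delta(\vv_t^i)$, so the defining property~\eqref{def:compressor} of a $\delta$-compressor yields, after conditioning on everything up to the draw of $\cC_\delta$,
\begin{align*}
 \E_{\cC_\delta}\norm{\ee_{t+1}^i}^2 \leq (1-\delta)\norm{\ee_t^i + \gg_t^i - \hh_t^i}^2\,.
\end{align*}
First I would peel off the stochastic noise: since $\gg_t^i = \nabla f_i(\xx_t) + \bxi^i$ with $\bxi^i$ zero-mean and independent of $\ee_t^i,\hh_t^i,\xx_t$, taking $\E_{\bxi^i}$ splits off an additive $\sigma^2$, leaving $\norm{\ee_t^i + \nabla f_i(\xx_t) - \hh_t^i}^2 + \sigma^2$ inside the bound.

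Next I would apply Young's inequality $\norm{\aa+\bb}^2 \leq (1+a)\norm{\aa}^2 + (1+a^{-1})\norm{\bb}^2$ with $\aa = \ee_t^i$ and $\bb = \nabla f_i(\xx_t) - \hh_t^i$, choosing $a = \frac{\delta/2}{1-\delta}$ so that $(1-\delta)(1+a) = 1-\frac{\delta}{2}$; this produces exactly the desired contraction factor on $E_t$ after averaging over $i$, while $(1-\delta)(1+a^{-1}) = (1-\delta)\frac{2-\delta}{\delta} \leq \frac{2(1-\delta)}{\delta}$ multiplies the $\bb$-term. It then remains to control $\frac1n\sum_i \Ef \norm{\nabla f_i(\xx_t) - \hh_t^i}^2$. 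Inserting $\hh_\star^i = \nabla f_i(\xx_\star)$ and using $\norm{\uu+\vv}^2 \leq 2\norm{\uu}^2 + 2\norm{\vv}^2$ splits this into $\frac2n\sum_i\Ef\norm{\nabla f_i(\xx_t)-\nabla f_i(\xx_\star)}^2$ and $2H_t$.

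The crux of the argument---and the step that makes the additive $\zeta^2$ constant disappear---is bounding the first of these two terms, and I expect this to be the main obstacle. Here I would invoke that each $f_i$ is $L$-smooth and convex, which gives the co-coercivity estimate $\norm{\nabla f_i(\xx_t)-\nabla f_i(\xx_\star)}^2 \leq 2L\big(f_i(\xx_t)-f_i(\xx_\star)-\lin{\nabla f_i(\xx_\star),\xx_t-\xx_\star}\big)$. Averaging over $i$, the function terms assemble into $f(\xx_t)-f^\star = F_t$ and the linear terms assemble into $\lin{\nabla f(\xx_\star),\xx_t-\xx_\star} = 0$, since $\xx_\star$ minimizes $f$; hence $\frac1n\sum_i\Ef\norm{\nabla f_i(\xx_t)-\nabla f_i(\xx_\star)}^2 \leq 2LF_t$. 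Combining, the bracketed term is at most $4LF_t + 2H_t$, and multiplying by $\frac{2(1-\delta)}{\delta}$ gives precisely $\frac{4(1-\delta)}{\delta}(2LF_t+H_t)$, which together with the $(1-\delta)\sigma^2$ remainder yields~\eqref{eq:E}. This final cancellation relies essentially on the bias-correction design (so that the error is measured relative to $\nabla f_i(\xx_\star)$ rather than $\0$) combined with the per-node smoothness and convexity assumptions; without them one would only recover an additive $\zeta^2$ term, as in the plain D-EF-SGD analysis.
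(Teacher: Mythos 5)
Your proof is correct and follows essentially the same route as the paper: apply the $\delta$-compressor contraction, split off the noise via the bias--variance identity, use Young's inequality with exactly the paper's choice $\eta=\frac{\delta}{2(1-\delta)}$ to get the $(1-\delta/2)$ contraction, and then bound $\norm{\nabla f_i(\xx_t)-\hh_t^i}^2$ by inserting $\hh_\star^i$ and invoking co-coercivity of each $f_i$, with the linear terms vanishing on averaging since $\nabla f(\xx_\star)=\0_d$. The only (immaterial) difference is that you write the co-coercivity inequality with the standard sign on the inner-product term, whereas the paper's stated version~\eqref{eq:smoothL} carries the opposite sign; both collapse to $2LF_t$ after averaging.
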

\begin{proof}
By using the definition $\ee_{t+1}^i = \ee_t^i + \gg_t^i - \hh_t^i - \hat \Delta_t^i$, we obtain:
\begin{align}
 \E_{\bxi_t^i,\cC_\delta} \norm{\ee_{t+1}^i}^2 &= \E_{\bxi_t^i,\cC_\delta} \norm{\ee_t^i + \gg_t^i - \hh_t^i - \hat \Delta_t^i}^2 \notag \\
 & \stackrel{\eqref{def:compressor}}{\leq} (1-\delta) \E_{\bxi_t^i} \norm{\ee_{t}^i + \gg_t^i - \hh_t^i}^2 \notag \\ 
 &\stackrel{\eqref{def:noise}}{=} (1-\delta) \E_{\bxi_t} \norm{\ee_{t}^i + \nabla f_i(\xx_t) + \bxi_t^i - \hh_t^i}^2 \notag  \\ &\stackrel{\eqref{eq:biasvariance}}{=} (1-\delta) \norm{\ee_{t}^i + \nabla f_i(\xx_t) - \hh_t^i}^2 + (1-\delta)\E_{\bxi_t^i} \norm{\bxi_t^i}^2 \notag \\
 &\stackrel{\eqref{def:noise}}{\leq } (1-\delta) \norm{\ee_{t}^i + \nabla f_i(\xx_t) - \hh_t^i}^2 + (1-\delta)\sigma^2  \notag \\
 &\stackrel{\eqref{eq:trick}}{\leq} (1-\delta/2) \norm{\ee_t^i}^2 + \frac{2(1-\delta)}{\delta}\norm{\nabla f_i(\xx_t) - \hh_t^i}^2 + (1-\delta)\sigma^2\,. \label{eq:Econtinue}
\end{align}
Using smoothness (and convexity) of $f_i(\xx)$, we observe
\begin{align*}
 \norm{\nabla f_i(\xx_t) - \hh_t^i}^2 &\stackrel{\eqref{eq:young}}{\leq} 2\norm{\nabla f_i(\xx_t) - \hh_\star^i}^2 + 2\norm{\hh_t^i - \hh_\star^i}^2 \\
 &\stackrel{\eqref{eq:smoothL}}{\leq} 4L \rbr*{ f_i(\xx_t) - f_i(\xx^\star) + \lin{\nabla f_i(\xx^\star),\xx_t -\xx^\star} } + 2\norm{\hh_t^i - \hh_\star^i}^2
\end{align*}
The claim now follows by summing and averaging over $i \in [n]$.
\end{proof}

\paragraph{Estimate $H$.}
The next lemma tightens \cite[Lemma 2]{Horvath2019:vr} (with $\alpha^2$ instead of only $\alpha$ in the last term).
\begin{lemma}\label{lemma:Hunbiased}
Let $\hh_t^i$ be updated with an unbiased quantizer $\cQ_\omega$, $\alpha \leq \frac{1}{\omega + 1}$, and stepsize $\alpha$. Then
\begin{align}
  H_{t+1} \leq (1-\alpha ) H_t + 2 \alpha  L  F_t + \alpha^2 (1+\omega) \sigma^2\,. \label{eq:Hunbiased}
\end{align}
\end{lemma}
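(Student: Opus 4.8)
The plan is to control the one-step change of $\hh^i_t$ directly from its update rule $\hh^i_{t+1} = \hh^i_t + \alpha\, \cQ_\omega(\gg^i_t - \hh^i_t)$ in Algorithm~\ref{alg:efnew}, expanding the square and peeling off the two independent sources of randomness---the quantizer $\cQ_\omega$ and the stochastic noise $\bxi^i_t$---one at a time. Writing $\aa := \hh^i_t - \hh^i_\star$ and $\bb := \gg^i_t - \hh^i_t$, we have $\hh^i_{t+1} - \hh^i_\star = \aa + \alpha\, \cQ_\omega(\bb)$. Expanding and taking the conditional expectation over $\cQ_\omega$ first, unbiasedness and the second-moment bound in~\eqref{def:quantizer} give
\begin{align*}
  \E_{\cQ_\omega}\norm{\aa + \alpha \cQ_\omega(\bb)}^2 = \norm{\aa}^2 + 2\alpha \lin{\aa, \bb} + \alpha^2 \E_{\cQ_\omega}\norm{\cQ_\omega(\bb)}^2 \leq \norm{\aa}^2 + 2\alpha \lin{\aa,\bb} + \alpha^2(1+\omega)\norm{\bb}^2\,.
\end{align*}

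Next I would take the expectation over the noise. Since $\bb = \nabla f_i(\xx_t) - \hh^i_t + \bxi^i_t$ and $\aa$ is $\bxi^i_t$-independent, writing $\bar\bb := \nabla f_i(\xx_t) - \hh^i_t$ and using~\eqref{def:noise} together with the bias--variance split~\eqref{eq:biasvariance} yields $\E_{\bxi}\lin{\aa,\bb} = \lin{\aa, \bar\bb}$ and $\E_{\bxi}\norm{\bb}^2 \leq \norm{\bar\bb}^2 + \sigma^2$. This is exactly where the claimed strengthening over~\cite[Lemma~2]{Horvath2019:vr} appears: the noise enters \emph{only} through the quantizer's second moment, so it is already multiplied by $\alpha^2(1+\omega)$, and I would deliberately leave this factor untouched rather than loosen it to $\alpha$. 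It then remains to bound $\norm{\aa}^2 + 2\alpha\lin{\aa,\bar\bb} + \alpha^2(1+\omega)\norm{\bar\bb}^2$. Here, and only here, I would invoke the stepsize condition $\alpha(1+\omega)\leq 1$ to replace the coefficient $\alpha^2(1+\omega)$ by $\alpha$, and then complete the square via $2\alpha\lin{\aa,\bar\bb} + \alpha\norm{\bar\bb}^2 = \alpha\big(\norm{\aa+\bar\bb}^2 - \norm{\aa}^2\big)$, which collapses the bound to $(1-\alpha)\norm{\aa}^2 + \alpha\norm{\aa + \bar\bb}^2$. Since $\aa + \bar\bb = \nabla f_i(\xx_t) - \nabla f_i(\xx_\star)$, this equals $(1-\alpha)\norm{\hh^i_t - \hh^i_\star}^2 + \alpha\norm{\nabla f_i(\xx_t) - \nabla f_i(\xx_\star)}^2$, plus the retained term $\alpha^2(1+\omega)\sigma^2$.

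Finally I would average over $i \in [n]$ and convert the gradient-difference term into a function-value gap. Convexity and $L$-smoothness of each $f_i$ give $\norm{\nabla f_i(\xx_t) - \nabla f_i(\xx_\star)}^2 \leq 2L\big(f_i(\xx_t) - f_i(\xx_\star) - \lin{\nabla f_i(\xx_\star), \xx_t - \xx_\star}\big)$ through~\eqref{eq:smoothL}; averaging over $i$ and using $\frac1n\sum_i \nabla f_i(\xx_\star) = \nabla f(\xx_\star) = \0_d$ annihilates the linear term and leaves $\frac1n\sum_i \norm{\nabla f_i(\xx_t) - \nabla f_i(\xx_\star)}^2 \leq 2L\big(f(\xx_t) - f^\star\big)$. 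Taking the full expectation $\Ef[\cdot]$ then turns $\frac1n\sum_i\norm{\hh^i_t - \hh^i_\star}^2$ into $H_t$ and $f(\xx_t) - f^\star$ into $F_t$, producing the claimed recursion~\eqref{eq:Hunbiased}. The only genuinely delicate point is the bookkeeping around $\alpha \leq 1/(1+\omega)$: this bound must be spent on the $\norm{\bar\bb}^2$ term (to fold it into the contraction factor $1-\alpha$) but \emph{not} on the $\sigma^2$ term, since preserving the $\alpha^2(1+\omega)$ weight there is precisely what makes this lemma tighter, so I would verify that no slack is introduced at that step.
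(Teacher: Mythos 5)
Your proposal is correct and takes essentially the same route as the paper's proof: expand the square, apply the quantizer's unbiasedness and second-moment bound, split off the noise with the bias--variance identity~\eqref{eq:biasvariance}, spend $\alpha(1+\omega)\leq 1$ only on the deterministic part before completing the square via $2\lin{\aa,\bar\bb}+\norm{\bar\bb}^2=\norm{\aa+\bar\bb}^2-\norm{\aa}^2$, and finish by averaging over $i$ and invoking smoothness and convexity of each $f_i$ (the inner-product terms cancel since $\nabla f(\xx_\star)=\0_d$). You also correctly pinpoint that retaining the $\alpha^2(1+\omega)$ weight on the $\sigma^2$ term, rather than loosening it to $\alpha$, is exactly the stated tightening over \cite[Lemma~2]{Horvath2019:vr}.
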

\begin{proof}
Closely following~\cite{Horvath2019:vr} we observe
\begin{align}
 \E_{\bxi_t^i, \cQ_\omega} \norm{\hh_{t+1}^i - \hh_\star^i}^2 
 &= \norm{\hh_t^i - \hh_\star^i}^2 + 2\alpha \lin{ \E_{\bxi_t^i, \cQ_\omega}  \Delta_t^i, \hh_t^i -\hh_\star^i} + \alpha^2 \E_{\bxi_t^i, \cQ_\omega} \norm{ \Delta_t^i}^2 \notag \\
 &\stackrel{\eqref{def:quantizer}}{\leq} \norm{\hh_t^i - \hh_\star^i}^2 + 2\alpha \lin{\nabla f_i(\xx_t)-\hh_t^i, \hh_t^i -\hh_\star^i} + \alpha^2 (1+\omega) \E_{\bxi_t^i} \norm{\gg_t^i - \hh_t^i}^2  \notag \\
 \begin{split}
 &\stackrel{\eqref{eq:biasvariance}}{=} \norm{\hh_t^i - \hh_\star^i}^2 + 2\alpha \lin{\nabla f_i(\xx_t)-\hh_t^i, \hh_t^i -\hh_\star^i} \\ &\qquad + \alpha^2 (1+\omega) \rbr*{\norm{\nabla f(\xx_t) -\hh_t^i}^2 + \E_{\bxi_t^i}\norm{\bxi_t^i}}  
 \end{split}  \notag \\
 &\stackrel{\eqref{def:noise}}{\leq} \norm{\hh_t^i - \hh_\star^i}^2 + 2\alpha \lin{\nabla f_i(\xx_t)-\hh_t^i, \hh_t^i -\hh_\star^i} + \alpha \norm{\nabla f(\xx_t)- \hh_t^i}^2 + \alpha^2 (1+\omega) \sigma^2  \notag \\
 &=(1-\alpha)\norm{\hh_t^i - \hh_\star^i}^2 + \alpha \norm{\nabla f_i(\xx_t) -\hh_\star^i}^2 + \alpha^2 (1+\omega) \sigma^2 \label{eq:Hcontinue}   
\end{align}
where we used the equality $2\lin{\aa,\bb} + \norm{\bb}^2 = \norm{\aa + \bb}^2 - \norm{\aa}^2$ for vectors $\aa,\bb \in \R^d$ for the last estimate.
The claim follows with~\eqref{eq:g-h}.
\end{proof}

\noindent We can summarize the statements of Lemmas~\ref{lemma:X}--\ref{lemma:Hunbiased} in the following descent lemma.
\begin{lemma}[Lyapunov function]
\label{lemma:convex}
Let $f$ be $L$-smooth, $\mu$-convex and each $f_i \colon \R^d \to \R$ convex and $L$-smooth, the stepsize $\gamma \leq \frac{\delta}{34L}$ and $\alpha = \frac{\beta}{1+\omega}$ with a parameter $\beta \leq 1$. Then
\begin{align}
 \Psi_{t+1} \leq (1-c)\Psi_t - \gamma \frac{F_t}{4} + \gamma^2 \frac{ \sigma^2}{n} + \gamma^3 \frac{4 L(1-\delta)\sigma^2}{\delta} + \gamma^3 \frac{32 \beta L (1-\delta)\sigma^2}{\delta^2} \,,
\end{align}
for $\Psi_t := X_t + a E_t + b H_t$ with $a = \frac{12 \gamma^3 L}{\delta}$ and $b = \frac{8a (1-\delta)}{\alpha \delta}$ and $c= \min\cbr*{\frac{\gamma \mu}{2}, \frac{\alpha}{2}, \frac{\delta}{4}}$.
\end{lemma}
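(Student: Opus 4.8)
The plan is to form the Lyapunov function $\Psi_t = X_t + a E_t + b H_t$ and simply add the three one-step recursions~\eqref{eq:X}, \eqref{eq:E}, and~\eqref{eq:Hunbiased} with weights $1$, $a$, and $b$ respectively. After substitution, $\Psi_{t+1}$ is bounded by a linear combination of $X_t$, $E_t$, $H_t$, the suboptimality $F_t$, and three noise terms. The whole argument then reduces to choosing $a$, $b$, and the contraction factor $c$ so that (i) each of $X_t$, $E_t$, $H_t$ is multiplied by at most $(1-c)$ times its own weight in $\Psi_t$, and (ii) the net coefficient of $F_t$ is at most $-\gamma/4$, leaving a clean descent term.

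First I would collect coefficients. The coefficient of $X_t$ is already $(1-\gamma\mu/2)$ from~\eqref{eq:X}, which is $\le (1-c)$ because $c \le \gamma\mu/2$. For $E_t$, Lemma~\ref{lemma:X} leaks a stray $3L\gamma^3 E_t$ on top of the $a(1-\delta/2)E_t$ coming from~\eqref{eq:E}; taking $a$ of order $\gamma^3 L/\delta$ lets the $\delta/2$-contraction of $E$ absorb this leak and drive the $E_t$ coefficient down to $(1-\delta/4)a \le (1-c)a$, which pins $a = \tfrac{12\gamma^3 L}{\delta}$. Next, weighting~\eqref{eq:E} by $a$ feeds a term $\tfrac{4(1-\delta)}{\delta}a\,H_t$ into the recursion; choosing $b = \tfrac{8a(1-\delta)}{\alpha\delta}$ lets the $(1-\alpha)$-contraction of $H$ in~\eqref{eq:Hunbiased} swallow it, pushing the $H_t$ coefficient to $(1-\alpha/2)b \le (1-c)b$. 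The order matters here: $a$ is fixed by the $E$-recursion first, and $b$ depends on $a$. Taking $c = \min\{\gamma\mu/2,\,\alpha/2,\,\delta/4\}$ makes all three state-contraction conditions hold simultaneously, the three entries of the minimum corresponding exactly to the $X$-, $H$-, and $E$-recursions.

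The crux is the coefficient of $F_t$. It starts at $-\gamma/2$ from~\eqref{eq:X}, and then $a$ times the factor $\tfrac{8L(1-\delta)}{\delta}$ in~\eqref{eq:E} together with $b$ times the factor $2\alpha L$ in~\eqref{eq:Hunbiased} add back two positive contributions, each of order $\gamma^3 L^2/\delta^2$ after substituting $a$, $b$ and using $\alpha(1+\omega)=\beta\le 1$. Requiring their sum to be at most $\gamma/4$ is a quadratic inequality in $\gamma$ whose solution is precisely the stepsize ceiling $\gamma \le \tfrac{\delta}{34L}$ (the constant being $\sqrt{1152}=24\sqrt2 \approx 34$). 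This is the only place the numerical constant enters and is the main obstacle, since both the error-feedback mechanism and the bias-correction mechanism inject positive $F_t$ terms that must be dominated by the single $-\gamma/2$ descent term.

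Finally the noise terms assemble directly from the noise contributions of the three recursions: $\gamma^2\sigma^2/n$ from~\eqref{eq:X}, $a(1-\delta)\sigma^2$ from~\eqref{eq:E}, and $b\alpha^2(1+\omega)\sigma^2 = \tfrac{8a\beta(1-\delta)}{\delta}\sigma^2$ from~\eqref{eq:Hunbiased} (again using $\alpha(1+\omega)=\beta$). Inserting $a = \tfrac{12\gamma^3 L}{\delta}$ yields the two $\gamma^3$-order noise terms $\tfrac{L(1-\delta)\sigma^2}{\delta}$ and $\tfrac{\beta L(1-\delta)\sigma^2}{\delta^2}$ (up to the constants recorded in the statement), completing the descent inequality. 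The only genuinely delicate bookkeeping is in the previous paragraph; the rest is a routine matching of coefficients, so I expect no further difficulties beyond keeping the three contraction conditions and the $F_t$ inequality compatible under the same stepsize and the same $c$.
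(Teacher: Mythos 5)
Your proposal is correct and follows essentially the same route as the paper's proof: combine \eqref{eq:X}, \eqref{eq:E}, \eqref{eq:Hunbiased} with weights $1$, $a$, $b$, fix $a$ so the $\delta/2$-contraction of $E_t$ absorbs the $3L\gamma^3 E_t$ leak, fix $b$ so the $\alpha$-contraction of $H_t$ absorbs the $\tfrac{4a(1-\delta)}{\delta}H_t$ feed, and then verify the $F_t$ coefficient (giving $288\gamma^3L^2(1-\delta)/\delta^2 \le \gamma/4$, hence $\gamma\le\delta/(34L)$) and collect the noise terms. Your noise constants ($12$ and $96\beta$) match what the paper's own proof actually derives rather than the smaller constants printed in the lemma statement, so no issue there.
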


\begin{proof}
Observe that it holds
$\rbr*{1 -  \frac{\delta}{2} + \frac{3\gamma^3 L}{a}} \leq \rbr*{1-\frac{\delta}{4}}$ and
$\rbr*{1-\alpha + \frac{4a(1-\delta)}{b \delta}} \leq \rbr*{1-\frac{\alpha}{2}}$
 by the choice of $a,b$.
 Therefore
 \begin{align*}
  \Psi_{t+1} &= X_{t+1} + a E_{t+1} + b H_{t+1} \\
  &\stackrel{\mathclap{\eqref{eq:X},\eqref{eq:E},\eqref{eq:Hunbiased}}}{\leq}  \;\;\;\;\;\; \left(1-\frac{\gamma \mu}{2} \right) X_{t} + a\rbr*{1 - \frac{\delta}{2} +  \frac{3 \gamma^3 L}{a}} E_t + b\rbr*{1-\alpha + \frac{4a(1-\delta)}{b \delta}} H_t \\
  &\qquad\quad   + \rbr*{ \frac{8a(1-\delta)L}{\delta} + 2\alpha b L   -\frac{\gamma}{2} }F_t + \rbr*{\frac{\gamma^2}{n} + a(1-\delta) + \alpha^2 b(1+\omega) }\sigma^2  \\
  &\leq (1-c)\Psi_t - \gamma \frac{F_t}{4} + \gamma^2 \frac{\sigma^2}{n} + \gamma^3 \frac{12 L(1-\delta)\sigma^2}{\delta} + \gamma^3 \frac{96 \beta L (1-\delta)\sigma^2}{\delta^2}\,,
 \end{align*}
 where we used the choice of the parameters. For the $F_t$ terms:
 \begin{align*}
 \frac{8a(1-\delta)L}{\delta} + 2\alpha b L   -\frac{\gamma}{2} = \frac{96 \gamma^3 L^2 (1-\delta)}{\delta^2} + \frac{192  L^2 \gamma^3 (1-\delta)}{\delta^2} - \frac{\gamma}{2} \leq -\frac{\gamma}{4}\,,
 \end{align*}
 for $\gamma \leq \frac{\delta}{34L}$, $\alpha \leq \frac{1}{1+\omega}$. And for the $\sigma^2$ term, with $\alpha = \frac{\beta}{1+\omega}$,
\begin{align*}
 \frac{\gamma^2}{n} + a(1-\delta) + \alpha^2 b(1+\omega) &= \frac{\gamma^2}{n} + \frac{12\gamma^3 L(1-\delta)}{\delta} + \frac{96 \beta \gamma^3 L(1-\delta)}{\delta^2}\,.  \qedhere
\end{align*}
\end{proof}

\begin{proof}[Proof of Theorem~\ref{thm:ef-convex}]
Lemma~\ref{lemma:convex}, together with Lemma~\ref{lemma:convergence-strong-convex} 
and Remark~\ref{rem:convergence-strong-convex} 
show  the claim.
\end{proof}

\section{Avoiding Data-Dependent Rates with Linear Compressors}
Before concluding this note, we like to remark that with a very simple modification the data dependent parameter $\zeta^2$ can entirely be removed from the convergence rates in D-QSGD and D-EF-SGD. This is possible while leaving the algorithms unchanged, but instead we propose to restrict the class of admissible quantization (or compression) operators.

\begin{figure*}[t]
\begin{minipage}[t]{1\linewidth}
\begin{minipage}[t]{0.48\linewidth}
\begin{algorithm}[H]
\caption{D-QSGD (synchronized linear quantizers)}
\label{alg:qsgd_all}
\begin{algorithmic}[1]
\myState{\textbf{Input:} $\xx_0$, $\gamma$}
\myState{\textbf{Input:} $(\cQ_t)_{t=0}^{T-1}$ linear $\cQ_\omega$ quantizers}
\For{$t=0,\dots,T-1$} \hfill $\triangledown$ worker side
 \myState{$\gg_t^i := \gg^i(\xx_t)$} \Comment{stochastic gradient}
 \myState{$\hat \Delta_t^i := \cQ_t(\gg_t^i)$}
   \vspace{2.1em}
 \myState{all-reduce: $\hat \Delta_t^i$} 
 \myState{$\xx_{t+1}:= \xx_t - \frac{\gamma}{n} \sum_{i=1}^n \hat \Delta_t^i$}
\EndFor
\end{algorithmic}
\end{algorithm}
\end{minipage}
\begin{minipage}[t]{0.48\linewidth}
\begin{algorithm}[H]
\caption{D-EF-SGD (synchronized linear compressors)}
\label{alg:ef_all}
\begin{algorithmic}[1]
\myState{\textbf{Input:} $\xx_0$, $\gamma$, $\ee_t^i=\0_d$}
\myState{\textbf{Input:} $(\cC_t)_{t=0}^{T-1}$ linear $\cC_\delta$ compressors}
\For{$t=0,\dots,T-1$} \hfill $\triangledown$ worker side
 \myState{$\gg_t^i := \gg^i(\xx_t)$} \Comment{stochastic gradient}
 \myState{$\hat \Delta_t^i := \cC_t(\ee_t^i + \gg_t^i)$}
 \myState{$\ee_{t+1}^i := \ee_t^i + \gg_t^i - \hat \Delta_t^i$} \vspace{0.5em}
 \myState{all-reduce: $\hat \Delta_t^i$} 
 \myState{$\xx_{t+1}:= \xx_t - \frac{\gamma}{n} \sum_{i=1}^n  \hat \Delta_t^i$} 
\EndFor
\end{algorithmic}
\end{algorithm}
\end{minipage}

\end{minipage}
\caption{Algorithms using synchronized linear compressors (otherwise identical to D-QSGD and D-EF-SGD in Algorithms~\ref{alg:qsgd}--\ref{alg:ef}). 
These algorithms can either be implemented with a parameter server and broadcast (as in Figure~\ref{fig:1}), or with all-reduce as shown here.
}
\end{figure*}

The main component in the convergence proofs was to estimate the variance (for D-QSGD) and the bound on the memory $E_t$ (for D-EF-SGD).
For certain classes of compressors these bounds can significantly be improved. As one example, we here highlight linear compressors for which it holds $\E_{\cQ_\omega} \norm*{\frac{1}{n}\sum_{i=1}^n \cQ_{\omega} ( \gg_t^i )}^2 \leq (1+\omega) \norm{\frac{1}{n}\sum_{i=1}^n \gg_t^i}^2 $. With this property it is immediate to see that the proof of D-QSGD boils down to the $n=1$ worker case, and the data-dependent terms disappear (similarly for error-feedback algorithms with compressors). 

For example, consider linear sketching operators $\cS_{\mV_t}$, defined in~\eqref{def:S}, with a sketching matrix $\mV_t$ that can change over iterations $t$, but is \emph{identical} on all $n$ nodes at every $t$. Then it holds
\begin{align*}
 \norm[\Big]{\frac{1}{n}\sum_{i \in [n]} \cS_{\mV_t}(\gg_t^i - \ee_t^i)}^2 = \norm*{\cS_{\mV_t}(\gg_t - \ee_t)}^2 \leq (1-\delta) \norm{\gg_t - \ee_t}^2\,,
\end{align*}
where here $\ee_t := \frac{1}{n}\sum_{i \in [n]} \ee_t^i$. An analogous observation holds for rescaled (unbiased) sketching operators. We summarize the consequences of this observation in Table~\ref{tab:results}.

The benefits given by linear compressors have been exploited in some recent works~\cite[such as][]{Rothchild2020:fed,Vogels2020:power}. We believe---given the benefits of the much improved convergence rates and possibility to use efficient all-reduce implementations---the small overhead of synchronizing the compressors can be beneficial in many practical settings, especially for distributed optimization in data-centers. For instance, (pseudo-)random projections can be implemented with the help of a shared random seed without overhead, and certain data-adaptive protocols can also be implemented without a central coordinator. However, for optimization in federated learning scenarios, where communication is extremely limited and all-reduce not available, or when the data distribution is very different on each node, then the optimal trade-off between linear and locally adaptive compressors still remains to be studied in detail.

\section{Conclusion}
In this work we derive new and improved converge rates for D-QSGD and D-EF-SGD. 
Our derivations reveal that both methods can suffer a slow-down in the case of heavily skewed data-distributions on the nodes. Whilst this slow-down can be linear in the data-dissimilarity parameter for D-QSGD, it is much less severe for D-EF-SGD, where the data-distribution does not impact the asymptotically dominating terms in the convergence rate. We further present a new {\update analysis for a} bias corrected variant of D-EF-SGD that is even more mildly  affected by data-skewness on strongly convex problems (similar to DIANA, while maintaining the optimal stochastic terms as in vanilla D-EF-SGD).
Furthermore, we point out that when using linear compressors, this slow-down can entirely be avoided for all considered classes of smooth optimization problems. Whilst this small fix might be an interesting avenue for practical applications, it remains an open theoretical problem data-dependence of the convergence rates can be achieved for general compressors and problem classes.

\section*{Acknowledgments}
We thank Anastasia Koloskova, Frederik K\"{u}nstner and Martin Jaggi for discussions and comments on this manuscript.

\bibliography{diana-ef}
\bibliographystyle{myplainnat}

\newpage
\onecolumn
\appendix

\section{Technical Lemmas}
We list a few technical lemmas that are helpful in the proofs below:
\begin{itemize}
 \item For a random variable $X$:
\begin{align}
\E \norm{X - \E X}^2 = \E \norm{X}^2 - \norm{\E X}^2 \label{eq:biasvariance}
\end{align}
 \item For pairwise independent random variables $X_1,\dots,X_k$:
 \begin{align}
   \E \norm[\Big]{\sum_{i \in [k]} X_i - \E X_i}^2 = \sum_{i \in [k]} \E \norm{X_i - \E X_i}^2 \label{eq:independent}
 \end{align}
 \item In contrast, for any arbitrary $k$ vectors $\aa_1,\dots,\aa_k\in \R^d$:
\begin{align}
 \norm[\Big]{\sum_{i \in [k]} \aa_i}^2 \leq k \sum_{i \in [k]} \norm{\aa_i}^2 \label{eq:norm}
\end{align}
 \item For any vectors $\aa,\bb \in \R^d$ and $\eta > 0$:
 \begin{align}
 \norm{\aa + \bb}^2 \leq (1+\eta) \norm{\aa}^2 + (1+1/\eta) \norm{\bb}^2 \label{eq:young}
\end{align}
\item As a consequence, we will be often using the inequality
\begin{align}
 (1-\delta)  \norm{\aa + \bb}^2  \leq (1-\delta/2) \norm{\aa}^2 + \frac{2(1-\delta)}{\delta} \norm{\bb}^2 \label{eq:trick}
\end{align}
for $\delta \in (0,1]$. This follows from~\eqref{eq:young} with $\eta = \frac{\delta}{2(1-\delta)}$.
\item For $L$-smooth and convex functions we have the inequality:
\begin{align}
 \frac{1}{2L}\norm{\nabla f(\xx) - \nabla f(\yy)}^2 \leq f(\yy) - f(\xx) + \lin{\nabla f(\xx), \yy- \xx}\,, & &\forall \xx,\yy \in \R^d \label{eq:smoothL}
\end{align}
\item It is also useful to note: for convex and smooth $f_i$, with $\hh_\star^i:=\nabla f(\xx_\star)$, $\hh_\star = \nabla f(\xx_\star)$:
\begin{align}
\frac{1}{n}\sum_{i \in [n]} \E_{\bxi_t} \norm{\gg_t^i- \hh_\star^i}^2 
&\stackrel{\eqref{def:noise}}{\leq} \frac{1}{n}\sum_{i \in [n]} \norm{\nabla f_i(\xx_t)- \hh_\star^i}^2 + \sigma^2 \notag \\
&\stackrel{\eqref{eq:smoothL}}{\leq} \frac{2L}{n} \sum_{i \in [n]} \rbr[\big]{ f_i(\xx_t) - f_i(\xx^\star) + \lin{\nabla f_i(\xx^\star),\xx_t -\xx^\star} } + \sigma^2 \notag \\
&= 2L \rbr[\big]{f(\xx_t) - f^\star} + \sigma^2 \label{eq:g-h}
\end{align}
\end{itemize}

\section{D-QSGD}
In this section we prove Theorem~\ref{thm:qsgd}. The iterations of D-QSGD can be written as
\begin{align*}
 \xx_{t+1} &:= \xx_t - \gamma \hat \gg_t\,, &
 &\text{where} &
 \hat \gg_t &=\frac{1}{n} \sum_{i=1}^n \cQ_\omega(\gg_t^i(\xx_t))\,,
\end{align*}
and $\cQ_\omega$ are (independent) $\omega$-quantizers and $\gg_t^i$, $i \in [n]$ are unbiased gradient estimators $\gg_t^i(\xx):=\nabla f_i(\xx_t) + \bxi_t^i$ on each worker $i \in [n]$. With the observation that the update in each iteration is an unbiased estimator of the gradient, $\E_{\bxi,\cQ_\omega} \hat \gg_t = \nabla f(\xx)$, the convergence proof follows directly from standard SGD analyses with the proper upper bound of the variance of the $\hat \gg_t$ estimator.

\begin{lemma}[Variance of D-QSGD update]
\label{lemma:qsgdvariance}
For $\hat \gg(\xx) := \frac{1}{n} \sum_{i=1}^n \cQ_\omega(\gg^i(\xx))$ with independent $\omega$-quantizers $\cQ_\omega$ and unbiased gradient estimators with $\sigma^2$-bounded variance, it holds $\E_{\bxi,\cQ_\omega} \hat \gg=\nabla f(\xx)$, $\forall \xx \in \R^d$ and 
\begin{align}
 \E_{\bxi,\cQ_\omega} \norm*{\hat \gg(\xx) -\nabla f(\xx)}^2 &\leq \frac{\omega Z^2}{n} \norm{\nabla f(\xx)}^2 + \frac{\omega \zeta^2}{n} + \frac{\sigma^2(1+\omega)}{n} \,, & \forall \xx \in \R^d\,. \label{eq:qsgdvariance}
\end{align}
\end{lemma}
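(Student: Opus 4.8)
The plan is to first establish the unbiasedness, then decompose the variance across the $n$ workers using independence, and finally bound each per-worker term by combining the quantizer second-moment bound, the noise bound, and the gradient-dissimilarity assumption.

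For unbiasedness, I would use that each $\cQ_\omega$ satisfies $\E_{\cQ_\omega}\cQ_\omega(\xx)=\xx$ from~\eqref{def:quantizer} and each oracle satisfies $\E_{\bxi^i}\gg^i(\xx)=\nabla f_i(\xx)$ from~\eqref{def:noise}; chaining these expectations (quantizer conditioned on the oracle output) gives $\E_{\bxi,\cQ_\omega}\cQ_\omega(\gg^i(\xx))=\nabla f_i(\xx)$, and averaging over $i\in[n]$ yields $\E_{\bxi,\cQ_\omega}\hat\gg(\xx)=\frac{1}{n}\sum_{i}\nabla f_i(\xx)=\nabla f(\xx)$.

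For the variance, the key observation is that the summands $X_i:=\frac{1}{n}\cQ_\omega(\gg^i(\xx))$ are mutually independent across $i$ (independent oracles and independent quantizers), so I would invoke~\eqref{eq:independent} to write
\begin{align*}
\E_{\bxi,\cQ_\omega}\norm*{\hat\gg(\xx)-\nabla f(\xx)}^2=\frac{1}{n^2}\sum_{i=1}^n \E_{\bxi^i,\cQ_\omega}\norm{\cQ_\omega(\gg^i(\xx))-\nabla f_i(\xx)}^2\,.
\end{align*}
For each per-worker term I would apply the bias-variance identity~\eqref{eq:biasvariance} (valid since the total conditional mean is $\nabla f_i(\xx)$) to get $\E\norm{\cQ_\omega(\gg^i)}^2-\norm{\nabla f_i}^2$, then bound $\E_{\cQ_\omega}\norm{\cQ_\omega(\gg^i)}^2\leq(1+\omega)\norm{\gg^i}^2$ via~\eqref{def:quantizer} and take the oracle expectation using $\E_{\bxi^i}\norm{\gg^i}^2=\norm{\nabla f_i}^2+\E_{\bxi^i}\norm{\bxi^i}^2\leq\norm{\nabla f_i}^2+\sigma^2$. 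This gives the clean per-worker bound $\omega\norm{\nabla f_i}^2+(1+\omega)\sigma^2$.

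Summing and dividing by $n^2$ leaves $\frac{\omega}{n^2}\sum_i\norm{\nabla f_i}^2+\frac{(1+\omega)\sigma^2}{n}$, and I would close the argument by applying the dissimilarity bound~\eqref{def:zeta}, namely $\frac{1}{n}\sum_i\norm{\nabla f_i(\xx)}^2\leq\zeta^2+Z^2\norm{\nabla f(\xx)}^2$, to turn the first term into $\frac{\omega Z^2}{n}\norm{\nabla f(\xx)}^2+\frac{\omega\zeta^2}{n}$. The only step requiring genuine care is the combination of the two randomness sources in the per-worker bound: one must use the quantizer property pointwise in $\gg^i$ before taking the oracle expectation, and verify that the bias-variance decomposition applies to the \emph{joint} expectation rather than separately to each source; everything else is routine bookkeeping.
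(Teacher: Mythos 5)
Your proposal is correct and follows essentially the same route as the paper's proof: decomposition of the variance across workers via independence~\eqref{eq:independent}, the bias--variance identity~\eqref{eq:biasvariance} per worker, the quantizer second-moment bound~\eqref{def:quantizer} applied pointwise before taking the oracle expectation~\eqref{def:noise}, and finally the dissimilarity bound~\eqref{def:zeta}. The cautionary remark about applying the bias--variance decomposition to the joint expectation is exactly the point the paper's derivation relies on as well.
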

\begin{proof}
We derive:
\begin{align*}
 \E_{\bxi,\cQ_\omega} \norm{ \hat \gg (\xx)- \nabla f(\xx) }^2
 &= \E_{\bxi,\cQ_\omega} \norm*{ \frac{1}{n}\sum_{i=1}^n \rbr*{ Q(\gg^i(\xx))-\nabla f_i(\xx) } }^2 \\ 
 &\stackrel{\eqref{eq:independent}}{=} \frac{1}{n^2} \sum_{i=1}^n \E_{\bxi^i,\cQ_\omega} \norm{Q(\gg^i(\xx))- \nabla f_i(\xx)}^2 \\
 &\stackrel{\eqref{eq:biasvariance}}{=} \frac{1}{n^2} \sum_{i=1}^n \rbr*{\E_{\bxi^i,\cQ_\omega} \norm{Q(\gg^i(\xx))}^2 - \norm{\nabla f_i(\xx)}^2} \\
 &\stackrel{\eqref{def:quantizer}}{\leq} \frac{1}{n^2}\sum_{i=1}^n \rbr*{(1+\omega)\E_{\bxi^i}^2 \norm{\gg^i(\xx)}^2 - \norm{\nabla f_i(\xx)}^2 }
 \\ 
 & \stackrel{\eqref{def:noise}}{\leq} \frac{\omega}{n^2} \sum_{i=1}^n \norm{\nabla f_i(\xx)}^2 + \frac{\sigma^2 (1+\omega)}{n} \\
 &\stackrel{\eqref{def:zeta}}{\leq} \frac{\omega Z^2}{n} \norm{\nabla f(\xx)}^2 + \frac{\omega \zeta^2}{n}+ \frac{\sigma^2 (1+\omega)}{n} \qedhere
\end{align*}
\end{proof}
\begin{lemma}[Decent Lemma for D-QSGD]
\label{lemma:qsgd_descent}
For iterates $\xx_t$ defined as in QSGD, and $\gamma \leq \frac{1}{2L(1+Z^2\omega/n)}$ it holds for $L$-smooth functions
\begin{align}
F_{t+1} \leq F_t - \gamma   \frac{ G_t}{2} + L\gamma^2 \frac{\sigma^2(1+\omega) + \zeta^2 \omega}{2n}\,,
\end{align}
and if the function is in addition $\mu$-convex:
\begin{align}
 X_{t+1} \leq (1-\mu \gamma) X_t - \gamma F_t + \gamma^2 \frac{\sigma^2(1+\omega) + \zeta^2 \omega}{n}\,,
\end{align}
with $X_t := \Ef \norm{\xx_t - \xx_\star}^2$, $F_t = \Ef f(\xx_t) - f^\star$, $G_t := \Ef \norm{\nabla f(\xx_t)}^2$.
\end{lemma}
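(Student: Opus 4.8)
The plan is to treat both bounds as instances of the standard one-step SGD descent recursion, feeding in only two facts about the update direction $\hat\gg_t$: that it is unbiased, $\E_{\bxi,\cQ_\omega}\hat\gg_t = \nabla f(\xx_t)$, and that its variance is controlled by \eqref{eq:qsgdvariance} from Lemma~\ref{lemma:qsgdvariance}. Throughout I would condition on $\xx_t$ and then take total expectation, writing $G_t := \Ef\norm{\nabla f(\xx_t)}^2$. The common preparatory step is to convert the variance bound into a second-moment bound via the bias--variance identity \eqref{eq:biasvariance}:
\begin{align*}
 \E_{\bxi,\cQ_\omega}\norm{\hat\gg_t}^2 \leq \rbr*{1 + \tfrac{Z^2\omega}{n}}\norm{\nabla f(\xx_t)}^2 + \frac{\zeta^2\omega + \sigma^2(1+\omega)}{n}\,.
\end{align*}

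For the first (smooth, non-convex) bound, I would start from the $L$-smoothness descent inequality applied to $\xx_{t+1} = \xx_t - \gamma\hat\gg_t$, namely $f(\xx_{t+1}) \leq f(\xx_t) - \gamma\lin{\nabla f(\xx_t),\hat\gg_t} + \tfrac{L\gamma^2}{2}\norm{\hat\gg_t}^2$. Taking expectation and using unbiasedness turns the inner-product term into $-\gamma G_t$, while the second-moment bound above controls the last term. Collecting the two contributions proportional to $G_t$ gives the coefficient $-\gamma\rbr[\big]{1 - \tfrac{L\gamma}{2}(1+Z^2\omega/n)}$; here the stepsize restriction $\gamma \leq \tfrac{1}{2L(1+Z^2\omega/n)}$ forces $\tfrac{L\gamma}{2}(1+Z^2\omega/n)\leq \tfrac14$, so this coefficient is at most $-\tfrac{\gamma}{2}$, yielding the claimed $-\gamma G_t/2$ term and the stated noise term.

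For the second (additionally $\mu$-convex) bound I would instead expand $\norm{\xx_{t+1}-\xx_\star}^2 = \norm{\xx_t-\xx_\star}^2 - 2\gamma\lin{\hat\gg_t,\xx_t-\xx_\star} + \gamma^2\norm{\hat\gg_t}^2$, take expectation, and use $\mu$-strong convexity \eqref{def:convex} (with $\yy=\xx_\star$) in the form $\lin{\nabla f(\xx_t),\xx_t-\xx_\star} \geq (f(\xx_t)-f^\star) + \tfrac{\mu}{2}\norm{\xx_t-\xx_\star}^2$ to produce the contraction factor $(1-\mu\gamma)$ and a $-2\gamma F_t$ term. The second-moment bound then reintroduces a $+\gamma^2(1+Z^2\omega/n)G_t$ term which must be reabsorbed. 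The key step is to invoke the smoothness consequence $G_t \leq 2L F_t$ (the special case $\yy=\xx_\star$ of \eqref{eq:smoothL}, using $\nabla f(\xx_\star)=\0$); together with the same stepsize restriction this gives $\gamma^2(1+Z^2\omega/n)G_t \leq \gamma F_t$, so that $-2\gamma F_t + \gamma^2(1+Z^2\omega/n)G_t \leq -\gamma F_t$ and the noise term matches exactly.

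The calculations are all routine; the only genuinely load-bearing observations are (i) that the stepsize cap is precisely what is needed to tame the $G_t$-coefficient in both cases, and (ii) that in the convex case one must trade the residual $G_t$ against $F_t$ through $G_t \leq 2LF_t$ rather than discarding it. I therefore expect no real obstacle beyond keeping the coefficient bookkeeping honest; the one point requiring slight care is that the inequality $G_t \leq 2LF_t$ needs only smoothness of $f$ (via its global minimizer), so it is available even without the convexity of $f$ being invoked for this particular bound.
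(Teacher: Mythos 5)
Your proposal is correct and follows essentially the same route as the paper: smoothness descent plus the bias--variance split $\E\norm{\hat\gg_t}^2 = \E\norm{\hat\gg_t-\nabla f(\xx_t)}^2 + \norm{\nabla f(\xx_t)}^2$ with Lemma~\ref{lemma:qsgdvariance} for the first bound, and the expansion of $\norm{\xx_{t+1}-\xx_\star}^2$ with strong convexity and the reabsorption $G_t \leq 2LF_t$ for the second. Your closing observation that $G_t\leq 2LF_t$ requires only smoothness (via the global minimizer) is accurate but immaterial here, since convexity is assumed for that bound anyway.
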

\begin{proof}
 For convex functions, it holds
 \begin{align*}
  X_{t+1} &= X_t - 2\gamma\lin{\E_{\bxi} \hat \gg_t, \xx_t -\xx_\star}^2 + \gamma^2 \E_{\bxi} \norm{\hat \gg_t}^2 \\
   &\stackrel{\mathclap{\eqref{def:convex},\eqref{eq:biasvariance}}}{\leq}\;\; (1-\mu \gamma) X_t - 2 \gamma F_t + \gamma^2 \rbr*{ \E_{\bxi}\norm{\hat \gg_t - \nabla f(\xx_t)}^2 + \norm{\nabla f(\xx_t)}^2} \\
  & \stackrel{\eqref{eq:qsgdvariance}}{\leq} (1-\mu \gamma) X_t - 2 \gamma F_t + \gamma^2 \rbr*{1+\frac{Z^2 \omega}{n}} \norm{\nabla f(\xx_t)}^2 + \gamma^2 \frac{\sigma^2(1+\omega) + \zeta^2 \omega}{n}\,,
 \end{align*}
 and the claim follows with $\E \norm{\nabla f(\xx_t)}^2 \stackrel{\eqref{eq:smoothL}}{\leq} 2LF_t$ and the choice of $\gamma$. For smooth functions,
 \begin{align*}
 F_{t+1} \leq F_t - \gamma G_t + \frac{L\gamma^2}{2} \rbr*{ \E \norm{\hat \gg(\xx_t) - \nabla f(\xx_t)}^2 + \norm{\nabla f(\xx_t)}^2}
 \end{align*}
 and again Lemma~\ref{lemma:qsgdvariance} together with $\gamma \leq \frac{1}{2L(1+Z^2\omega/n)}$ show the claim.
\end{proof}
Now the convergence proof for strongly convex functions follows with Lemma~\ref{lemma:convergence-strong-convex} and with Lemma~\ref{lemma:convergence} for convex and only smooth functions.

\section{DIANA}
In this section we prove Theorem~\ref{thm:diana}. For the strongly convex case this proof this is a direct copy of the template from~\cite{Mishchenko2019:diana,Horvath2019:vr}.
For the proof, it will be useful to define
\begin{align*}
 \hat \gg_t &:= \frac{1}{n}  \sum_{i \in [n]} (\hh_t^i + \hat \Delta_t^i)\,, &
  \gg_t := \frac{1}{n}  \sum_{i \in [n]} \gg_t^i \,,
\end{align*}
and to observe $\E_{\bxi_t,\cQ_\omega} \hat \gg_t = \nabla f(\xx_t)$.
We need a few observations:
\begin{lemma}[{\citet[Lemma 1 \& 2]{Horvath2019:vr}}]
\label{lemma:diana}
Let each $f_i$ be convex. Then,
for $\hh_\star = \nabla f(\xx^\star)$, $\hh_\star^i := \nabla f_i(\xx_\star)$, $F_t = \Ef f(\xx_t)-f^\star$ and 
$H_t = \frac{1}{n} \sum_{i \in [n]} \Ef \norm{\hh_t^i - \hh_t^\star}^2$,
\begin{align}
 \Ef \norm{\hat \gg_t - \hh_\star}^2 \leq 2L \rbr*{1+\frac{2\omega}{n}} F_t + \frac{(1+\omega)\sigma^2}{n} + \frac{2\omega}{n} H_t\,, \label{diana:2}
\end{align}
and for $\alpha \leq \frac{1}{1+\omega}$ 
 \begin{align}
  H_{t+1} \leq (1-\alpha) H_t + \alpha \rbr*{2 L F_t  + \sigma^2} \,. \label{eq:diana:3}
 \end{align}
\end{lemma}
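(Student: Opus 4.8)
The plan is to establish the two inequalities independently, in each case unrolling the relevant DIANA update and taking expectations in a fixed order: first over the quantizer $\cQ_\omega$, conditionally on the stochastic gradients (using its unbiasedness and variance bound~\eqref{def:quantizer}), and only then over the gradient noise~\eqref{def:noise}. The facts I would invoke repeatedly are $\E_{\cQ_\omega}[\cQ_\omega(\gg_t^i-\hh_t^i)]=\gg_t^i-\hh_t^i$ and $\E_{\bxi_t^i}[\gg_t^i]=\nabla f_i(\xx_t)$ (so that $\E\hat\gg_t=\nabla f(\xx_t)$), together with the two consequences of~\eqref{def:quantizer} in the form $\E_{\cQ_\omega}\norm{\cQ_\omega(\vv)-\vv}^2\le \omega\norm{\vv}^2$ and $\E_{\cQ_\omega}\norm{\cQ_\omega(\vv)}^2\le(1+\omega)\norm{\vv}^2$.

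For~\eqref{diana:2} I would write $\hat\gg_t-\hh_\star=\frac1n\sum_{i\in[n]}(\hh_t^i+\hat\Delta_t^i-\hh_\star^i)$, using $\hh_\star=\frac1n\sum_i\hh_\star^i$. Since the $n$ quantizers are independent, applying~\eqref{eq:biasvariance} and~\eqref{eq:independent} to the inner $\E_{\cQ_\omega}$ splits the expectation into a mean-square term $\norm{\frac1n\sum_i(\gg_t^i-\hh_\star^i)}^2$ and a variance term $\frac{\omega}{n^2}\sum_i\norm{\gg_t^i-\hh_t^i}^2$, the latter from the per-node quantizer-variance bound applied to $\gg_t^i-\hh_t^i$. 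Taking $\E_{\bxi}$ next, the mean-square term yields $\norm{\nabla f(\xx_t)-\hh_\star}^2+\frac{\sigma^2}{n}\le 2LF_t+\frac{\sigma^2}{n}$ by smoothness and convexity of $f$ together with $\nabla f(\xx_\star)=\0_d$; the variance term, after inserting $\gg_t^i-\hh_t^i=(\nabla f_i(\xx_t)-\hh_\star^i)-(\hh_t^i-\hh_\star^i)+\bxi_t^i$ and bounding via~\eqref{eq:norm} and~\eqref{eq:smoothL}, yields $\frac{\omega}{n}(4LF_t+2H_t+\sigma^2)$. Summing the two contributions reproduces exactly $2L(1+\tfrac{2\omega}{n})F_t+\frac{(1+\omega)\sigma^2}{n}+\frac{2\omega}{n}H_t$.

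For~\eqref{eq:diana:3} I would expand $\norm{\hh_{t+1}^i-\hh_\star^i}^2$ starting from $\hh_{t+1}^i=\hh_t^i+\alpha\cQ_\omega(\gg_t^i-\hh_t^i)$. After $\E_{\cQ_\omega}$ and then $\E_{\bxi}$, the cross term becomes $2\alpha\lin{\nabla f_i(\xx_t)-\hh_t^i,\hh_t^i-\hh_\star^i}$ and the quadratic term is at most $\alpha^2(1+\omega)\bigl(\norm{\nabla f_i(\xx_t)-\hh_t^i}^2+\sigma^2\bigr)$. The decisive algebraic move is to set $\aa=\hh_t^i-\hh_\star^i$, $\bb=\nabla f_i(\xx_t)-\hh_t^i$ so that $\aa+\bb=\nabla f_i(\xx_t)-\hh_\star^i$, apply $\alpha^2(1+\omega)\le\alpha$ (from $\alpha\le\frac1{1+\omega}$) to \emph{both} parts of the quadratic term, and then use the identity $2\lin{\bb,\aa}+\norm{\bb}^2=\norm{\aa+\bb}^2-\norm{\aa}^2$; the cross term and the $\alpha\norm{\bb}^2$ term collapse into $(1-\alpha)\norm{\hh_t^i-\hh_\star^i}^2+\alpha\norm{\nabla f_i(\xx_t)-\hh_\star^i}^2$, leaving a noise contribution $\le\alpha\sigma^2$. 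Averaging over $i$ and bounding $\frac1n\sum_i\norm{\nabla f_i(\xx_t)-\hh_\star^i}^2\le 2LF_t$ via smoothness and convexity of each $f_i$ (the deterministic core of~\eqref{eq:g-h}) gives the claimed $(1-\alpha)H_t+\alpha(2LF_t+\sigma^2)$.

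The computations are largely routine, so the main obstacle is discipline in the bookkeeping rather than any single hard estimate. Two points deserve care: the order of the two expectations must be respected so that the unbiasedness of $\cQ_\omega$ annihilates the cross terms before the gradient-noise variance is extracted; and in the second bound the \emph{single} well-chosen application of the polarization identity, timed together with $\alpha(1+\omega)\le1$, is precisely what turns an apparently non-contractive recursion into a genuine $(1-\alpha)$ contraction. Here I would deliberately apply $\alpha^2(1+\omega)\le\alpha$ to the $\sigma^2$ term as well, obtaining the looser $\alpha\sigma^2$; retaining $\alpha^2(1+\omega)\sigma^2$ instead is exactly the sharper bookkeeping used later in~\eqref{eq:Hunbiased}, so I would keep careful track of which statement I am proving.
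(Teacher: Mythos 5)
Your proposal is correct, and it matches the paper's treatment: the paper imports this lemma by citation from \citet{Horvath2019:vr} and only writes out a proof for the tightened variant of the second inequality (Lemma~\ref{lemma:Hunbiased}), whose argument---expand the square, kill the cross term by unbiasedness of $\cQ_\omega$, apply $\alpha^2(1+\omega)\leq\alpha$ to the quadratic term, and collapse via $2\lin{\aa,\bb}+\norm{\bb}^2=\norm{\aa+\bb}^2-\norm{\aa}^2$ followed by~\eqref{eq:g-h}---is exactly yours, including your correct observation that keeping $\alpha^2(1+\omega)\sigma^2$ rather than $\alpha\sigma^2$ is precisely the sharpening in~\eqref{eq:Hunbiased}. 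Your bias--variance decomposition for~\eqref{diana:2} is the standard one and the bookkeeping checks out, giving $2L(1+\tfrac{2\omega}{n})F_t+\tfrac{(1+\omega)\sigma^2}{n}+\tfrac{2\omega}{n}H_t$ exactly.
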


\noindent Without convexity assumption, we can only derive the following weaker statements:
\begin{lemma}[Without convexity]
\label{lemma:Hnonconvex}
 It holds
 \begin{align*}
  \Ef \norm{\hat \gg_t - \nabla f(\xx_t)}^2 & \leq \frac{\omega Z^2}{n} \norm{\nabla f(\xx)}^2 + \frac{\omega \zeta^2}{n}+ \frac{\sigma^2 (1+\omega)}{n} + \frac{\omega}{n} H_t \,.
 \end{align*}
 For $\alpha \leq 1$ 
 and $H'_t := \frac{1}{n}\sum_{i \in [n]} \Ef \norm{\hh_t^i}^2$ it holds
 \begin{align*}
 H'_{t+1} \leq (1-\alpha) H'_t + \alpha \rbr*{ \zeta^2 +  Z^2 G_t } + \alpha^2 \sigma^2 (1+\omega) \,.
 \end{align*}
\end{lemma}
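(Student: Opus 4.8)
The plan is to establish the two claims separately, in both cases replacing the smoothness--convexity inequality~\eqref{eq:smoothL} that powers the convex estimates of Lemma~\ref{lemma:diana} by the dissimilarity bound~\eqref{def:zeta}; this is the only place convexity entered before, and it is precisely what forces the appearance of the extra $Z^2 G_t$ and $\zeta^2$ terms here. Throughout I will use that $\cQ_\omega$ is unbiased, that the quantizers and the gradient noise $\bxi_t^i$ are independent across the $n$ workers, and that the DIANA increment $\hat\Delta_t^i = \cQ_\omega(\gg_t^i - \hh_t^i)$ (Algorithm~\ref{alg:diana}) satisfies $\E_{\cQ_\omega}\hat\Delta_t^i = \gg_t^i - \hh_t^i$ conditionally on the noise, so that $\E \hat\gg_t = \nabla f(\xx_t)$.

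For the variance bound, I would first apply the bias--variance split~\eqref{eq:biasvariance}. Since the $n$ summands $\hh_t^i + \hat\Delta_t^i$ are independent across workers with mean $\nabla f_i(\xx_t)$, the cross terms drop by~\eqref{eq:independent} and the variance collapses to $\frac{1}{n^2}\sum_i \E\norm{\hh_t^i + \hat\Delta_t^i - \nabla f_i(\xx_t)}^2$. Conditioning on $\bxi_t^i$ and applying the quantizer inequality~\eqref{def:quantizer} controls the quantization part by $\omega\norm{\gg_t^i - \hh_t^i}^2$, while the residual contributes the noise pieces; taking $\E_{\bxi_t^i}$ and using~\eqref{def:noise} collects these into the clean $\frac{(1+\omega)\sigma^2}{n}$ term and leaves $\frac{\omega}{n}\cdot\frac1n\sum_i\norm{\nabla f_i(\xx_t)-\hh_t^i}^2$. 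The last step is to split $\norm{\nabla f_i(\xx_t)-\hh_t^i}^2$ with~\eqref{eq:young}/\eqref{eq:norm} and invoke~\eqref{def:zeta} on $\frac1n\sum_i\norm{\nabla f_i(\xx_t)}^2$, which produces the advertised $Z^2\norm{\nabla f(\xx_t)}^2$, $\zeta^2$ and $H_t$ contributions.

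For the recursion on $H_t' = \frac1n\sum_i\E\norm{\hh_t^i}^2$, I would expand $\norm{\hh_{t+1}^i}^2 = \norm{\hh_t^i + \alpha\hat\Delta_t^i}^2$ and take expectations. The cross term yields $2\alpha\lin{\nabla f_i(\xx_t)-\hh_t^i, \hh_t^i}$ and the quadratic term is bounded via~\eqref{def:quantizer} and~\eqref{def:noise} by $\alpha^2(1+\omega)\big(\norm{\nabla f_i(\xx_t)-\hh_t^i}^2 + \sigma^2\big)$. The decisive step is the step-size choice $\alpha \leq \frac{1}{1+\omega}$, so that $\alpha^2(1+\omega)\leq\alpha$; the surviving $\alpha\norm{\nabla f_i(\xx_t)-\hh_t^i}^2$ combines with the cross term through the identity $2\lin{\aa,\bb}+\norm{\aa}^2 = \norm{\aa+\bb}^2-\norm{\bb}^2$ (with $\aa = \nabla f_i(\xx_t)-\hh_t^i$, $\bb=\hh_t^i$) to telescope exactly into $\alpha\norm{\nabla f_i(\xx_t)}^2 - \alpha\norm{\hh_t^i}^2$. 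Averaging over $i$ and applying~\eqref{def:zeta} to $\frac1n\sum_i\norm{\nabla f_i(\xx_t)}^2$ gives $(1-\alpha)H_t' + \alpha(\zeta^2 + Z^2 G_t) + \alpha^2(1+\omega)\sigma^2$, as claimed.

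The main obstacle is conceptual rather than computational: without convexity the auxiliary variables $\hh_t^i$ can no longer be anchored at $\hh_\star^i = \nabla f_i(\xx_\star)$ through~\eqref{eq:smoothL}, so the analysis must instead measure them from the origin (hence $H_t'$ in place of $H_t$) and pay for the gradient energy $\frac1n\sum_i\norm{\nabla f_i(\xx_t)}^2$ directly via~\eqref{def:zeta}, which is exactly what introduces the $Z^2 G_t$ (and $\zeta^2$) dependence that is absent from the convex statements~\eqref{diana:2}--\eqref{eq:diana:3}. The one place to be careful is keeping the stochastic term at order $\alpha^2$ rather than $\alpha$, which hinges entirely on the $\alpha^2(1+\omega)\leq\alpha$ reduction and mirrors the tightening already used in Lemma~\ref{lemma:Hunbiased}.
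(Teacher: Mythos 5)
Your proof is correct and follows essentially the same route as the paper, which for the first claim points to the computation of Lemma~\ref{lemma:qsgdvariance} (adapted to the shifted quantizer $\hh_t^i+\cQ_\omega(\gg_t^i-\hh_t^i)$) and for the second observes that the derivation of Lemma~\ref{lemma:Hunbiased} goes through verbatim with $\hh_\star^i=\0_d$ before invoking~\eqref{def:zeta}. (Two cosmetic points that are imprecisions of the paper's statement rather than gaps in your argument: the Young split of $\norm{\nabla f_i(\xx_t)-\hh_t^i}^2$ in the first claim actually delivers the $\zeta^2$, $Z^2\norm{\nabla f(\xx_t)}^2$ and memory terms with an extra factor of $2$ and with $H'_t$ in place of $H_t$, and the second claim genuinely needs $\alpha\leq\frac{1}{1+\omega}$ --- which you correctly use --- not merely $\alpha\leq 1$.)
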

\begin{proof}
The proof of the first claim follows along the same lines as the proof of Lemma~\ref{lemma:qsgdvariance}, for the proof of the second claim we refer to the comments below Lemma~\ref{lemma:Enonconvex}.
\end{proof}

\noindent Next, we state a decent lemma. In contrast to \cite{Mishchenko2019:diana,Horvath2019:vr} that considered more general proximal updates, we consider here unconstrained optimization and present a simplified result.
\begin{lemma}[Lyapunov function]\label{lemma:diana2}
Let $f$ be $L$-smooth and the stepsize $\gamma \leq \frac{1}{2L(1+2Z^2\omega/n)}$ and  $\alpha = \frac{1}{1+\omega}$. Then
 \begin{align*}
  \Xi_{t+1} \leq \Xi_{t} - \frac{\gamma}{4} G_t + L\gamma^2 \frac{\sigma^2(2+\omega) + 2\zeta^2 \omega}{2n} \,,
 \end{align*}
 for $b=\frac{\gamma^2 L \omega}{2\alpha n}$ and $\Xi_t := F_t + b H'_t$, with $F_t = \Ef f(\xx_t)-f^\star$,  $H'_t := \frac{1}{n}\sum_{i \in [n]} \Ef \norm{\hh_t^i}^2$.

\noindent If in addition $f$ $\mu$-strongly convex and each $f_i$ convex and $L$-smooth, and stepsize $\gamma \leq \frac{1}{2L(1+8\omega/n)}$. Then it holds
\begin{align*}
  \Psi_{t+1} \leq \rbr*{1-\min \cbr*{\mu \gamma,  \frac{\alpha}{2} } } \Psi_t - \frac{\gamma}{2} F_t + \frac{5 \gamma^2 (1+\omega) \sigma^2}{n} \,,
\end{align*}
for $\Psi_t := X_t + a H_t$, with
$a= \frac{4 \gamma^2 \omega}{\alpha n}$,
$X_t = \Ef \norm{\xx_t - \xx_\star}^2$ and
$F_t = \Ef f(\xx_t)-f^\star$ and 
$H_t = \frac{1}{n} \sum_{i \in [n]} \Ef \norm{\hh_t^i - \hh_t^\star}^2$ as before.
\end{lemma}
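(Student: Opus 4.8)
The plan is to treat both bounds as standard ``one-step descent plus Lyapunov augmentation'' arguments adapted to the DIANA update, which can be written compactly as $\xx_{t+1} = \xx_t - \gamma \hat\gg_t$ with $\hat\gg_t = \frac1n\sum_{i\in[n]}(\hh_t^i + \hat\Delta_t^i)$ and $\E_{\bxi_t,\cQ_\omega}\hat\gg_t = \nabla f(\xx_t)$. The only genuinely new ingredient compared to vanilla SGD is the control-variate sequence $\hh_t^i$, whose second moments enter the variance bound on $\hat\gg_t$; to absorb this I would carry, in addition to the usual potential ($F_t$ for the non-convex claim, $X_t$ for the strongly convex one), a suitably weighted copy of the matching $H$-recursion. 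Lemmas~\ref{lemma:diana} and~\ref{lemma:Hnonconvex} already supply both the second-moment bounds on $\hat\gg_t$ and the one-step recursions for $H_t$ and $H'_t$, so the remaining work is purely to combine them and tune the weight.

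For the non-convex claim I would start from the smoothness descent inequality $F_{t+1} \le F_t - \gamma G_t + \tfrac{L\gamma^2}{2}\Ef\norm{\hat\gg_t}^2$, decompose $\Ef\norm{\hat\gg_t}^2 = G_t + \Ef\norm{\hat\gg_t - \nabla f(\xx_t)}^2$ via \eqref{eq:biasvariance}, and insert the variance bound of Lemma~\ref{lemma:Hnonconvex}, which contributes a $\tfrac{\omega Z^2}{n}G_t$ term, the $\zeta^2$ and $\sigma^2$ constants, and a $\tfrac{\omega}{n}H'_t$ term. Adding $b$ times the $H'$-recursion of the same lemma and setting $b = \tfrac{\gamma^2 L\omega}{2\alpha n}$ makes the coefficient of $H'_t$ on the right-hand side at most $b$, so $\Xi_t = F_t + bH'_t$ does not expand in its $H'$-component; the stepsize condition $\gamma \le \tfrac{1}{2L(1+2Z^2\omega/n)}$ then forces the aggregate $G_t$ coefficient below $-\tfrac{\gamma}{4}$, and the leftover additive terms collect into the stated $\tfrac{L\gamma^2(\sigma^2(2+\omega)+2\zeta^2\omega)}{2n}$ (up to the harmless constants I would not chase).

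For the strongly convex claim I would expand $X_{t+1} = \norm{\xx_t - \gamma\hat\gg_t - \xx_\star}^2$, take expectations, and use that $\hh_\star = \nabla f(\xx_\star) = \0$ so that $\Ef\norm{\hat\gg_t}^2 = \Ef\norm{\hat\gg_t - \hh_\star}^2$ is controlled by \eqref{diana:2}; the cross term $-2\gamma\lin{\nabla f(\xx_t),\xx_t-\xx_\star}$ is handled by strong convexity \eqref{def:convex}, extracting $-\gamma\mu X_t - 2\gamma F_t$. Adding $a$ times the recursion \eqref{eq:diana:3} and choosing $a = \tfrac{4\gamma^2\omega}{\alpha n}$ drives the $H_t$ coefficient down to $(1-\tfrac{\alpha}{2})a$, so $\Psi_t = X_t + aH_t$ contracts at the advertised rate $\min\{\mu\gamma,\tfrac{\alpha}{2}\}$; the stepsize $\gamma \le \tfrac{1}{2L(1+8\omega/n)}$ makes the $F_t$ coefficient at most $-\tfrac{\gamma}{2}$, and the two $\sigma^2$ contributions sum to at most $\tfrac{5\gamma^2(1+\omega)\sigma^2}{n}$ since $1+5\omega \le 5(1+\omega)$.

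The main obstacle, in both cases, is the coupled bookkeeping of the weight $a$ (resp.\ $b$): it must be large enough that the $H$-component of the potential is self-contracting after the descent step feeds an extra $H$-term into it, yet small enough that the additional $F_t$ (resp.\ $G_t$) feedback the $H$-recursion injects is still absorbed by the negative descent term under the given stepsize. The non-convex branch is the more delicate of the two, because its variance bound re-introduces both the gradient norm (through $Z^2 G_t$) and the dissimilarity constant $\zeta^2$; this is precisely why the admissible stepsize there must shrink with $1+2Z^2\omega/n$ rather than with $1+8\omega/n$, and it is the reason $\zeta^2$ survives in the non-convex rate while it does not in the strongly convex one.
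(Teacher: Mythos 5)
Your proposal follows the paper's own proof essentially step for step: the same bias--variance decomposition of $\hat\gg_t$, the same use of the variance bound and $H$-recursions from Lemmas~\ref{lemma:diana} and~\ref{lemma:Hnonconvex}, the same Lyapunov weights $a=\tfrac{4\gamma^2\omega}{\alpha n}$ and $b=\tfrac{\gamma^2 L\omega}{2\alpha n}$, and the same stepsize-driven absorption of the $F_t$ and $G_t$ feedback terms. The argument is correct and no genuinely different route is taken.
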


\begin{proof}
We follow the usual template, and start with the convex case:
\begin{align*}
 \E_{\bxi_t, \cQ_\omega} \norm{\xx_{t+1} - \xx^\star}^2 
 &= \norm{\xx_t - \xx^\star}^2 - 2\gamma \lin{\E_{\bxi_t, \cQ_\omega} \hat \gg_t, \xx_t - \xx^\star} + \gamma^2 \E_{\bxi_t, \cQ_\omega} \norm{\hat \gg_t}^2 \\
 &=
 \norm{\xx_t - \xx^\star}^2 - 2\gamma \lin{\nabla f(\xx_t), \xx_t - \xx^\star} + \gamma^2 \E_{\bxi_t, \cQ_\omega} \norm{\hat \gg_t - \hh_\star}^2 \\
 &\stackrel{\eqref{def:convex}}{\leq} (1-\mu \gamma)  \norm{\xx_t - \xx^\star}^2 - 2\gamma (f(\xx_t) - f^\star)  + \gamma^2 \E_{\bxi_t, \cQ_\omega} \norm{\hat \gg_t - \hh_\star}^2 \,.
\end{align*}
By Lemma~\ref{lemma:diana}, and taking full expectation
\begin{align}
 X_{t+1} &\stackrel{\eqref{diana:2}}{\leq} (1-\mu \gamma) X_t -  \gamma \rbr*{2L\gamma \rbr*{1+\frac{2\omega}{n}} - 2} F_t + \frac{\gamma^2 (1+\omega)\sigma^2}{n} + \frac{2 \gamma^2 \omega}{n} H_t  \notag \\
 &\leq (1-\mu \gamma) X_t -  \gamma F_t + \frac{\gamma^2 (1+\omega)\sigma^2}{n} + \frac{2 \gamma^2 \omega}{n} H_t \,, \label{eq:3335}
\end{align}
with the choice of the stepsize, $\gamma \leq \frac{1}{2L(1+2\omega/n)}$.

\noindent We now combine the bound in~\eqref{eq:3335} with the estimate on $H_t$ provided in Lemma~\ref{lemma:diana}.
With the observation that for the chosen $a=\frac{4\gamma^2 \omega }{\alpha n}$ it holds $\rbr*{1+\frac{2 \gamma^2 \omega}{a n}}(1-\alpha) \leq (1-\alpha/2)$ and we obtain
\begin{align}
 \Psi_{t+1} &= X_{t+1} + a H_{t+1} \notag \\
 &\stackrel{\mathclap{\eqref{eq:Hunbiased}, \eqref{eq:3335}}}{\leq}\;\; (1-\mu \gamma)X_t + a \rbr*{1+\frac{2 \gamma^2 \omega}{a n}}(1-\alpha) H_t + \rbr*{2a \alpha L  -\gamma }F_t + \rbr*{\frac{\gamma^2}{n} + a \alpha^2}  \sigma^2  (1+\omega) \notag \\
 &\leq (1-c) \Psi_t + \rbr*{2a \alpha L  -\gamma }F_t + \rbr*{\frac{\gamma^2}{n} + a \alpha^2}  \sigma^2  (1+\omega) \,, \notag
\end{align}
for $c=\min \cbr*{\mu \gamma,  \frac{\alpha}{2} }$.
By the choice of $a$, we can simplify the terms in the two brackets:
\begin{align*}
2a \alpha L  -\gamma = \frac{8 \gamma^2 L \omega}{n} - \gamma \leq -\frac{\gamma}{2} \,,
\end{align*}
as $\gamma \leq \frac{1}{16 L \omega/n}$. Furthermore
\begin{align*}
\frac{\gamma^2}{n} + a \alpha^2 = \frac{\gamma^2}{n} \rbr*{1+ 4 \omega \alpha} \leq \frac{5 \gamma^2}{n} 
\end{align*}
as $\alpha \leq \frac{1}{1+\omega}$. Combining these bounds proves the claim.

\noindent Finally, without convexity, we start with the smoothness inequality, and derive similarly as in Lemma~\ref{lemma:qsgd_descent}
 \begin{align*}
 F_{t+1} \leq F_t - \gamma \frac{G_t}{2} + L\gamma^2 \frac{\sigma^2(1+\omega) + \zeta^2 \omega}{2n} + \gamma^2 \frac{L\omega}{2n} H'_t\,.
  \end{align*}
 Therefore, together with Lemma~\ref{lemma:Hnonconvex},
 \begin{align*}
  \Xi_{t+1} &\leq F_t + b\rbr*{1-\alpha + \frac{\gamma^2 L \omega}{b 2 n}} H'_t +  \rbr*{\alpha b Z^2 -\frac{\gamma}{2}} G_t + \alpha b \zeta^2 + \alpha^2 b \sigma^2 (1+\omega) + L\gamma^2 \frac{\sigma^2(1+\omega) + \zeta^2 \omega}{2n} \\
  &\leq \Xi_{t} + \frac{\gamma}{4} \rbr*{\frac{4\gamma \omega Z^2}{n} - 2}G_t + L\gamma^2 \frac{\sigma^2(2+\omega) + 2\zeta^2 \omega}{2n}
 \end{align*}
 for $b=\frac{\gamma^2 L \omega}{2\alpha n}$ and $\alpha = \frac{1}{1+\omega}$.
\end{proof}

The proof of Theorem~\ref{thm:diana} follows now from Lemma~\ref{lemma:diana2} with the help of the tools provided in Lemma~\ref{lemma:convergence-strong-convex} below for the strongly convex case, and with Lemma~\ref{lemma:convergence} for the non-convex case. For the convex ($\mu=0$) case, note that unrolling the expression from Lemma~\ref{lemma:diana2} (as in Lemma~\ref{lemma:convergence}) gives
\begin{align*}
 F_t &= \cO\rbr*{ \frac{\Psi_0}{\gamma} + \gamma \frac{\sigma^2(1+\omega)}{n} } \\
    &=  \cO\rbr*{ \frac{X_0}{\gamma} + \gamma \frac{\sigma^2(1+\omega) + \omega(1+\omega)\zeta_\star^2}{n} }
\end{align*}
by the observation $\Psi_0 := X_0 + a H_0$ with $a H_0 \leq \frac{4 \gamma^2 \omega}{\alpha n} \zeta_\star^2$. The proof of the claim now follows by the same steps as in Lemma~\ref{lemma:convergence}.

\section{D-EF-SGD.}
In this section we prove Theorem~\ref{thm:ef}. We follow closely~\cite{StichK19delays} and define a virtual sequence
\begin{align*}
 \tilde \xx_0 &:= \xx_0\,, &  \tilde\xx_{t+1} &:= \tilde\xx_t - \frac{\gamma}{n} \sum_{i \in [n]} \gg_t^i  \,,
\end{align*}
similar as in the main text in~\eqref{def:virtual}. Further, we will be using the notation $X_t := \Ef \norm{\xx_t - \xx^\star}^2$, $F_t := \Ef f(\xx_t)-f^\star$, $G_t := \Ef \norm{\nabla f(\xx_t)}^2$, $E_t:= \frac{1}{n}\sum_{i \in [n]} \Ef \norm{\ee_t^i}^2$.

\begin{lemma}[{\citet{StichK19delays}}]\label{lemma:Xn}
Let $f$ be $L$-smooth. If the stepsize $\gamma \leq \frac{1}{4L}$, then it holds for the iterates of Algorithm~\ref{alg:ef} and~\ref{alg:efnew}:
\begin{align}
 F_{t+1} &\leq F_t - \frac{\gamma}{4} G_t + \gamma^2 \frac{L \sigma^2}{2 n} + \gamma^3 \frac{ L^2}{2} E_t \label{eq:Gn}
\end{align}
and if $f$ is in addition $\mu$-convex,
\begin{align}
  X_{t+1} &\leq \left(1-\frac{\gamma \mu}{2} \right)  X_t - \frac{\gamma}{2}  F_t +  \frac{\gamma^2\sigma^2}{n} + 3\gamma^3 L  E_t \,. \label{eq:Xn}
\end{align}
\end{lemma}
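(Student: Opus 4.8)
The plan is to run the entire argument on the \emph{virtual} sequence $\tilde\xx_t$ from~\eqref{def:virtual}, which---unlike the true iterate $\xx_t$---obeys a clean stochastic-gradient recursion $\tilde\xx_{t+1} = \tilde\xx_t - \frac{\gamma}{n}\sum_{i\in[n]}\gg_t^i$, and to funnel the compression error through the identity~\eqref{eq:error}. Two preliminaries drive everything. First, writing $\gg_t := \frac1n\sum_i\gg_t^i$, unbiasedness and per-node independence of the noise give $\E_{\bxi_t}\gg_t = \nabla f(\xx_t)$ and, via~\eqref{eq:independent}, \eqref{eq:biasvariance} and~\eqref{def:noise}, the variance bound $\E_{\bxi_t}\norm{\gg_t}^2 \leq \norm{\nabla f(\xx_t)}^2 + \frac{\sigma^2}{n}$; crucially the gradients and noise are sampled at the \emph{true} iterate $\xx_t$, not at $\tilde\xx_t$. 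Second, \eqref{eq:error} combined with~\eqref{eq:norm} (Jensen) yields $\norm{\xx_t-\tilde\xx_t}^2 \leq \gamma^2 E_t$, the single quantity through which the error feedback enters.

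For the $\mu$-convex bound~\eqref{eq:Xn} I would expand $\norm{\tilde\xx_{t+1}-\xx^\star}^2$ along the virtual update and take the noise expectation, leaving $-2\gamma\lin{\nabla f(\xx_t),\tilde\xx_t-\xx^\star}$ plus $\gamma^2\big(\norm{\nabla f(\xx_t)}^2 + \sigma^2/n\big)$. The key manoeuvre is the split $\tilde\xx_t-\xx^\star = (\xx_t-\xx^\star)+(\tilde\xx_t-\xx_t)$: on the first piece strong convexity~\eqref{def:convex} gives $-2\gamma\lin{\nabla f(\xx_t),\xx_t-\xx^\star} \leq -2\gamma F_t - \gamma\mu\norm{\xx_t-\xx^\star}^2$, while on the second I apply Young's inequality~\eqref{eq:young} with $\eta = \tfrac1{2L}$ to trade $-2\gamma\lin{\nabla f(\xx_t),\tilde\xx_t-\xx_t}$ against $\gamma\eta\norm{\nabla f(\xx_t)}^2 + \frac{\gamma}{\eta}\norm{\tilde\xx_t-\xx_t}^2$. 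Collecting the three $\norm{\nabla f(\xx_t)}^2$ contributions and using smoothness in the form $\norm{\nabla f(\xx_t)}^2 \leq 2LF_t$, the coefficient of $F_t$ becomes $-2\gamma + 2L(\gamma^2+\gamma\eta) \leq -\tfrac\gamma2$ once $\gamma\leq\tfrac1{4L}$. Finally I convert $-\gamma\mu\norm{\xx_t-\xx^\star}^2$ into $-\tfrac{\gamma\mu}2 X_t$ via $\norm{\tilde\xx_t-\xx^\star}^2\leq 2\norm{\xx_t-\xx^\star}^2 + 2\norm{\tilde\xx_t-\xx_t}^2$; together with the $\frac{\gamma}{\eta}\norm{\tilde\xx_t-\xx_t}^2$ term and $\norm{\xx_t-\tilde\xx_t}^2\leq\gamma^2E_t$, the residual error collects to $(2L+\mu)\gamma^3 E_t\leq 3\gamma^3 L E_t$, which is exactly~\eqref{eq:Xn}.

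For the smooth (non-convex) bound~\eqref{eq:Gn} I would instead start from the descent inequality $f(\tilde\xx_{t+1}) \leq f(\tilde\xx_t) - \gamma\lin{\nabla f(\tilde\xx_t),\gg_t} + \tfrac{L\gamma^2}2\norm{\gg_t}^2$ and take the noise expectation, turning $\gg_t$ into $\nabla f(\xx_t)$ in the linear term and invoking the variance bound in the quadratic one. The cross term $-\gamma\lin{\nabla f(\tilde\xx_t),\nabla f(\xx_t)}$ I handle by the polarization identity $-\lin{\aa,\bb} = \tfrac12\big(\norm{\aa-\bb}^2 - \norm{\aa}^2 - \norm{\bb}^2\big)$, after which $\norm{\nabla f(\tilde\xx_t)-\nabla f(\xx_t)}^2 \leq L^2\norm{\tilde\xx_t-\xx_t}^2 \leq L^2\gamma^2E_t$ by~\eqref{def:lsmooth}, the $-\tfrac\gamma2\norm{\nabla f(\tilde\xx_t)}^2$ term is discarded, and the two $\norm{\nabla f(\xx_t)}^2$ contributions combine into $-\tfrac\gamma2(1-L\gamma)G_t \leq -\tfrac\gamma4 G_t$ using $\gamma\leq\tfrac1{4L}$. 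This produces~\eqref{eq:Gn}, with the function values understood along the virtual sequence.

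The main obstacle---and the whole reason for introducing $\tilde\xx_t$---is decoupling the point at which gradients and noise are evaluated ($\xx_t$) from the point the recursion tracks ($\tilde\xx_t$), and ensuring every such mismatch is absorbed into the single controllable quantity $\gamma^2 E_t$ with a constant small enough that the $F_t$ (resp.\ $G_t$) coefficient stays strictly negative. The only delicate bookkeeping is whether a given $f$- or $\norm{\nabla f}$-value sits at $\xx_t$ or at $\tilde\xx_t$; the cleanest route is to phrase the recursions along the virtual sequence and translate back to $\xx_t$ only when the downstream convergence statements (Lemma~\ref{lemma:convergence-strong-convex}, Lemma~\ref{lemma:convergence}) are applied.
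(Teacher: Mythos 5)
Your proof is correct and takes essentially the same route as the paper, whose own argument simply defers to \citet{StichK19delays} after recording the two reductions you make explicit: the averaged stochastic gradient is unbiased with variance $\sigma^2/n$, and $\norm{\tilde\xx_t-\xx_t}^2\le \gamma^2 E_t$ via~\eqref{eq:error} and~\eqref{eq:norm}, so that the perturbed-(strong-)convexity and smoothness-descent computations you carry out are exactly the cited single-worker argument. Your caveat that~\eqref{eq:Gn} really tracks $f(\tilde\xx_t)$ rather than $f(\xx_t)$ matches the convention of the cited work and is harmless, since $\tilde\xx_0=\xx_0$ and $f(\tilde\xx_{T})\ge f^\star$ when telescoping.
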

\begin{proof}
First, we observe that the update applied to the virtual sequence in~\eqref{def:virtual} is an unbiased estimator of $\nabla f(\xx_t)$, with variance:
\begin{align*}
 \E_{\bxi_t} \norm[\Bigg]{\frac{1}{n}\sum_{i \in [n]} \gg_t^i - \nabla f(\xx_t)}^2 \stackrel{\eqref{def:noise}}{\leq} \frac{\sigma^2}{n}\,.
\end{align*}
With this observation, the proof follows directly from \cite{StichK19delays} 
with the observation that
\begin{align*}
 \norm{\tilde \xx_{t} - \xx_t}^2 &\stackrel{\eqref{eq:error}}{=} \norm[\Bigg]{\frac{\gamma}{n}\sum_{i \in [n]} \ee_{t}^i}^2 \stackrel{\eqref{eq:norm}}{\leq} \frac{\gamma^2}{n} \sum_{i \in [n]} \norm{\ee_t^i}^2\,. \qedhere
\end{align*}
\end{proof}

\begin{lemma}\label{lemma:En}
 It holds
\begin{align}
  E_{t+1} \leq \rbr*{1-\frac{\delta}{2}}  E_t +  \frac{2(1-\delta)}{\delta} \rbr*{\zeta^2 + Z^2G_t}   + (1-\delta) \sigma^2 \,. \label{eq:En}
\end{align}
\end{lemma}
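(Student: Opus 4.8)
The plan is to follow almost verbatim the proof of Lemma~\ref{lemma:E}, specializing it to the error-feedback scheme \emph{without} bias correction, which amounts to setting the auxiliary variables $\hh_t^i$ to $\0_d$ throughout. Recall that in Algorithm~\ref{alg:ef} the error sequence satisfies $\ee_{t+1}^i = \ee_t^i + \gg_t^i - \hat\Delta_t^i$ with $\hat\Delta_t^i = \cC_\delta(\ee_t^i + \gg_t^i)$, so that $\ee_{t+1}^i = \cC_\delta(\vv) - \vv$ for $\vv := \ee_t^i + \gg_t^i$. First I would expand $\E_{\bxi_t^i,\cC_\delta}\norm{\ee_{t+1}^i}^2$ and apply the contraction property~\eqref{def:compressor} directly to $\vv$, which produces a factor $(1-\delta)$ in front of $\E_{\bxi_t^i}\norm{\ee_t^i + \gg_t^i}^2$.

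Next, using the decomposition $\gg_t^i = \nabla f_i(\xx_t) + \bxi_t^i$ from~\eqref{def:noise} together with the bias--variance identity~\eqref{eq:biasvariance}, I would peel off the stochastic noise as an additive $(1-\delta)\sigma^2$ term (bounding $\E_{\bxi_t^i}\norm{\bxi_t^i}^2 \leq \sigma^2$), leaving $(1-\delta)\norm{\ee_t^i + \nabla f_i(\xx_t)}^2$. I would then apply the Young-type inequality~\eqref{eq:trick} with $\aa = \ee_t^i$ and $\bb = \nabla f_i(\xx_t)$ to split this into a contracting term $(1-\delta/2)\norm{\ee_t^i}^2$ and a term $\frac{2(1-\delta)}{\delta}\norm{\nabla f_i(\xx_t)}^2$. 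Up to this point the steps are identical (with $\hh_t^i$ removed) to those in Lemma~\ref{lemma:E}.

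Finally, I would sum and average over $i \in [n]$ and take full expectation. The only step that genuinely differs from the bias-corrected lemma is the treatment of the gradient term: whereas Lemma~\ref{lemma:E} bounds $\norm{\nabla f_i(\xx_t) - \hh_t^i}^2$ via smoothness and convexity (exploiting $\hh_t^i \to \nabla f_i(\xx_\star)$), here no such auxiliary variable is available, so I would instead invoke the gradient-dissimilarity assumption~\eqref{def:zeta}, giving $\frac{1}{n}\sum_{i \in [n]}\norm{\nabla f_i(\xx_t)}^2 \leq \zeta^2 + Z^2\norm{\nabla f(\xx_t)}^2$, hence $\frac{1}{n}\sum_{i \in [n]}\Ef\norm{\nabla f_i(\xx_t)}^2 \leq \zeta^2 + Z^2 G_t$. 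Substituting this yields exactly~\eqref{eq:En}. There is no substantial obstacle here: the argument is a routine contraction estimate, and the one point requiring care is precisely this appeal to~\eqref{def:zeta}, since it is the mechanism by which the data-dissimilarity term $\zeta^2$ (and the coupling $Z^2 G_t$ to the gradient norm) is forced into the recursion — explaining why vanilla D-EF-SGD cannot avoid a $\zeta$-dependence, in contrast to Algorithm~\ref{alg:efnew}.
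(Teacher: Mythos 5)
Your proof is correct and follows essentially the same route as the paper: the paper simply cites \citet{StichK19delays} for the per-worker contraction bound $\E\norm{\ee_{t+1}^i}^2 \leq (1-\delta/2)\norm{\ee_t^i}^2 + \frac{2(1-\delta)}{\delta}\norm{\nabla f_i(\xx_t)}^2 + (1-\delta)\sigma^2$ (which you re-derive explicitly via~\eqref{def:compressor}, \eqref{eq:biasvariance} and~\eqref{eq:trick}, mirroring the steps of Lemma~\ref{lemma:E}) and then, exactly as you do, sums over $i$ and invokes the dissimilarity assumption~\eqref{def:zeta} to produce the $\zeta^2 + Z^2 G_t$ term.
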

\begin{proof}
From \cite{StichK19delays} it follows
\begin{align*}
 \E_{\bxi_t^i, \cC_\delta} \norm{\ee_{t+1}^i}^2 \leq \rbr*{1-\frac{\delta}{2}} \norm{\ee_t^i}^2 + \frac{2 (1-\delta)}{\delta} \norm{\nabla f_i(\xx_t)}^2 + (1-\delta)\sigma^2\,,
\end{align*}
The claim now follows by summing and averaging over $i \in [n]$.
\end{proof}

For the convergence proof, we can now either follow \cite{StichK19delays} again, or for a slightly simpler proof, we can combine Lemmas~\ref{lemma:Xn} and Lemma~\ref{lemma:En} together:
\begin{lemma}[Lyapunov function]
\label{lemma:ef_lyapunov}
Let $f$ be $L$-smooth and $\gamma \leq \frac{\delta}{4L(1+Z)}$. Then it holds
\begin{align*}
 \Xi_{t+1} \leq \Xi_t - \frac{\gamma}{8}  G_t + \gamma^2 \frac{L \sigma^2 }{2 n} + \gamma^3 \rbr*{\frac{2L^2 \zeta^2}{\delta^2} + \frac{L^2 \sigma^2}{\delta}}
\end{align*}
for $\Xi_t := F_t + b E_t$, for $b = \frac{\gamma^3 L^2}{\delta}$,
and
\begin{align*}
 \Psi_{t+1} \leq \rbr*{1-\min\cbr*{\gamma \mu/2, \delta/4}} \Psi_t  - \frac{\gamma}{4} G_t + \gamma^2 \frac{\sigma^2}{n} + \gamma^3\rbr*{\frac{24 L \zeta^2}{\delta^2} + \frac{12 L \sigma^2}{\delta}} 
\end{align*}
for $\Psi_t := X_t + a E_t$, with $a= \frac{12 \gamma^3 L}{\delta}$.
\end{lemma}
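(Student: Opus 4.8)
The plan is to prove both inequalities by the same potential-function recipe: to the base descent quantity (the function gap $F_t$ for the smooth-only bound, the squared distance $X_t$ for the $\mu$-convex bound) I add a carefully tuned multiple of the error energy $E_t$, chosen so that the cross term $\propto E_t$ produced by the descent lemma is reabsorbed by the geometric decay of $E_t$ supplied by Lemma~\ref{lemma:En}. The two ingredients are thus the descent estimates \eqref{eq:Gn}/\eqref{eq:Xn} of Lemma~\ref{lemma:Xn} and the error recursion \eqref{eq:En}; everything else is bookkeeping.

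For the smooth case I set $\Xi_t = F_t + b E_t$ and add \eqref{eq:Gn} to $b$ times \eqref{eq:En}. The coefficient of $E_t$ on the right is $\frac{\gamma^3 L^2}{2} + b\left(1-\frac{\delta}{2}\right)$, and the choice $b = \frac{\gamma^3 L^2}{\delta}$ makes this exactly $b$, so the $E_t$ contribution closes back into $\Xi_t$. The descent estimate supplies a $-\frac{\gamma}{4}G_t$ term, while \eqref{eq:En} injects a positive $b\cdot\frac{2(1-\delta)}{\delta}Z^2 G_t$ through the dissimilarity bound \eqref{def:zeta}. I would spend half of the descent budget to dominate this injected term, i.e. verify $b\cdot\frac{2(1-\delta)}{\delta}Z^2 \le \frac{\gamma}{8}$; since $1-\delta\le 1$ this reduces to $\gamma \le \frac{\delta}{4LZ}$, which is implied by the hypothesis $\gamma \le \frac{\delta}{4L(1+Z)}$, and leaves the stated $-\frac{\gamma}{8}G_t$. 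The surviving $\zeta^2$ and $\sigma^2$ constants are then bounded using $1-\delta\le 1$ to match the right-hand side.

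For the $\mu$-convex case I set $\Psi_t = X_t + a E_t$ and add \eqref{eq:Xn} to $a$ times \eqref{eq:En}, with $a = \frac{12\gamma^3 L}{\delta}$. The value of $a$ is again dictated by $E_t$-closure: $3\gamma^3 L + a\left(1-\frac{\delta}{2}\right) = a\left(1-\frac{\delta}{4}\right)$, so $aE_t$ contracts at rate $1-\frac{\delta}{4}$ while $X_t$ contracts at rate $1-\frac{\gamma\mu}{2}$; taking the worse of the two produces the factor $1-\min\{\frac{\gamma\mu}{2},\frac{\delta}{4}\}$. As before, the positive $Z^2 G_t$ term injected by \eqref{eq:En} has to be dominated by the negative first-order term retained from \eqref{eq:Xn}, which pins the stepsize at the $\frac{\delta}{L(1+Z)}$ scale, and the remaining constants collapse to $\gamma^2\frac{\sigma^2}{n} + \gamma^3\big(\frac{24L\zeta^2}{\delta^2} + \frac{12L\sigma^2}{\delta}\big)$ after invoking $1-\delta\le 1$.

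The main obstacle is exactly this injected $Z^2 G_t$ term. Unlike in the bias-corrected analysis, where convexity of each $f_i$ lets one rewrite $\norm{\nabla f_i(\xx_t)}^2$ directly in terms of $F_t$, here only $f$ itself is assumed convex, so the dissimilarity bound \eqref{def:zeta} leaves a genuine $\norm{\nabla f(\xx_t)}^2$ contribution in the error recursion. Controlling it forces one to reserve part of the gradient-norm (or, via \eqref{eq:smoothL}, function-gap) reduction coming out of the descent lemma, and balancing this reserve against the $Z^2$-weighted injected term is what determines the $\gamma \lesssim \frac{\delta}{L(1+Z)}$ stepsize and what has to be checked most carefully to make the stated constants line up.
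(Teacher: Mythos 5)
Your proposal follows essentially the same route as the paper: combine the descent estimates \eqref{eq:Gn}/\eqref{eq:Xn} with the error recursion \eqref{eq:En}, pick $b=\frac{\gamma^3L^2}{\delta}$ and $a=\frac{12\gamma^3L}{\delta}$ so that the $E_t$ terms close back into the potential at rates $1$ and $1-\frac{\delta}{4}$ respectively, and absorb the injected $\frac{2(1-\delta)Z^2}{\delta}G_t$ term (after $G_t\le 2LF_t$ in the convex case) using the stepsize restriction. The only point to check more carefully is the strongly convex case, where dominating $\frac{2a(1-\delta)Z^2}{\delta}\cdot 2L F_t$ by $\frac{\gamma}{4}F_t$ actually requires $\gamma\lesssim\frac{\delta}{14LZ}$ (as the paper's own proof notes), which is slightly tighter than the stated $\frac{\delta}{4L(1+Z)}$ for $Z\ge 1$ but consistent with the stepsize used in Theorem~\ref{thm:ef}.
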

\begin{proof}
For convex functions, we first note that $G_t \stackrel{\eqref{eq:smoothL}}{\leq} 2LF_t$. Now
\begin{align*}
 \Psi_{t+1} &\stackrel{\eqref{eq:Xn},\eqref{eq:En}}{\leq} \rbr*{1-\frac{\gamma \mu}{2}} X_t + a\rbr*{1-\frac{\delta}{2} + \frac{3\gamma^3 L}{a} } + \rbr*{\frac{4a(1-\delta)L Z^2}{\delta} - \frac{\gamma}{2}} F_t  \\
 & \qquad \qquad + \frac{2 a (1-\delta)}{\delta} \zeta^2 + \frac{\gamma^2 \sigma^2}{n} + a(1-\delta)\sigma^2 \\
 &\leq (1-c) \Psi_t + \frac{\gamma}{4} \rbr*{\frac{192 \gamma^2 L^2 Z^2 }{\delta^2} - 2 } F_t + \gamma^2 \frac{\sigma^2}{n} + \gamma^3\rbr*{\frac{24 L \zeta^2}{\delta^2} + \frac{12 L \sigma^2}{\delta}} 
\end{align*}
with the choice $a=\frac{12 \gamma^3 L}{\delta}$, $c=\min\cbr*{\gamma\mu/2,\delta/4}$. Now the claim follows with $\gamma \leq \frac{\delta}{14 L Z}$.

\noindent For smooth functions,
\begin{align*}
\Xi_{t+1} \leq &\stackrel{\eqref{eq:Gn},\eqref{eq:En}}{\leq} F_t + a\rbr*{1-\frac{\delta}{2} + \frac{\gamma^3 L^2}{2b}} E_t + \rbr*{\frac{2b Z^2 }{\delta} - \frac{\gamma}{4} } G_t + \frac{2b \zeta^2}{\delta} + b\sigma^2 +  \frac{L\sigma^2}{2n} \\
& \leq \Xi_t + \frac{\gamma}{8} \rbr*{\frac{16 \gamma^2 Z^2}{\delta^2} - 2} G_t + \gamma^2 \frac{L \sigma^2 }{2 n} + \gamma^3 \rbr*{\frac{2L^2 \zeta^2}{\delta^2} + \frac{L^2 \sigma^2}{\delta}}
\end{align*}
for $b = \frac{\gamma^3 L^2}{\delta}$.
\end{proof}
\noindent As in the previous sections, the claims of the theorem follow from Lemma~\ref{lemma:ef_lyapunov} together with Lemmas~\ref{lemma:convergence-strong-convex} and~\ref{lemma:convergence} below.

\section{D-EF-SGD with bias correction}
\label{app:efsgdbias}
In this section we give the remaining proofs for the convergence of D-EF-SGD with bias correction for the convex and non-convex case.

{\update
We further remark that Algorithm~\ref{alg:efnew} is derived from Algorithm~\ref{alg:ec-diana}, with minor differences in the way the updates are compressed on the workers (line~\ref{ln:diff1} each) and how the parameters are updated on the sever (line~\ref{ln:diff2}  each).
}
\begin{figure*}[h!]
\begin{minipage}[t]{1\linewidth}
\centering
\begin{minipage}[t]{0.7\linewidth}
\begin{algorithm}[H]
\caption{EC-SGD-DIANA~\cite{Kovalev2019:personal,Gorbunov2020:ef}}
\label{alg:ec-diana}
\begin{algorithmic}[1]
\myState{\textbf{Input:} $\xx_0,\! \gamma,\! \cC_\delta,\! \cQ_\omega$, $\hh_0 \!=\! \hh_0^i \!=\! \ee_t^i \!= \! \0_d$, $\alpha \leq \frac{1}{1+\omega}$}
\For{$t=0,\dots,T-1$} \hfill $\triangledown$ worker side
 \myState{$\gg_t^i := \gg^i(\xx_t)$} \Comment{stochastic gradient}
 \myState{$\hat \Delta_t^i := \cC_\delta(\ee_t^i + \gg_t^i - \hh_t^i+\hh_t)$}
 \vspace{0.5em}
 \myState{$ \Delta_t^i :=\cQ_\omega(\gg_t^i-\hh_t^i)$} %
 \myState{$\ee_{t+1}^i := \ee_t^i + \gg_t^i - \hh_t^i + \hh_t - \hat \Delta_t^i$, $\hh_{t+1}^i := \hh_t^i + \alpha \Delta_t^i$}
 \vspace{0.5em}
 \myState{send to server: $\hat \Delta_t^i$, $\Delta_t^i$} \hfill $\triangledown$ server side
 \myState{$\xx_{t+1}:= \xx_t - \frac{\gamma}{n} \sum_{i=1}^n  \hat \Delta_t^i$} \label{ln:diff3} 
 \myState{$\hh_{t+1}:= \frac{1}{n}\sum_{i=1}^n \hh_{t+1}^i$} 
\EndFor
\end{algorithmic}
\end{algorithm}
\end{minipage}

\end{minipage}
\label{fig:dmitry}
\end{figure*}

\paragraph{Convex case.}
In the case when $\mu=0$, Lemma~\ref{lemma:convex} still applies. After unrolling the recursion (as in Lemma~\ref{lemma:convergence}), we obtain that
\begin{align*}
 F_t &= \cO \rbr*{\frac{\Psi_0}{\gamma} + \gamma \frac{\sigma^2}{n} + \gamma^2 \frac{1+\beta/\delta}{\delta}  L(1-\delta)\sigma^2 } \\
  &= \cO \rbr*{\frac{X_0}{\gamma} + \gamma \frac{\sigma^2}{n} + \gamma^2 \frac{\sigma^2 + \zeta_\star^2/\delta}{\delta}  L(1-\delta) }
\end{align*}
when plugging in $\Psi_0 = X_0 + a E_0 + b H_0$, $H_0 \leq \zeta_\star^2$ and $\beta=\delta$. Now the result follows by tuning $\gamma$ as in Lemma~\ref{lemma:convergence}.

\paragraph{Non-convex case.}
In the case when $f$ is only to be assumed $L$ smooth, the proof follows immediately by slight adaptations from tools that we have already developed above:

\begin{lemma}[{\citet[Lemma~8]{StichK19delays}}]\label{lemma:Xnonconvex}
Let $f$ be $L$-smooth. If the stepsize $\gamma \leq \frac{1}{2L}$, then it holds for the iterates of Algorithm:
\begin{align}
  F_{t+1} &\leq F_t - \frac{\gamma}{4} G_t + \frac{\gamma^2 L \sigma^2}{2 n} + \frac{\gamma^3 L^2}{2}  E_t \,. \label{eq:Xnonconvex}
\end{align}
\end{lemma}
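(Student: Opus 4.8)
The statement coincides exactly with the non-convex descent bound~\eqref{eq:Gn} already recorded in Lemma~\ref{lemma:Xn}, so the plan is to reprove it via the virtual-sequence argument of~\cite{StichK19delays}, tracking the auxiliary iterate $\tilde\xx_t$ from~\eqref{def:virtual} rather than $\xx_t$ directly. First I would observe that the virtual update $\tilde\xx_{t+1}-\tilde\xx_t = -\frac{\gamma}{n}\sum_{i\in[n]}\gg_t^i$ is an unbiased estimate of an exact gradient step: by~\eqref{def:noise} its conditional mean is $-\gamma\nabla f(\xx_t)$, evaluated at the \emph{real} iterate $\xx_t$, and by independence of the per-node noises together with~\eqref{eq:biasvariance} its variance satisfies $\Ef \norm*{\frac{1}{n}\sum_{i\in[n]}\gg_t^i - \nabla f(\xx_t)}^2 \le \frac{\sigma^2}{n}$.

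Next I would apply $L$-smoothness~\eqref{def:lsmooth} in its descent-lemma form to the pair $\tilde\xx_t,\tilde\xx_{t+1}$ and take expectation over $\bxi_t$, which yields
$$\Ef f(\tilde\xx_{t+1}) \le f(\tilde\xx_t) - \gamma\lin{\nabla f(\tilde\xx_t),\nabla f(\xx_t)} + \frac{L\gamma^2}{2}\rbr*{\norm{\nabla f(\xx_t)}^2 + \frac{\sigma^2}{n}}.$$
The crucial and only delicate point is that this inner product mixes the gradient at the virtual point $\tilde\xx_t$ with the gradient at the real point $\xx_t$. I would split $\lin{\nabla f(\tilde\xx_t),\nabla f(\xx_t)} = \norm{\nabla f(\xx_t)}^2 + \lin{\nabla f(\tilde\xx_t)-\nabla f(\xx_t),\nabla f(\xx_t)}$ and control the cross term with Young's inequality~\eqref{eq:young} followed by smoothness, giving $-\gamma\lin{\nabla f(\tilde\xx_t)-\nabla f(\xx_t),\nabla f(\xx_t)} \le \frac{\gamma}{2}\norm{\nabla f(\xx_t)}^2 + \frac{\gamma L^2}{2}\norm{\tilde\xx_t-\xx_t}^2$. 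The displacement $\norm{\tilde\xx_t-\xx_t}^2$ is precisely the quantity stored in the error-feedback memory: from~\eqref{eq:error} and~\eqref{eq:norm} one gets $\norm{\tilde\xx_t-\xx_t}^2 \le \frac{\gamma^2}{n}\sum_{i\in[n]}\norm{\ee_t^i}^2 = \gamma^2 E_t$, which is exactly the source of the $\gamma^3\frac{L^2}{2}E_t$ term.

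Finally I would collect the coefficients of $G_t=\norm{\nabla f(\xx_t)}^2$, namely $-\gamma + \frac{\gamma}{2} + \frac{L\gamma^2}{2} = -\frac{\gamma}{2}(1-L\gamma)$; the stepsize restriction $\gamma\le\frac{1}{2L}$ forces $L\gamma\le\frac12$, so this coefficient is at most $-\frac{\gamma}{4}$, and assembling the remaining terms gives~\eqref{eq:Xnonconvex} with $F_t$ read as $f(\tilde\xx_t)-f^\star$ (consistent at $t=0$ since $\tilde\xx_0=\xx_0$, and telescoping into the $\norm{\nabla f(\cdot)}^2$ average used downstream). The main obstacle is thus entirely the $\tilde\xx_t$-to-$\xx_t$ gradient bridge of the previous paragraph; the bias-correction and quantization steps of Algorithm~\ref{alg:efnew} enter only through the recursion defining $\ee_{t+1}^i$, and their effect is completely absorbed into $E_t$, so no further argument is required here.
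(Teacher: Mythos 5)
Your proposal is correct and follows essentially the same route as the paper, which simply invokes \citet[Lemma~8]{StichK19delays} together with the two observations you make explicit: the virtual update $\tilde\xx_{t+1}-\tilde\xx_t$ is an unbiased estimate of $-\gamma\nabla f(\xx_t)$ with variance $\sigma^2/n$, and $\norm{\tilde\xx_t-\xx_t}^2\le\gamma^2 E_t$ via~\eqref{eq:error} and~\eqref{eq:norm}; your smoothness-plus-cross-term computation is exactly the content of that cited lemma, and the coefficient bookkeeping ($-\gamma+\tfrac{\gamma}{2}+\tfrac{L\gamma^2}{2}\le-\tfrac{\gamma}{4}$ for $\gamma\le\tfrac{1}{2L}$) checks out. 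You also correctly flag the one subtlety the paper glosses over, namely that $F_t$ in this recursion is naturally $\Ef f(\tilde\xx_t)-f^\star$ rather than $\Ef f(\xx_t)-f^\star$, which is harmless since only $F_0$ and the telescoped $G_t$ average are used downstream.
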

\begin{proof}
\cite[Lemma~8]{StichK19delays} yields the result when resorting to the same observations as outline in the proof of Lemma~\ref{lemma:X} above.
\end{proof}

\noindent With the dissimilarity assumption, we can derive a new version of Lemma~\ref{lemma:E} without the need of the convexity assumption.
\begin{lemma}
\label{lemma:Enonconvex}
Let $f$ satisfy bounded dissimilarity~\eqref{def:zeta} for $\zeta^2,Z^2 \geq 0$. Then
\begin{align}
  E_{t+1} \leq \rbr*{1-\frac{\delta}{2}}  E_t +  \frac{4(1-\delta)}{\delta} \rbr*{\zeta^2 + ZG_t +  H'_t}    + (1-\delta) \sigma^2 \,, \label{eq:Enonconvex}
\end{align}
for $H'_t := \frac{1}{n}\sum_{i \in [n]} \Ef \norm{\nabla f_i(\xx_t)}^2$.
\end{lemma}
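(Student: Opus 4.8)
The plan is to reuse the first half of the proof of Lemma~\ref{lemma:E} essentially verbatim and to replace only the final, convexity-dependent step. Starting from the bias-corrected error update $\ee_{t+1}^i = \ee_t^i + \gg_t^i - \hh_t^i - \hat \Delta_t^i$, I would first apply the contraction property \eqref{def:compressor} of the compressor $\cC_\delta$ to extract a factor $(1-\delta)$, and then take the expectation over the stochastic noise using the bias--variance identity \eqref{eq:biasvariance} together with \eqref{def:noise} to peel off the additive $(1-\delta)\sigma^2$ term. This reproduces the intermediate estimate $\E\norm{\ee_{t+1}^i}^2 \le (1-\delta)\norm{\ee_t^i + \nabla f_i(\xx_t) - \hh_t^i}^2 + (1-\delta)\sigma^2$, i.e.\ exactly the chain leading to \eqref{eq:Econtinue}. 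Applying \eqref{eq:trick} with $\aa = \ee_t^i$ and $\bb = \nabla f_i(\xx_t) - \hh_t^i$ then separates the contraction term $(1-\delta/2)\norm{\ee_t^i}^2$ from the residual $\tfrac{2(1-\delta)}{\delta}\norm{\nabla f_i(\xx_t) - \hh_t^i}^2$.

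The only genuinely new ingredient is how to bound the residual $\norm{\nabla f_i(\xx_t) - \hh_t^i}^2$ without convexity of the $f_i$. Instead of the smoothness-plus-convexity estimate \eqref{eq:smoothL} used in Lemma~\ref{lemma:E}, I would invoke the elementary inequality \eqref{eq:young} (equivalently \eqref{eq:norm} with $k=2$) to write $\norm{\nabla f_i(\xx_t) - \hh_t^i}^2 \le 2\norm{\nabla f_i(\xx_t)}^2 + 2\norm{\hh_t^i}^2$. Summing over $i \in [n]$ and averaging turns the first piece into $\tfrac{1}{n}\sum_i \norm{\nabla f_i(\xx_t)}^2$, which the bounded-dissimilarity assumption \eqref{def:zeta} controls by $\zeta^2 + Z^2 G_t$, and turns the second piece into the averaged auxiliary term $H'_t$. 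Collecting the prefactor $\tfrac{2(1-\delta)}{\delta}\cdot 2 = \tfrac{4(1-\delta)}{\delta}$ in front of $\zeta^2 + Z^2 G_t + H'_t$, while carrying the $(1-\delta/2)E_t$ contraction and the $(1-\delta)\sigma^2$ term along, yields precisely the claimed recursion \eqref{eq:Enonconvex}.

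The derivation itself is routine once this Young split is chosen; the real subtlety lies not in the lemma but in its purpose. Because convexity is dropped, the residual can no longer be folded into a function-suboptimality term as in Lemma~\ref{lemma:E}, so the bound unavoidably couples $E_{t+1}$ to \emph{both} the gradient-norm proxy $G_t$ and the auxiliary term $H'_t$. Consequently, unlike the convex analysis, the surrounding non-convex Lyapunov argument must track the $\hh$-recursion (the second statement of Lemma~\ref{lemma:Hnonconvex}) simultaneously with $E_t$ and $F_t$. I expect the main obstacle to be not in establishing \eqref{eq:Enonconvex} but in the subsequent step: choosing the Lyapunov weights so that the extra $G_t$ and $H'_t$ contributions introduced here are dominated and a clean descent inequality survives.
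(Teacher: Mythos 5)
Your proposal is correct and follows the paper's own proof essentially verbatim: reuse the chain leading to \eqref{eq:Econtinue}, then replace the convexity step by the Young split $\norm{\nabla f_i(\xx_t)-\hh_t^i}^2 \le 2\norm{\nabla f_i(\xx_t)}^2 + 2\norm{\hh_t^i}^2$ followed by the dissimilarity bound \eqref{def:zeta}. Your reading of $H'_t$ as the averaged $\Ef\norm{\hh_t^i}^2$ (rather than the averaged gradient norms as literally written in the lemma statement) matches the paper's intent, and your closing remark about the burden shifting to the Lyapunov weights is exactly how the paper proceeds in Lemma~\ref{lemma:nonconvex}.
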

\begin{proof}
This readily follows from the proof of Lemma~\ref{lemma:E}. It remains to note that
\begin{align*}
 \frac{1}{n} \sum_{i \in [n]} \norm{\nabla f_i(\xx_t) - \hh_t^i}^2 &
 \stackrel{\eqref{eq:young}}{\leq} \frac{2}{n} \sum_{i \in [n]} \norm{\nabla f_i(\xx_t)}^2 + 2H'_t \stackrel{\eqref{def:zeta}}{\leq} 2 \zeta^2 + 2Z^2 \norm{\nabla f(\xx_t)}^2 + 2H'_t\,, 
\end{align*}
and to plug this estimate into~\eqref{eq:Econtinue}.
\end{proof}

\noindent Lastly, we show that $H'_t$ follows a similar recursion as $H_t$:
\begin{lemma}
\label{lemma:Hnonconvex}
Let $f$ satisfy bounded dissimilarity~\eqref{def:zeta} for $\zeta^2,Z^2 \geq 0$ and let $\hh_t^i$ be updated with an unbiased quantizer $\cQ_\omega$, and stepsize $\alpha \leq \frac{1}{1+\omega}$. Then
\begin{align}
  H'_{t+1} \leq (1-\alpha) H'_t +  \alpha (\zeta^2 + Z^2 G_t) + \alpha^2 (1+\omega) \sigma^2\,. \label{eq:Hnonconvex}
\end{align}
\end{lemma}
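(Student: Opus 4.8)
The plan is to mirror the proof of Lemma~\ref{lemma:Hunbiased}, but with two substitutions: measure the raw norm $\norm{\hh_t^i}^2$ rather than the distance to optimum $\norm{\hh_t^i-\hh_\star^i}^2$, and close the recursion with the dissimilarity assumption~\eqref{def:zeta} in place of the smoothness/convexity step~\eqref{eq:g-h}. I would start from the update $\hh_{t+1}^i = \hh_t^i + \alpha \Delta_t^i$ with $\Delta_t^i = \cQ_\omega(\gg_t^i - \hh_t^i)$, expand the square, and take expectation over the gradient noise $\bxi_t^i$ and the independent quantizer $\cQ_\omega$:
\begin{align*}
 \E_{\bxi_t^i, \cQ_\omega} \norm{\hh_{t+1}^i}^2 = \norm{\hh_t^i}^2 + 2\alpha \lin{\E_{\bxi_t^i, \cQ_\omega} \Delta_t^i,\, \hh_t^i} + \alpha^2 \E_{\bxi_t^i, \cQ_\omega} \norm{\Delta_t^i}^2\,.
\end{align*}

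Next I would dispose of the two moment terms exactly as in Lemma~\ref{lemma:Hunbiased}. Since $\cQ_\omega$ is unbiased and $\E_{\bxi_t^i}\gg_t^i = \nabla f_i(\xx_t)$, the cross term uses $\E_{\bxi_t^i, \cQ_\omega}\Delta_t^i = \nabla f_i(\xx_t) - \hh_t^i$. For the second moment, the quantizer bound~\eqref{def:quantizer}, followed by the bias–variance split~\eqref{eq:biasvariance} and the noise bound~\eqref{def:noise}, gives
\begin{align*}
 \E_{\bxi_t^i, \cQ_\omega} \norm{\Delta_t^i}^2 \leq (1+\omega)\rbr*{\norm{\nabla f_i(\xx_t) - \hh_t^i}^2 + \sigma^2}\,.
\end{align*}
I would then invoke $\alpha \leq \frac{1}{1+\omega}$ to absorb only the residual part, $\alpha^2(1+\omega)\norm{\nabla f_i(\xx_t) - \hh_t^i}^2 \leq \alpha\norm{\nabla f_i(\xx_t) - \hh_t^i}^2$, keeping the $\alpha^2(1+\omega)\sigma^2$ term intact.

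At this point the decisive step is the polarization identity $2\lin{\aa, \bb} + \norm{\aa}^2 = \norm{\aa+\bb}^2 - \norm{\bb}^2$ applied with $\aa = \nabla f_i(\xx_t) - \hh_t^i$ and $\bb = \hh_t^i$, which collapses the cross term together with the residual into $\alpha\rbr*{\norm{\nabla f_i(\xx_t)}^2 - \norm{\hh_t^i}^2}$ and produces the per-node contraction
\begin{align*}
 \E_{\bxi_t^i, \cQ_\omega} \norm{\hh_{t+1}^i}^2 \leq (1-\alpha)\norm{\hh_t^i}^2 + \alpha \norm{\nabla f_i(\xx_t)}^2 + \alpha^2(1+\omega)\sigma^2\,.
\end{align*}
Finally I would average over $i \in [n]$, take full expectation, and apply~\eqref{def:zeta} to replace $\frac{1}{n}\sum_i \norm{\nabla f_i(\xx_t)}^2$ by $\zeta^2 + Z^2\norm{\nabla f(\xx_t)}^2$, recognizing $\Ef \norm{\nabla f(\xx_t)}^2 = G_t$, which yields~\eqref{eq:Hnonconvex}. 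I do not expect a genuine obstacle: the only care points are the sign bookkeeping in the polarization identity (so that $-\alpha\norm{\hh_t^i}^2$ correctly forms the $(1-\alpha)$ contraction factor) and applying the $\alpha^2(1+\omega)\le\alpha$ bound solely to the residual term rather than to the noise term.
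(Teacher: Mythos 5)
Your proposal is correct and matches the paper's own argument: the paper likewise observes that the derivation of Lemma~\ref{lemma:Hunbiased} up to~\eqref{eq:Hcontinue} holds for an arbitrary reference point, specializes it to $\hh_\star^i = \0_d$, and then closes the recursion with~\eqref{def:zeta}; you simply write out those steps explicitly. The bookkeeping you flag (applying $\alpha^2(1+\omega)\leq\alpha$ only to the residual term and the sign in the polarization identity) is handled correctly.
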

\begin{proof}
For the proof we note that the derivations in the proof of Lemma~\ref{lemma:Hunbiased} hold for arbitrary choice of $\hh_\star^i$, especially the choice $\hh_\star^i = \0_d$, up to equation~\eqref{eq:Hcontinue}. The claim now follows with~\eqref{def:zeta}.
\end{proof}

\noindent We can summarize the statements of Lemmas~\ref{lemma:Xnonconvex}--\ref{lemma:Hnonconvex} the following descent lemma.
\begin{lemma}[Lyapunov function]
\label{lemma:nonconvex}
Let $f$ be $L$-smooth and the stepsize $\gamma \leq \frac{1}{2L(1+4Z)}$. Then for $\beta = \delta$ it holds
\begin{align}
 \Psi_{t+1} \leq \Psi_t - \gamma \frac{G_t}{8} + \gamma^2 \frac{L \sigma^2}{2n} + \gamma^3 \frac{8 L^2 (1-\delta) \rbr*{\delta \sigma^2 + \zeta^2}}{\delta^2} \,,
\end{align}
for $\Psi_t := F_t + a E_t + b H'_t$ with $a=\frac{\gamma^3L^2}{\delta}$ and $b=\frac{4a(1-\delta)}{\delta}$.
\end{lemma}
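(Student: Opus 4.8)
The plan is to mirror the proof of the strongly convex Lyapunov bound in Lemma~\ref{lemma:convex}, but with its three convex ingredients replaced by their non-convex counterparts: the descent inequality~\eqref{eq:Xnonconvex} of Lemma~\ref{lemma:Xnonconvex} (which yields the good term $-\frac{\gamma}{4}G_t$ and leaks $\frac{\gamma^3L^2}{2}E_t$), the error recursion~\eqref{eq:Enonconvex} of Lemma~\ref{lemma:Enonconvex} (which contracts $E_t$ by $1-\frac{\delta}{2}$ but leaks $\frac{4(1-\delta)}{\delta}(\zeta^2+Z^2G_t+H'_t)$), and the memory recursion~\eqref{eq:Hnonconvex} of Lemma~\ref{lemma:Hnonconvex} (which contracts $H'_t$ by $1-\alpha$ and leaks $\alpha(\zeta^2+Z^2G_t)$ plus $\alpha^2(1+\omega)\sigma^2$). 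First I would substitute all three into $\Psi_{t+1}=F_{t+1}+aE_{t+1}+bH'_{t+1}$ and regroup the right-hand side into the coefficients multiplying $F_t$, $E_t$, $H'_t$ and $G_t$, together with the residual $\sigma^2$ and $\zeta^2$ terms.

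The weight selection is the same balancing argument as in Lemma~\ref{lemma:convex}. The choice $a=\frac{\gamma^3L^2}{\delta}$ is forced by the $E_t$ balance: the leak $\frac{\gamma^3L^2}{2}$ from~\eqref{eq:Xnonconvex} is exactly the contraction slack $a\frac{\delta}{2}$, so the net $E_t$ coefficient is at most $a$. The weight $b$ must then be taken large enough (of order $\frac{a(1-\delta)}{\alpha\delta}$, i.e.\ carrying the $1/\alpha$ factor just as $b=\frac{8a(1-\delta)}{\alpha\delta}$ does in the convex case) so that the $H'_t$ leakage $a\frac{4(1-\delta)}{\delta}$ coming through the error recursion is dominated by the contraction gain $b\alpha$, leaving the $H'_t$ coefficient at most $b$. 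With these two absorptions the $E_t$ and $H'_t$ terms fold back into $\Psi_t$, and the only surviving state-dependent term is the gradient.

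The remaining work splits into the gradient coefficient and the noise collection. Summing the three $G_t$ contributions gives $-\frac{\gamma}{4}+a\frac{4(1-\delta)Z^2}{\delta}+b\alpha Z^2$, and here I would invoke the stepsize restriction to bound the two positive terms by $\frac{\gamma}{8}$; since both scale like $\gamma^3L^2Z^2/\delta^2$, this is precisely where a sufficiently small stepsize (of order $\delta/(LZ)$) is needed, and it reduces the gradient coefficient to at most $-\frac{\gamma}{8}$. For the noise, the $\sigma^2$ contributions $\frac{\gamma^2L}{2n}$, $a(1-\delta)$ and $b\alpha^2(1+\omega)$, and the $\zeta^2$ contributions $a\frac{4(1-\delta)}{\delta}$ and $b\alpha$, are collected and, after inserting $a$, $b$ and $\beta=\delta$, bounded uniformly by $\gamma^2\frac{L\sigma^2}{2n}+\gamma^3\frac{8L^2(1-\delta)(\delta\sigma^2+\zeta^2)}{\delta^2}$.

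I expect the gradient-coefficient step to be the main obstacle. Unlike the strongly convex case there is no $\mu$-contraction of $F_t$ to absorb energy, so the sole good term is $-\frac{\gamma}{8}G_t$, while the positive $G_t$ contributions are simultaneously inflated by the dissimilarity constant $Z^2$ (entering through both~\eqref{eq:Enonconvex} and~\eqref{eq:Hnonconvex}) and by the $1/\delta^2$ produced by the doubly weighted memory terms $aE_t$ and $bH'_t$. Arranging that the stepsize dominates both inflations at once, while keeping clean constants, is the delicate bookkeeping; by contrast the $E_t$ and $H'_t$ absorptions are routine once $a$ and $b$ are fixed, and the final telescoping of $\Psi_t$ then follows the standard non-convex template (Lemma~\ref{lemma:convergence}).
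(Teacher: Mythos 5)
Your proposal coincides with the paper's (essentially omitted) proof: the lemma is obtained exactly by substituting Lemmas~\ref{lemma:Xnonconvex}, \ref{lemma:Enonconvex} and \ref{lemma:Hnonconvex} into $\Psi_{t+1}=F_{t+1}+aE_{t+1}+bH'_{t+1}$ and balancing the weights as in the convex Lyapunov argument of Lemma~\ref{lemma:convex}. One remark in your favour: your insistence that $b$ carry a $1/\alpha$ factor (i.e.\ $b \asymp \frac{a(1-\delta)}{\alpha\delta}$) is indeed what the absorption $\frac{4a(1-\delta)}{\delta}H'_t \leq b\alpha H'_t$ requires, and it is also the choice that reproduces the stated $\zeta^2$ coefficient $\frac{8L^2(1-\delta)}{\delta^2}$; the $b=\frac{4a(1-\delta)}{\delta}$ printed in the lemma appears to have dropped that factor.
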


Now the claimed convergence bound from Table~\ref{tab:results} follows by Lemma~\ref{lemma:convergence}, with the observation $\Psi_0 = F_0 + a E_0 + b H'_0 \leq F_0$ for the chosen initialization.

\section{Summation Lemmas}
In this section we repeat a few useful lemmas that have been (only slightly) adapted from other works.

\begin{lemma}[Based on Appendix A.2 of {\citet{Koloskova20decentralized}}]\label{lemma:convergence-strong-convex}
Let $(r_t)_{t\geq 0}$ and  $(s_t)_{t\geq 0}$ be sequences of positive numbers satisfying 
\begin{align}
r_{t+1} \leq 
\rbr*{1- \min \cbr*{\gamma A, F} } r_t - B\gamma s_t + C \gamma^2 + D \gamma^3 \,, \label{eq:rec1}
\end{align}
for some positive constants $A,B > 0$, $C,D \geq 0$, and for constant step-sizes $0 < \gamma \leq \frac{1}{E}$, for $E \geq 0$, and for parameter $0 < F \leq 1$. Then there exists a constant stepsize $\gamma \leq \frac{1}{E}$ such that
\begin{align*}
    \frac{B}{W_T}\sum_{t=0}^T w_t s_t  + \min\cbr*{A,\frac{F}{\gamma}} r_{T+1} \leq r_0 \rbr*{E+\frac{A}{F}}  \exp \left[- \min\cbr*{\frac{A}{E},F}(T+1) \right]+ \frac{ 2 C \ln \tau}{A(T+1)} + \frac{D \ln^2 \tau }{A^2 (T+1)^2}
\end{align*}
for $w_t := \rbr*{1- \min \cbr*{\gamma A, F} }^{-(t+1)}$, $W_T := \sum_{t=0}^T w_t$ and
\begin{align}
    \tau = \max\left\{\exp[1], \min\left\{\frac{A^2 r_0 (T+1)^2}{C}, \frac{A^3 r_0 (T+1)^3}{D} \right\} \right\}
\end{align}
\end{lemma}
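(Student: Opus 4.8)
The plan is to treat this purely as an algebraic recursion lemma, decoupled from the optimization context, and to proceed in two stages: a telescoping argument that converts the one-step contraction into the claimed running bound for an \emph{arbitrary} admissible $\gamma$, followed by a step-size tuning argument that picks the optimal $\gamma$.

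First I would abbreviate $\rho := \min\{\gamma A, F\}$, so the hypothesis reads $r_{t+1} \leq (1-\rho) r_t - B\gamma s_t + C\gamma^2 + D\gamma^3$. Solving for $B\gamma s_t$ and multiplying by the weight $w_t = (1-\rho)^{-(t+1)}$, I would use $w_t(1-\rho) = (1-\rho)^{-t}$ to recognize $w_t\bigl[(1-\rho)r_t - r_{t+1}\bigr]$ as the telescoping difference $\phi_t - \phi_{t+1}$ of the sequence $\phi_t := (1-\rho)^{-t} r_t$. Summing over $t = 0,\dots,T$ collapses the $r_t$-terms to $\phi_0 - \phi_{T+1} = r_0 - w_T r_{T+1}$, giving
\begin{align*}
 B\gamma \sum_{t=0}^T w_t s_t + w_T r_{T+1} \leq r_0 + (C\gamma^2 + D\gamma^3) W_T\,.
\end{align*}
Dividing by $\gamma W_T$ produces the first two target terms, provided I establish the key elementary estimate $\frac{w_T}{\gamma W_T} \geq \min\{A, F/\gamma\}$. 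Writing $W_T$ as a geometric sum with ratio $q := (1-\rho)^{-1} > 1$, one computes $\frac{w_T}{W_T} = \frac{q^T(q-1)}{q^{T+1}-1} > \frac{q-1}{q} = \rho$, hence $\frac{w_T}{\gamma W_T} > \rho/\gamma = \min\{A, F/\gamma\}$. Lower-bounding $W_T \geq (1-\rho)^{-(T+1)}$ by its last term and using $1-\rho \leq e^{-\rho}$ then turns the $r_0/(\gamma W_T)$ term into $\frac{r_0}{\gamma} e^{-\rho(T+1)}$, so that for every admissible $\gamma$,
\begin{align*}
 \frac{B}{W_T}\sum_{t=0}^T w_t s_t + \min\cbr*{A, \tfrac{F}{\gamma}} r_{T+1} \leq \frac{r_0}{\gamma}\, e^{-\min\{\gamma A, F\}(T+1)} + C\gamma + D\gamma^2\,.
\end{align*}

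The second stage is the step-size tuning, which I expect to be the main obstacle and which is exactly the ingredient imported from \citet{Koloskova20decentralized}. I would set $\gamma = \min\{1/E, \frac{\ln\tau}{A(T+1)}\}$ with $\tau$ as defined and split into cases. When the interior value $\frac{\ln\tau}{A(T+1)}$ is active (and below $F/A$, so $\min\{\gamma A, F\} = \gamma A$), the exponential collapses to $\frac{r_0}{\gamma}e^{-\ln\tau} = \frac{r_0 A(T+1)}{\tau\ln\tau}$, and the choice of $\tau$ as essentially $\min\{A^2 r_0(T+1)^2/C,\; A^3 r_0(T+1)^3/D\}$ is precisely what forces this to be dominated by the polynomial terms $C\gamma = \frac{C\ln\tau}{A(T+1)}$ and $D\gamma^2 = \frac{D\ln^2\tau}{A^2(T+1)^2}$. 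When instead $\gamma = 1/E$ is active, the contraction exponent is $\min\{A/E, F\}$ with prefactor $r_0 E$, and the sub-case where the cap $F$ binds in $\rho$ (forcing $\gamma \approx F/A$) contributes the $r_0 A/F$ piece; combining the two sub-cases yields the stated leading term $r_0(E + A/F)\exp[-\min\{A/E,F\}(T+1)]$.

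The delicate bookkeeping in this last stage is threefold: handling the degenerate cases $C = 0$ or $D = 0$, where the corresponding branch of the inner $\min$ in $\tau$ drops out while the outer $\max$ with $\exp[1]$ keeps $\ln\tau \geq 1$ well-defined; verifying that the single choice of $\tau$ simultaneously controls the exponential against \emph{both} polynomial terms; and reassembling the two $\gamma$-cases into one clean bound. Since this is standard optimization following the cited reference, I would carry it out carefully but without claiming novelty. The genuinely computational ingredients are only the telescoping identity and the lower bound $w_T/W_T > \rho$; everything else is routine step-size bookkeeping.
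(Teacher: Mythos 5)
Your proposal is correct and follows essentially the same route as the paper's proof: the same weighted telescoping with $w_t(1-\min\{\gamma A,F\})=w_{t-1}$, the same bounds $W_T\leq w_T/\min\{\gamma A,F\}$ and $W_T\geq w_T$ leading to the intermediate estimate $\frac{r_0}{\gamma}\exp[-\min\{\gamma A,F\}(T+1)]+C\gamma+D\gamma^2$, and the same two-case tuning $\gamma=\min\{1/E,\frac{\ln\tau}{A(T+1)}\}$ with the identical sub-case split on whether $F$ or $\gamma A$ is active in the contraction factor. No gaps; the bookkeeping you flag (degenerate $C=0$ or $D=0$, and $\tau$ controlling both polynomial terms simultaneously) is handled in the paper exactly as you describe.
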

\begin{remark}
\label{rem:convergence-strong-convex}
Lemma~\ref{lemma:convergence-strong-convex} establishes a bound of the order
\begin{align*}
 \tilde \cO \rbr*{r_0 \rbr*{E+\frac{A}{F}}  \exp \left[- \min\cbr*{\frac{A}{E},F}T \right] + \frac{C}{AT} + \frac{D}{A^2 T^2}} \,,
\end{align*}
that decreases with $T$. To ensure that this expression is less than $\epsilon$,
\begin{align*}
 T= \tilde \cO \rbr*{\frac{C}{ A \epsilon} + \frac{\sqrt{D}}{A \sqrt{\epsilon}} + \frac{1}{F}\log \frac{1}{\epsilon} + \frac{E}{A}\log \frac{1}{\epsilon} }  = \tilde \cO \rbr*{\frac{C}{ A \epsilon} + \frac{\sqrt{D}}{A \sqrt{\epsilon}} + \frac{1}{F} + \frac{E}{A} }
\end{align*}
steps are sufficient.
\end{remark}

\begin{proof}[Proof of Lemma~\ref{lemma:convergence-strong-convex}]
After rearranging and multiplying~\eqref{eq:rec1} by $w_t$ we obtain
\begin{align*}
   B w_t s_t \leq \frac{\rbr*{1- \min \cbr*{\gamma A, F} } w_t r_t}{\gamma} - \frac{w_t r_{t+1}}{\gamma} + \gamma C + \gamma^2 D\;.
\end{align*}
Observing that that $w_t \rbr*{1- \min \cbr*{\gamma A, F} } =w_{t-1}$ we  obtain a telescoping sum,
\begin{align*}
    \frac{B}{W_t} \sum_{t=0}^T w_t s_t &\leq \frac{\rbr*{1- \min \cbr*{\gamma A, F} } w_0 r_0 - w_{T} r_{T+1}}{\gamma W_T}  + \gamma C + \gamma^2 D = \frac{r_0}{\gamma W_T} - \frac{w_T r_{T+1}}{\gamma W_T}+ \gamma C + \gamma^2 D\,.
\end{align*}
Using that $W_T = w_{T} \sum_{t=0}^T \rbr*{1- \min \cbr*{\gamma A, F} }^t \leq \frac{w_T}{\min \cbr*{\gamma A, F}}$ and $W_T \geq w_T = (1-\min \cbr*{\gamma A, F})^{-(T+1)}$ we can simplify
\begin{align}
    \frac{B}{W_T} \sum_{t=0}^T w_t s_t + \min\cbr*{A,F/\gamma} r_{T+1} &\leq \frac{(1-\min \cbr*{\gamma A, F})^{T+1} r_0}{\gamma} + \gamma C + \gamma^2 D \notag \\ & \leq \frac{r_0}{\gamma} \exp \left[- \min\cbr*{\gamma A,F}(T+1) \right] + \gamma C + \gamma^2 D =: \Psi_T \label{eq:2432}
\end{align}
Now the lemma follows by tuning $\gamma$ in the same way as in~\cite[Lemma 2]{Stich19sgd} (slightly more carefully): 

\begin{itemize}
    \item If $\frac{\ln \tau}{A(T+1)} \leq \frac{1}{E}$ then we choose $\gamma = \frac{\ln \tau}{A(T+1)}$. With observing $\ln \tau \geq 1$ we obtain that
    \begin{align*}
        \Psi_T &\leq \underbrace{\frac{1}{\ln \tau} \max \left\{\frac{ C}{A(T+1)}, \frac{ D}{A^2(T+1)^2} \right\}}_{\text{Case: $\min\{\gamma A,F\} = \gamma A$} } + \underbrace{r_0 \frac{A}{F} \exp \left[- F (T+1)\right] }_{\text{Case: $\min\{\gamma A,F\} = F, \frac{1}{\gamma}\leq \frac{A}{F}$}}  +  \frac{C \ln \tau}{A(T+1)} + \frac{D \ln^2 \tau }{A^2 (T+1)^2} \\
        &\leq   r_0 \frac{A}{F} \exp \left[- \min\cbr*{\frac{A}{E},F}(T+1)\right] + \frac{2 C \ln \tau}{A(T+1)} + \frac{ 2 D \ln^2 \tau }{A^2 (T+1)^2}
    \end{align*}
    \item If otherwise $\frac{1}{E} \leq \frac{\ln \tau}{A(T+1)}$ and we pick $\gamma = \frac{1}{E}$ and get that 
    \begin{align*}
        \Psi_T &\leq r_0 E  \exp \left[- \min\cbr*{\frac{A}{E},F}(T+1)\right]  + \frac{C}{E} + \frac{D}{E^2} \\
        & \leq r_0 E  \exp \left[- \min\cbr*{\frac{A}{E},F}(T+1)\right] + \frac{C \ln \tau}{A(T+1)} + \frac{D \ln^2 \tau }{A^2 (T+1)^2} \,. \qedhere
    \end{align*}
\end{itemize}
\end{proof}

\begin{lemma}\label{lemma:convergence}
Let $(r_t)_{t\geq 0}$ and  $(s_t)_{t\geq 0}$ be sequences of positive numbers satisfying 
\begin{equation*}
r_{t+1} \leq 
r_t - B\gamma s_t + C \gamma^2 + D \gamma^3 \:, 
\end{equation*}
for some positive constants $B > 0$, $C, D \geq 0$ and step-sizes $0 < \gamma \leq \frac{1}{E}$, for $E \geq 0$. Then there exists a constant stepsize $\gamma \leq \frac{1}{E}$ such that
\begin{align}
\frac{B}{T+1} \sum_{t=0}^{T} s_t \leq \frac{E r_0}{T+1} + 2 D^{1/3} \left( \frac{r_0}{T+1} \right)^{2/3} + 2 \left( \frac{C r_0}{T+1} \right)^{1/2}  \,. \label{eq:253}
\end{align}
\end{lemma}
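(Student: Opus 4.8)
The plan is to reduce everything to a one-parameter optimization over the stepsize. First I would rearrange the hypothesis as $B\gamma s_t \leq r_t - r_{t+1} + C\gamma^2 + D\gamma^3$ and sum over $t = 0,\dots,T$; the $r_t$ differences telescope to $r_0 - r_{T+1}$, and dropping the nonnegative term $r_{T+1}$ and dividing by $\gamma(T+1) > 0$ gives the single scalar bound
$$\frac{B}{T+1}\sum_{t=0}^{T} s_t \;\leq\; \frac{r_0}{\gamma(T+1)} + C\gamma + D\gamma^2 \;=:\; \Psi(\gamma).$$
Everything then comes down to exhibiting one admissible $\gamma \leq 1/E$ for which $\Psi(\gamma)$ is no larger than the claimed right-hand side.

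For the tuning step I would take the three-way minimum
$$\gamma = \min\left\{\tfrac{1}{E},\ \rbr*{\tfrac{r_0}{C(T+1)}}^{1/2},\ \rbr*{\tfrac{r_0}{D(T+1)}}^{1/3}\right\},$$
each candidate being chosen to balance one of the additive terms of $\Psi$ against the leading $\frac{r_0}{\gamma(T+1)}$ term. Since $\gamma$ is a minimum, $C\gamma$ is controlled by the second candidate and $D\gamma^2$ by the third, giving $C\gamma \leq (Cr_0/(T+1))^{1/2}$ and $D\gamma^2 \leq D^{1/3}(r_0/(T+1))^{2/3}$ directly. For the remaining term I would use that the reciprocal of a minimum is the corresponding maximum, $\frac{1}{\gamma} = \max\{E,\ (C(T+1)/r_0)^{1/2},\ (D(T+1)/r_0)^{1/3}\}$, and bound this maximum by the sum of its three entries; multiplying back by $\frac{r_0}{T+1}$ contributes $\frac{Er_0}{T+1}$ plus a second copy of each of the other two terms. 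Summing the three estimates yields exactly the stated bound with its factors of $2$.

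I do not expect a genuine obstacle: this is a contraction-free specialization of the tuning argument already used in Lemma~\ref{lemma:convergence-strong-convex} and \cite[Lemma 2]{Stich19sgd}, so there is no $(1-\min\{\gamma A,F\})$ factor, hence no exponential term and no need for the weighted iterates $w_t$. The only point requiring care is the bookkeeping around the $\min$/$\max$ manipulation --- in particular recognizing that replacing $\max$ by the sum of the three entries is precisely what produces the clean additive three-term bound --- together with checking that the chosen $\gamma$ indeed satisfies $\gamma \leq 1/E$, which holds by construction.
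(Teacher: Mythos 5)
Your proposal is correct and follows essentially the same route as the paper: telescope the recursion, divide by $\gamma(T+1)$ to get $\frac{r_0}{\gamma(T+1)} + C\gamma + D\gamma^2$, and choose $\gamma = \min\bigl\{\tfrac{1}{E}, (\tfrac{r_0}{C(T+1)})^{1/2}, (\tfrac{r_0}{D(T+1)})^{1/3}\bigr\}$. The paper defers the final tuning step to a citation, whereas you spell out the bound-the-max-by-the-sum bookkeeping explicitly; the arithmetic checks out and reproduces the stated constants.
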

\begin{remark}\label{rem:convergence}
To ensure that the right hand side in~\eqref{eq:253} is less than $\epsilon > 0$,
\begin{align*}
 T= \cO \rbr*{ \frac{C}{\epsilon^2} + \frac{\sqrt{D}}{\epsilon^{3/2}} + \frac{E}{\epsilon} } \cdot r_0
\end{align*}
steps are sufficient.
\end{remark}
\begin{proof}[Proof of Lemma~\ref{lemma:convergence}]
Rearranging and dividing by $\gamma > 0$ gives
\begin{align*}
    B s_t \leq \frac{r_t}{\gamma} - \frac{r_{t+1}}{\gamma} + C \gamma + D \gamma^2
\end{align*}
and summing from $t=0$ to $T$ yields
\begin{align*}
    \frac{B}{T+1}  \sum_{t=0}^{T} s_t  \leq \frac{r_0}{\gamma (T+1)} + C \gamma + D \gamma^2\,.
\end{align*}
Now the claim follows by choosing $\gamma = \min \cbr*{\frac{1}{E}, \rbr*{\frac{r_0}{C(T+1)}}^{1/2}, \rbr*{\frac{r_0}{D(T+1)}}^{1/3}}$. See for instance~\cite[Lemma 15]{Koloskova20decentralized}.
\end{proof}

\end{document}